\documentclass{article} %
\usepackage{iclr2027_conference,times}

\usepackage{amsmath,amsfonts,bm}

\def\eqref#1{equation~\ref{#1}}

\def\1{\bm{1}}

\DeclareMathAlphabet{\mathsfit}{\encodingdefault}{\sfdefault}{m}{sl}
\SetMathAlphabet{\mathsfit}{bold}{\encodingdefault}{\sfdefault}{bx}{n}

\def\gB{{\mathcal{B}}}

\def\gD{{\mathcal{D}}}

\def\gF{{\mathcal{F}}}
\def\gG{{\mathcal{G}}}

\def\gO{{\mathcal{O}}}

\def\sS{{\mathbb{S}}}

\newcommand{\E}{\mathbb{E}}

\newcommand{\R}{\mathbb{R}}

\usepackage{hyperref}
\usepackage{url}
\usepackage{bm}
\usepackage{bbm}
\usepackage{todonotes}
\usepackage{apxproof}
\usepackage{refcount}
\usepackage{tikz-cd}

\usepackage{amsmath, mathtools, amssymb, amsfonts, amsthm}
\theoremstyle{plain}
\newtheorem{lemma}{Lemma}
\newtheorem{theorem}{Theorem}
\newtheorem{proposition}{Proposition}
\theoremstyle{definition}
\newtheorem{assumption}{Assumption}
\newtheorem{definition}{Definition}
\newtheorem{remark}{Remark}
\newtheorem{example}{Example}
\newtheorem*{mainrestated}{Theorem~\ref{thm:main}}

\DeclareMathOperator{\clip}{\operatorname{clip}}

\DeclareMathOperator{\prev}{\operatorname{prev}}
\DeclareMathOperator{\Agg}{\operatorname{Agg}}

\DeclareMathOperator{\indi}{\mathbf{1}}
\newcommand{\nexxt}{\operatorname{next}}
\renewcommand{\prev}{\operatorname{prev}}

\newcommand{\myalgo}{{\textsc{Throttle}}}

\title{Robustifying Asynchronous SGD via Soft Throttling}

\author{Kaoru Otsuka\thanks{Equal contribution.} \\
Okinawa Institute of Science and Technology \\
\texttt{kaoru.otsuka@oist.jp}
\And
Maxime Meyer\footnotemark[1]\ \thanks{Work done during the internship at OIST.} \\
National University of Singapore \\
\texttt{maxime.meyer@u.nus.edu}
\And
Yuki Takezawa \\
Toyota Motor Corporation \\
\texttt{yuki\_takezawa@mail.toyota.co.jp} 
\And
Makoto Yamada \\
Okinawa Institute of Science and Technology \\
\texttt{makoto.yamada@oist.jp}
\And
Anastasia Koloskova \\
University of Zurich \\
\texttt{anastasiia.koloskova@uzh.ch}
}

\iclrfinalcopy %
\usepackage{fancyhdr}   %
\usepackage{xcolor}

\usepackage{mathtools}
\usepackage{float}
\usepackage{algorithm}
\usepackage{footnote}
\usepackage{booktabs} 
\makesavenoteenv{algorithm}
\usepackage[noend]{algpseudocode}
\usepackage{cancel}
\usepackage{tikz}
\usetikzlibrary{arrows.meta,patterns,calc}
\definecolor{yred}{HTML}{E83850}
\definecolor{zblue}{HTML}{2476D4}
\definecolor{resetgray}{HTML}{7B8491}


\begin{document}

\maketitle

\AtBeginDocument{\lhead{Preprint}} %

\begin{abstract}
    Asynchronous SGD is a popular algorithm for distributed learning where each client's gradient update is applied on arrival.
    This leads to a speed-up, but also an increased vulnerability to attacks, as fast clients can dominate the total update.
    We introduce {\myalgo}, a Byzantine-robust generalization of asynchronous SGD where the key idea is to exponentially down-weight updates from faster clients by a factor $q$.
    Both asynchronous SGD ($q=1$) and synchronous Byzantine-robust SGD ($q\to\infty$) correspond to specific settings of {\myalgo}.
    We provide a theoretical analysis of the convergence rate and validate the robustness to attacks both theoretically and empirically.
    Remarkably, our experiments show that this down-weighting mechanism can also improve performance over standard asynchronous SGD even in the non-Byzantine setting.
\end{abstract}

\section{Introduction}

Distributed learning trains a shared model faster by spreading the training workload across many clients. It has become essential for large-scale machine learning, and it has been widely studied in recent years as a result~\citep{yang2019survey, verbraeken2020survey}.
Typically, distributed learning consists of a succession of steps. At every step, i) the clients send their local gradients, ii) the central server performs its update once all of them are received, and iii) the updated model is sent to the clients so that the next computation step can start. One clear bottleneck is that this process can only be as fast as the \textit{slowest} machine. It leads to considerable slowdowns in practice as clients typically compute at very different speeds. This can arise from heterogeneous computational power~\citep{horvath2021fjord}, network latency, faulty devices~\citep{ryabinin2021moshpit}, or slow \textit{straggler} clients in GPU clusters~\citep{chen2016revisiting}. A natural remedy is to leverage \textit{asynchronous} SGD methods~\citep{recht2011hogwild, mania2017perturbed, nguyen2022federated}, where the central server updates the model at each feedback, and the corresponding client starts computing the next gradient immediately. These are the methods we study in this paper.

Another limitation of distributed learning is that, while integrating more clients increases the amount of data and compute, it also increases the risk of having malicious or faulty clients~\citep{xie_fall_2019, baruch2019nips, kairouz2021advances}. We refer to those as Byzantine clients~\citep{lamport_byzantine_2019}. This is especially problematic for asynchronous methods, where one malicious client can send an overwhelming number of updates to the server, which we call a \textit{flooding attack}.
Notable works in this area typically rely either on strong assumptions that restrict flooding attacks \citep{dahan_weight_2024} or on nearly synchronous algorithms that mitigate the effects of Byzantine clients \citep{yang2021basgd, yang2023buffered}. In practice, the former may offer limited robustness against Byzantine attacks that violate these assumptions, while the latter may yield smaller speedups than asynchronous SGD because it more closely resembles synchronous algorithms.

We aim to bridge this gap through a new algorithm, {\myalgo}.
The key idea is to exponentially decrease the stepsize when gradients are repeatedly received from the same client.
This limits the influence of any individual client on the global model, thereby preventing a malicious client from dominating the overall update.
We present {\myalgo} in Section~\ref{sec:algo} and analyze its convergence in Section~\ref{sec:theory}.
In Theorem~\ref{thm:main}, we establish its Byzantine robustness and show that it recovers the convergence rate of standard asynchronous SGD up to an additive constant.
Our experiments in Section~\ref{sec:expe} demonstrate robustness to a range of Byzantine attacks and show that {\myalgo} is more robust and faster than the existing methods evaluated.
Interestingly, {\myalgo} also outperforms standard asynchronous SGD even in the non-Byzantine setting.

\section{Preliminaries}
\label{sec:preliminaries}

We consider the following optimization problem:
\[
    f(\bm x) \coloneqq \E_{\xi \sim \mathcal{D}}  [F(\bm x; \xi)],    
\]
where $F: \R^d \times \Omega \to \R$ is the loss with respect to a sample $\xi \in \Omega$, and $\gD$ is the given data distribution over the sample space $\Omega$. 

Let $n\in\mathbb N$ denote the total number of clients, and write $[n]:=\{1,\ldots,n\}$ the set of all clients. All clients have access to the loss $F$ and sample from the same dataset $\gD$. We assume that it admits an (unknown) partition $[n]=\mathcal{G} \sqcup \mathcal{B}$ between good and Byzantine clients. Any good client $i\in\mathcal G$ follows the protocol given by the algorithm when sending feedback, while a Byzantine client $i\in\mathcal B$ is allowed to send arbitrary updates to the server. We let the latter be omniscient, i.e., they have access to all computations made by the rest of the good clients and can send the worst updates possible.  We denote the Byzantine ratio by $\delta = |\mathcal{B}|/n$, so that $|\mathcal{G}|=(1-\delta)n$.
In this paper we develop an algorithm with asynchronous updates that is robust in the Byzantine setting described above.

\subsection{Asynchronous SGD}
Synchronous SGD can suffer from substantial delays because each iteration is limited by the slowest client, which motivates the use of asynchronous SGD.
In asynchronous SGD, the server applies each stochastic gradient the moment it arrives. The updated model is then sent to the corresponding client so that the client can start computing the next gradient.

Let $j_t$ be the client whose gradient is applied at time $t$.
For each client $i$, let $\prev(t,i)$ denote the most recent update before $t$ at which client $i$ returned a gradient, and let $\nexxt(t,i)$ denote the first update at or after $t$ at which client $i$ returns a gradient. 
After returning this gradient, client $i$ receives the updated model $\bm{x}^{(\prev(t,i))}$ from the server and starts computing its next gradient from this model.
Therefore, when client $j_t$ returns its next gradient at update $t$, that gradient is evaluated at the stale model $\bm{x}^{(\prev(t,j_t))}$ rather than the current model. 
We call $\tau_t := t - \prev(t,j_t)$ the delay of this gradient.
The update rule of Asynchronous SGD is then written as follows:
\begin{equation} \label{eq:async_SGD}
    \bm x^{(t)} = \bm x^{(t-1)} - \eta \nabla F(\bm x^{(t-\tau_t)} ; \xi_{j_t}^{(t - \tau_t)}).
\end{equation}
Thanks to this update rule, the server does not need to wait for a fresh gradient evaluated at the latest model and can instead immediately update the model using a possibly stale gradient evaluated at $\bm{x}^{(t-\tau_t)}$. Consequently, the optimization process is not bottlenecked by the slowest client, as the server can continue updating the model whenever gradients from other clients arrive.

\subsection{Byzantine-robust Synchronous Algorithm}
Because distributed learning involves many participating clients, some of them may be malicious or faulty. This makes robustness against such Byzantine clients an important requirement for reliable distributed learning. In this section, we briefly describe a representative Byzantine-robust method in the synchronous setting.
At every round $t$, each client $i$ computes a stochastic gradient $\bm g_i^{(t)}$ at the same point $\bm x^{(t)}$, and the server updates the parameter as follows: 
\begin{align*}
    \bm x^{(t+1)} = \bm x^{(t)} - \frac{\eta}{n} \sum_{i=1}^n \bm g_i^{(t)}.
\end{align*}
Since the server simply averages the gradients received from the clients, a Byzantine client can arbitrarily manipulate the parameters of the server.
Byzantine-robust methods replace this average with a \textit{robust aggregator} (e.g., the median)~\citep{blanchard2017krum, karimireddy2021icml}, as is standard in classical robust statistics~\citep{huber2011robust}. 

\begin{definition}[$(\delta, c)$-robust aggregator] 
\label{def:robust_agg}
Suppose that $\bm{g}_1, \dots, \bm{g}_n$ are random vectors of which a subset $\mathcal{G} \subset [n]$ if size at least $|\mathcal{G}|>(1-\delta) n$ and $\delta < 0.5$ and satisfies $\mathbb{E}\left\|\bm{g}_i-\bm{g}_j\right\|^2 \leq \rho^2, \; \forall i,j \in \gG$. Then, the output $\hat{\bm{g}}$ of a Byzantine robust aggregator satisfies:
\begin{align*}
    \mathbb{E}\|\hat{\bm{g}}-\overline{\bm{g}}\|^2 \leq c \delta \rho^2 \quad \text{where} \quad \hat{\bm g} \coloneqq \Agg \left(\bm{g}_1, \dots, \bm{g}_n\right) \; \text{and} \;\; \overline{\bm{g}}:=\frac{1}{|\mathcal{G}|} \sum_{j \in \mathcal{G}} \bm{g}_j.
\end{align*}
\end{definition}

This definition is quite common in the existing literature \citep{karimireddy2021icml,karimireddy2022iclr, gorbunov_variance_2023,yang2024on}.
Intuitively, the role of a robust aggregator $\Agg$ is to estimate the average of the non-Byzantine clients $\overline{\bm{g}}$ with an error of order $\mathcal{O} (\delta)$.
\citet{karimireddy2021icml} first showed that \textit{centered clipping} satisfies this definition.
\citet{karimireddy2022iclr} later extended this result to classical robust aggregators, namely \textit{coordinate-wise median}~\citep{chen2017distributed}, \textit{Krum}~\citep{blanchard2017krum}, and \textit{robust federated averaging} (RFA)~\citep{pillutla2019rfa}, when combined with their proposed pre-aggregation technique, 

\subsection{Byzantine-robust Asynchronous Algorithm}

Combining asynchrony with Byzantine robustness is desirable for distributed learning, as it allows the system to benefit from heterogeneous client speeds while remaining reliable in the presence of malicious or faulty clients. Several Byzantine-robust asynchronous methods have therefore been proposed \citep{yang2023buffered, dahan_weight_2024}.
However, existing theoretical work on Byzantine-robust asynchronous SGD~\citep{yang2023buffered, dahan_weight_2024} tackles this issue by assuming that $\tau_t$ is bounded by a known $\tau_{\mathrm{max}}$ for most clients, or by directly bounding the proportion of Byzantine feedback. These assumptions circumvent the main challenge of the framework, which is precisely that a Byzantine client can make $\tau_t$ arbitrarily large through a flooding attack. 
We propose a Byzantine-robust asynchronous algorithm \myalgo{} that removes these assumptions.

\section{\textsc{Throttle}: Asynchronous SGD with Soft Throttling}
\label{sec:algo}
\begin{figure}[t]
    \centering
    \resizebox{\linewidth}{!}{%
    \begin{tikzpicture}[
        font=\small,
        job/.style={draw=black, line width=0.65pt, fill=white},
        restarted/.style={job, draw=green!55!black, line width=1.1pt,
            fill=green!7},
        discarded/.style={job, pattern=north east lines,
            pattern color=black!65},
        sync/.style={blue!75!black, dashed, line width=1.05pt},
        weighted/.style={font=\scriptsize, inner sep=0.25pt},
        discard label/.style={font=\scriptsize, fill=white,
            fill opacity=0.88, text opacity=1, inner sep=1pt}
    ]
        \def\xZero{0.00}
        \def\xOne{0.95}
        \def\xTwo{1.90}
        \def\xThree{3.05}
        \def\xFour{4.00}
        \def\xFive{5.10}
        \def\xSix{6.25}
        \def\xSeven{7.20}
        \def\xEight{8.35}
        \def\xNine{9.30}
        \def\xTen{10.45}
        \def\xEleven{11.40}
        \def\xTwelve{12.55}
        \def\xThirteen{13.50}
        \def\xFourteen{14.65}
        \def\timeEnd{15.75}

        \draw[-{Latex[length=2.2mm]}, line width=0.75pt]
            (\xZero,4.85) -- (16.05,4.85) node[right] {time};
        \foreach \x/\t in {
            \xZero/0,\xTwo/2,\xFour/4,\xSix/6,
            \xEight/8,\xTen/10,\xTwelve/12,\xFourteen/14
        } {
            \draw (\x,4.78) -- (\x,4.92);
            \node[above, font=\scriptsize, inner sep=2pt]
                at (\x,4.92) {$\bm x^{(\t)}$};
        }
        \foreach \x/\t in {
            \xOne/1,\xThree/3,\xFive/5,\xSeven/7,
            \xNine/9,\xEleven/11,\xThirteen/13
        } {
            \draw[black!50] (\x,4.78) -- (\x,5.27);
            \node[above, font=\scriptsize, inner sep=2pt]
                at (\x,5.27) {$\bm x^{(\t)}$};
        }

        \node[left] at (-0.20,4.05) {client A};
        \node[left] at (-0.20,2.65) {client B};
        \node[left] at (-0.20,1.25) {client C};

        \draw[job] (\xZero,3.60) rectangle (\xThree,4.50);
        \node at (1.52,4.05) {$\frac 1n \bm g_A^{(0)}$};
        \draw[job] (\xThree,3.60) rectangle (\xFive,4.50);
        \node[weighted] at (4.08,4.05)
            {$\tfrac{1}{q}\bm g_A^{(3)}$};
        \draw[discarded] (\xFive,3.60) rectangle (\xSix,4.50);
        \node[discard label] at (5.68,4.05) {$\frac{1}{q^2}\bm g_A^{(5)}$};

        \draw[restarted] (\xSix,3.60) rectangle (\xEight,4.50);
        \node at (7.30,4.05) {$\frac 1n \bm g_A^{(6)}$};
        \draw[job] (\xEight,3.60) rectangle (\xTen,4.50);
        \node[weighted] at (9.40,4.05)
            {$\tfrac{1}{q}\bm g_A^{(8)}$};
        \draw[discarded] (\xTen,3.60) rectangle (\xTwelve,4.50);
        \node[discard label] at (11.50,4.05) {$\bm g_A^{(10)}$};

        \draw[restarted] (\xTwelve,3.60) rectangle (\xFourteen,4.50);
        \node at (13.60,4.05) {$\frac 1n \bm g_A^{(12)}$};
        \draw[job] (\xFourteen,3.60) rectangle (\timeEnd,4.50);
        \node[weighted] at (15.20,4.05)
            {$\tfrac{1}{q}\bm g_A^{(14)}$};

        \draw[job] (\xZero,2.20) rectangle (\xSix,3.10);
        \node at (3.12,2.65) {$\frac 1n \bm g_B^{(0)}$};
        \draw[restarted] (\xSix,2.20) rectangle (\xTwelve,3.10);
        \node at (9.40,2.65) {$\frac 1n \bm g_B^{(6)}$};
        \draw[restarted] (\xTwelve,2.20) rectangle (\timeEnd,3.10);
        \node at (14.15,2.65) {$\frac 1n \bm g_B^{(12)}$};

        \draw[job] (\xZero,0.80) rectangle (\xOne,1.70);
        \node[font=\scriptsize] at (0.48,1.25) {$\frac 1n \bm g_C^{(0)}$};
        \draw[job] (\xOne,0.80) rectangle (\xTwo,1.70);
        \node[weighted] at (1.43,1.25)
            {$\tfrac{1}{q}\bm g_C^{(1)}$};
        \draw[job] (\xTwo,0.80) rectangle (\xFour,1.70);
        \node[weighted] at (2.95,1.25)
            {$\tfrac{1}{q^2}\bm g_C^{(2)}$};
        \draw[discarded] (\xFour,0.80) rectangle (\xSix,1.70);
        \node[discard label] at (5.12,1.25) {$\frac{1}{q^3}\bm g_C^{(4)}$};

        \draw[restarted] (\xSix,0.80) rectangle (\xSeven,1.70);
        \node[font=\scriptsize] at (6.73,1.25) {$\frac 1n \bm g_C^{(6)}$};
        \draw[job] (\xSeven,0.80) rectangle (\xNine,1.70);
        \node[weighted] at (8.25,1.25)
            {$\tfrac{1}{q}\bm g_C^{(7)}$};
        \draw[job] (\xNine,0.80) rectangle (\xEleven,1.70);
        \node[weighted] at (10.35,1.25)
            {$\tfrac{1}{q^2}\bm g_C^{(9)}$};
        \draw[discarded] (\xEleven,0.80) rectangle (\xTwelve,1.70);
        \node[discard label] at (11.98,1.25) {$\bm g_C^{(11)}$};

        \draw[restarted] (\xTwelve,0.80) rectangle (\xThirteen,1.70);
        \node[font=\scriptsize] at (13.03,1.25) {$\frac 1n \bm g_C^{(12)}$};
        \draw[job] (\xThirteen,0.80) rectangle (\timeEnd,1.70);
        \node[weighted] at (14.63,1.25)
            {$\tfrac{1}{q}\bm g_C^{(13)}$};

        \draw[sync] (\xSix,0.45) -- (\xSix,4.75);
        \draw[sync] (\xTwelve,0.45) -- (\xTwelve,4.75);
        \node[blue!75!black, below] at (\xSix,0.42) {$S_1=6$};
        \node[blue!75!black, below] at (\xTwelve,0.42) {$S_2=12$};
        \node[blue!75!black, below] at (\xZero,0.42) {$S_0=0$};
    \end{tikzpicture}%
     }
    \caption{Illustration of \textsc{Throttle} with three clients. A block $\bm g_i^{(t)}$ is a gradient computation started at $\bm x^{(t)}$ (its left end), and its coefficient is the weight it receives on arrival, $1/n$ for a first arrival and $1/q^{c}$ for a repeated one.
Dashed lines mark the synchronous clock $S_r$, at which in-flight computations are discarded (hatched blocks) and all clients restart from $\bm x^{(S_r)}$ (green blocks).}
    \label{fig:hard-restart-timeline}
\end{figure}

We introduce {\myalgo}, an algorithm consisting of two key components, \textit{soft throttling} and \textit{hard restart}, which we describe below.

\subsection{Soft Throttling}

The key idea of our approach is to exponentially down-weight repeated updates of the same client by a factor $q$. More precisely, the first update has a factor $\frac1n$, the second $\frac1q$, the next $\frac1{q^2}$, then $\frac1{q^3}$, {\it etc}. Once every client has sent feedback, we say that the round ends. We then reinitialize all factors to $\frac1n$ and start the next round. This way, within any given round, the impact of one client is upper bounded by $\frac1n+\sum_{i=1}^{+\infty}\frac1{q^i}=\frac1n+\frac1{q-1}$, which prevents flooding.

A smaller $q \to 1$ lets fast clients contribute more, while a larger $q \to \infty$ caps their influence more tightly and reduces it to synchronous SGD.
We formally quantify this trade-off in Section~\ref{sec:theory}.

\begin{remark}[On the initial $\frac1n$ factor]
    Adding an initial $1/n$ factor makes the total update with regards to the model at the start of the round $\bm x^{(S_r)}$ have weight $1$. All subsequent updates within the round have weight at most $\frac1q$. This matches the scale of a robust aggregator's output, which simplifies the comparison in our analysis. We also observed that it leads to more stable training curves in practice.
\end{remark}

\subsection{Hard Restart}

At the end of a round, the server broadcasts the current model to all clients, and discard any gradient computation in progress. This ensures that every round starts with all clients on the same model, which allows to robustly aggregate their updates. At the beginning of a round, all clients start their computation at the same point $\bm x^{(S_r)}$, and the added factor $\frac1n$ averages over the updates.
As is standard in Byzantine-robust algorithms, we want to replace this average by a robust aggregator.
The one difference from the synchronous setting is that the first arrivals from each clients do not arrive together.
We therefore need an aggregator that acts on a single input and whose sum coincide with a robust aggregate over time.

\begin{definition}[Local $(c,\delta)$-robust aggregator]
\label{def:local_robust_agg}
Let $\Agg$ be a $(c,\delta)$-robust aggregator that additionally takes a side information $\mathbf{S} \in \sS$ as input.
We call $\Agg$ \textit{local} if there exists a map $A : \mathbb{R}^d \times \sS \to \mathbb{R}^d$ such that, for every $\mathbf{S} \in \sS$ and every $\bm x_1, \dots, \bm x_n \in \mathbb{R}^d$,
\[
    \Agg(\bm x_1, \dots, \bm x_n; \mathbf{S}) = \frac{1}{n} \sum_{i=1}^n \mathbb{A}(\bm x_i; \mathbf{S}).
\]
\end{definition}
 
\begin{example}[Centered clipping]
\label{ex:centered_clipping}
Centered clipping \citep{karimireddy2021icml} clips each input's offset from an anchor $\bm a \in \mathbb{R}^d$ to length at most $\tau > 0$ and then averages:
\[
    \Agg(\bm x_1, \dots, \bm x_n; \bm a, \tau) = \bm a + \frac{1}{n} \sum_{i=1}^n \clip_\tau(\bm x_i - \bm a),
    \qquad
    \clip_\tau(\bm v) \coloneqq \min\Bigl\{1, \frac{\tau}{\|\bm v\|}\Bigr\} \bm v.
\]
Thus, each input shifts the output by at most $\tau/n$. It is local with side information $\mathbf{S} = (\bm a, \tau)$ and $\mathbb{A}(\bm v; \mathbf{S}) \coloneqq \bm a + \clip_\tau(\bm v - \bm a)$. Following \citet{karimireddy2021icml}, our experiments use the previous round's first arrivals to set the anchor:
$
    \bm a_r \coloneqq \Pi_G\Bigl(\frac{1}{n} \sum_{i=1}^n \bm g_i^{(S_{r-1})}\Bigr),
    \bm a_0 \coloneqq \bm 0
$, where $\Pi_G$ is the Euclidean projection onto the ball of radius $G$ centered at the origin.   
\end{example}

Side information is the key to making the decomposition possible.
Without side information, no $(c,\delta)$-robust aggregator of the form $\frac1n\sum_i f(\bm x_i)$ exists (Proposition~\ref{prop:no_side_info}).
Appendix~\ref{app:local_agg} gives further examples and a general construction $\mathbb{A}(\bm v; \mathbf{S})=\bm a+\psi(\bm v-\bm a)$ with bounded $\psi$.

\subsection{{\myalgo}}

\begin{algorithm}[t]
    \caption{\textsc{Throttle}: Asynchronous SGD with Soft Throttling}
    \label{alg:stoch-async-hard-sgd}
    \noindent\underline{\textbf{Server:}}\par
    \begin{algorithmic}[1]

    \State Send $\bm x^{(0)}$ to every client $i\in[n]$ and initialize $S_0 \coloneqq 0$ and side information $\mathbf{S}_0$, $c_i = 0,\; \forall i \in [n]$

    \For{$t = 0, \ldots, T-1$}
        \State Receive update $\bm v^{(t-\tau_t)}_{j_t}$ from some worker $j_t$
        \If{$c_{j_t} = 0$}\Comment{If $j_t$ returns its update for the first time this round%
        }
            \State
                \(
                \bm g^{(t-\tau_t)}_{j_t}
                =
                \mathbb{A}\left(
                \bm v^{(t-\tau_t)}_{j_t}; \mathbf{S}_{r(t)}
                \right)
                \) 
            \Comment{Apply local $(c,\delta)$-robust aggregator}

        \Else\Comment{If server has already seen $j_t$ in this round}
            \State
                \(
                \bm g^{(t-\tau_t)}_{j_t}
                =
                \clip_{\lambda_t} \left(
                \bm v^{(t-\tau_t)}_{j_t}
                \right)
                \)
            \Comment{Compute clipped gradient with radius $\lambda_t$}
        \EndIf 
    \State $\eta_{t} \coloneqq
    \begin{cases}
        \frac{\eta}{n} & \text{if } c_{j_t}(t) = 0, \\
        \frac{\eta}{q^{c_{j_t}(t)}} & \text{otherwise},
    \end{cases}$
            \Comment{Set the \textbf{soft throttling} stepsize (\ref{eq:stepsize})}

        \State 
        $
        \bm x^{(t)}
        =
        \bm x^{(t-1)}
        -
        \eta_{t}\bm g^{(t-\tau_t)}_{j_t}
        $
        \Comment{Perform the update}
        \State $c_{j_t} \gets c_{j_t}+1$
        \Comment{Increment the counter}
        \State Send $\bm x^{(t)}$ to worker $j_t$

        \If{$|\{ i \in [n] \mid c_i \ge 1 \}|  =  n$  }
        \Comment{If all clients has shown up in the round}
            \State
            \(
            S_{r+1} = t
            \) and set $c_i \gets 0$ for all $i \in [n]$.
            \Comment{The next round starts}
            \State Update the side information $\mathbf{S}_{r+1}$
            \Comment{Depends on the aggregator (cf Ex.~\ref{ex:centered_clipping})}
            \State Send $\bm x^{(t)}$ and restart signals to all clients, $c_{i} \gets 0$
            \Comment{\textbf{Hard restart}}
        \EndIf
    \EndFor
    \State Notify all workers to stop
    \end{algorithmic}
    \noindent\underline{\textbf{Good client $i \in \gG$:}}\par
    \begin{algorithmic}
        \Repeat 
            \Repeat
                \State Wait until receiving the parameter $\bm x^{(k)}$ for some $k \in [T]$
                \State Randomly sample $\xi_i^{(k)} \sim \gD$ and then compute the stochastic gradient $\nabla F(\bm x^{(k)}; \xi_i^{(k)})$
                \State Send the stochastic gradient $\nabla F(\bm x^{(k)}; \xi_i^{(k)})$ to the server
            \Until{receive server's notification to restart with $\bm x^{(r(t))}$}
            \State Use restart parameter $\bm x^{(r(t))}$ to compute $\nabla F(\bm x^{(r(t))}; \xi_i^{r(t)})$ with the sample $\xi_i^{r(t)} \sim \mathcal{D}$
        \Until{receive server's notification to stop}
    \end{algorithmic}
\end{algorithm}
Combining these two ideas, soft throttling and hard restart, we propose {\myalgo}, as described in Algorithm~\ref{alg:stoch-async-hard-sgd}.
At time step $t$, let $\bm v^{(t-\tau_t)}_{j_t}$ denote the gradient received from client $j_t$, and let $\eta_t$ denote corresponding stepsize. We define
\begin{equation} \label{eq:stepsize}
\bm v^{(t-\tau_t)}_{j_t} \coloneqq
\begin{cases}
    \nabla F(\bm x^{(t-\tau_t)};\xi_{j_t}^{(t-\tau_t)}) & \text{if } j_t \in \gG, \\
    * & \text{if } j_t \in \gB,
\end{cases}
\qquad
\eta_{t} \coloneqq
\begin{cases}
    \frac{\eta}{n} & \text{if } c_{j_t}(t) = 0, \\
    \frac{\eta}{q^{c_{j_t}(t)}} & \text{otherwise},
\end{cases}
\end{equation}
and $c_i(t) \coloneqq \sum_{s=r(t)}^{t-1} \mathbf{1}_{\{j_s=i\}}$ is the number of gradients that client $i$ has already delivered in the current round.
Once the server receives the gradient from client, the server transforms it to $\bm g^{(t-\tau_t)}_{j_t}$ using either local $(c, \delta)$-robust aggregator or gradient clipping, and updates the parameters, as shown in line 8 of Algorithm~\ref{alg:stoch-async-hard-sgd}.
Then, after the server receives the gradient from all clients, the next round starts and all counters are reinitialized to $0$. We illustrate the algorithm in Figure~\ref{fig:hard-restart-timeline} and provide a notation list in Appendix~\ref{app:notations}.

\section{Theoretical Analysis} \label{sec:theory}

Here we list assumptions we use in our theoretical results.
\begin{assumption}[Unbiased and uniformly bounded stochastic gradients]
\label{ass:stochastic-gradients}
For each $\bm x\in\R^d$, the stochastic gradients from good clients
$\nabla F(\bm x;\xi_i)$, $i\in\mathcal G$, are conditionally i.i.d.,
unbiased, and have uniformly bounded variance:
\[
    \E[\nabla F(\bm x;\xi_i)\mid \bm x]
    =
    \nabla f(\bm x),
    \qquad
    \E\|\nabla F(\bm x;\xi_i)-\nabla f(\bm x)\|^2
    \leq
    \sigma^2 .
\]
\end{assumption}

\begin{assumption}[$L$-smoothness, lower boundedness]
\label{ass:objective}
The objective $f:\R^d\to\R$ is differentiable, lower bounded by $f_\star$, and
$L$-smooth, i.e., for all $\bm x,\bm y\in\R^d$,
\[
    f(\bm y)
    \leq
    f(\bm x)
    + \langle \nabla f(\bm x), \bm y-\bm x\rangle
    + \frac{L}{2}\|\bm y-\bm x\|^2 .
\]
\end{assumption}

\begin{assumption}[Bounded gradient]
    \label{ass:bdd_gradient}
    There exists $G \geq 0$ such that for each $\bm x \in \R^d$ and $\xi \in \Omega$, the stochastic gradient satisfies $$\| \nabla  F(\bm x;\xi) \| \leq G \quad \text{almost surely}. $$
\end{assumption}

The assumptions on smoothness and stochastic gradients are standard in the optimization literature~\citep{bubeck2015convex, nesterov2018lectures}.
The bounded gradient assumption is likewise common in the analysis of asynchronous SGD~\citep{mishchenko_asynchronous_2023, shi_ordered_2025}, as well as in the analysis of clipped gradient descent~\citep{zhang2020improved, Zhang2020Why}, which we employ in non-restart rounds.

Before we state the main theorem, we need to make an additional weak technical assumption on the local robust aggregator $\mathbb{A}$.
\begin{assumption}[Bounded local contributions]
\label{ass:stoch-bounded-local-contributions}
There exists $G' \geq 0$ such that, for every side information $S$ used by the algorithm and every
$\bm v\in\R^d$,
$
    \|\mathbb A(\bm v;S)\|\leq G'.
$
\end{assumption}

\begin{remark}[Centered-clipping example]
\label{rem:stoch-centered-clipping-bounded}
Suppose that the side information is
$S=(\bm a_r,2G)$, where the projected anchor in Example~\ref{ex:centered_clipping} satisfies $\|\bm a_r\|\leq G$.  For the
centered-clipping map
\(
    \mathbb A(\bm v;\mathbf{S})
    =
    \bm a_r+\clip_{2G}(\bm v-\bm a_r),
\)
we have $\|\mathbb A(\bm v;S)\|\leq G+2G=3G$ for every input, including
a Byzantine input, verifying Assumption~\ref{ass:stoch-bounded-local-contributions}.
Moreover, Assumption~\ref{ass:bdd_gradient} implies that an honest input satisfies $\|\bm v\|\leq G$, and hence
$\|\bm v-\bm a_r\|\leq2G$, so centered clipping therefore leaves every honest input unchanged.  
Similar assumptions are used in \citep[Assumption~2.3]{malinovsky_byzantine_2024}.
\end{remark}

Now, we present the main theorem of this paper. We analyze the convergence in terms of effective steps, which quantify the number of steps normalized by the decreasing stepsizes. In the synchronous setting, that is if every client submits the feedback at the same time, the number of effective steps $T_{\mathrm{eff}}$ is exactly the number of feedbacks ($T$ in the synchronous setting). Moreover, it closely corresponds to the number $R$ of rounds as $R\leq T_{\mathrm{eff}} = R+\sum_{r=1}^{R}\sum_{i=1}^{n} \frac{1-q^{-(c_{i,r}-1)}}{q-1}$.

\begin{theorem}[Non-convex convergence rate]
\label{thm:main}
Let $T\geq2$ and $q>1$.  Suppose that Assumptions
\ref{ass:stochastic-gradients},~\ref{ass:objective}, and
\ref{ass:bdd_gradient}, as well as
Assumption~\ref{ass:stoch-bounded-local-contributions} with $G' \leq 3G$ hold, together with
the local $(c,\delta)$-robust aggregator $\mathbb{A}$, and use\footnote{Under Assumption~\ref{ass:bdd_gradient}, honest stochastic gradients satisfy $\|\nabla F(\bm x; \xi)\| \le G$ almost surely. With $\lambda_t = G$, the clipping operator therefore acts as the identity on honest gradients, which simplifies the analysis. The same assumption also gives $\sigma^2 \le G^2$. In our experiments, we use a smaller threshold ($\lambda_t = 1$), for which clipping is typically active. Extending the analysis to this regime is left for future work.} $\lambda_t=G$.
Then, there exists a small enough stepsize $\eta > 0$ (detailed in Appendix~\ref{app:stochastic_hard_reset}) such that, 
\begin{align*}
    \E\|\nabla f(\bm x^{\mathrm{out}})\|^2
    =\gO\Bigg(
        &\left(
            c\delta\sigma^2
            +\frac{G^2|\gB|^2}{(q-1)^2}
        \right)
        \theta_T + \sqrt{\frac{L\sigma^2\Delta}{T_{\mathrm{eff}}}
            \left(1-\theta_T+\frac{\theta_T}{|\gG|}\right)}\\
        &+\left(\frac{L\Delta}{T_{\mathrm{eff}}}\right)^{2/3}
        \left[
            (1-\theta_T)\left(
                (n-1)^2G^2
                +
                c\delta\sigma^2
                +\frac{G^2|\gB|^2}{(q-1)^2}
            \right)
        \right]^{1/3} + \frac{L\Delta}{T_{\mathrm{eff}}}
    \Bigg),
\end{align*}
where $\Delta\coloneqq f(\bm x^{(0)})-f_*$ is the initial distance, $T_{\mathrm{eff}}\coloneqq 1 +\sum_{t=1}^{T-1}\hat\eta_t/\eta $ is the number of effective time steps, $N_T \coloneqq 1+\sum_{t=1}^{T-1}\indi_{\{t=r(t)\}}$ is the number of rounds, $\theta_T \coloneqq \frac{N_T}{T_{\mathrm{eff}}}$ is the restart ratio, and $\bm x^{(\mathrm{out})}$ is the weighted average of $\bm x^{(0)}, \ldots, \bm x^{(T-1)}$ defined in (\ref{eq:weighted_average_in_main_thm}).
\end{theorem}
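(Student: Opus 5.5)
The plan is a weighted descent-lemma argument over all $T$ server updates, with each step weighted by its effective stepsize $\hat\eta_t$ (equal to $\eta/n$ at a first arrival and $\eta q^{-c}$ at a repeated one). Summing the one-step descent bounds gives a telescoping sum of function values that is controlled by $\Delta$. Dividing by $\eta T_{\mathrm{eff}}$ then yields the bound for the weighted average $\bm x^{(\mathrm{out})}$.

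\textbf{Step 1: one-step descent.} By $L$-smoothness (Assumption~\ref{ass:objective}),
\[
f(\bm x^{(t)})\le f(\bm x^{(t-1)})-\hat\eta_t\langle\nabla f(\bm x^{(t-1)}),\bm g_t\rangle+\tfrac{L}{2}\hat\eta_t^2\|\bm g_t\|^2 .
\]

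\textbf{Step 2: split each round.} The updates of round $r$ are grouped into two kinds.

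\emph{Restart part.} These are the first arrivals, all computed at the common point $\bm x^{(S_r)}$. Because $\mathbb A$ is local (Definition~\ref{def:local_robust_agg}), their summed contribution $\frac{\eta}{n}\sum_i \mathbb A(\bm v_i;\mathbf S_r)$ is exactly $\eta\,\Agg(\cdot)$. Definition~\ref{def:robust_agg} therefore gives $\E\|\Agg-\bar{\bm g}\|^2\le c\delta\rho^2$ with $\rho^2\le 2\sigma^2$. The average $\bar{\bm g}$ over $\gG$ is unbiased with variance $\sigma^2/|\gG|$; this produces the $\theta_T/|\gG|$ factor in the variance term.

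\emph{Throttled part.} These are the repeated arrivals, with weights $q^{-c}$. Honest ones are unbiased stochastic gradients at stale points, and clipping with $\lambda_t=G$ is inactive on them (footnote). Byzantine ones have norm at most $G$ after clipping. Per round, the total Byzantine mass is therefore at most $|\gB|\,G/(q-1)$.

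\textbf{Step 3: compare to the gradient at the round start.} Within a round, every point is within distance $\eta\bigl(G'+\sum_i\sum_c q^{-c}G\bigr)$ of $\bm x^{(S_r)}$. This is bounded using Assumptions~\ref{ass:bdd_gradient} and \ref{ass:stoch-bounded-local-contributions} with $G'\le3G$. It lets me replace $\nabla f$ at stale points by $\nabla f(\bm x^{(S_r)})$ at a cost of order $L\eta\cdot(\text{drift})$.

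The cross terms are handled with Young's inequality $\langle a,b\rangle\le\frac12\|a\|^2+\frac12\|b\|^2$.
\begin{itemize}
\item The aggregator error $c\delta\sigma^2$ and the Byzantine bias $G^2|\gB|^2/(q-1)^2$ enter once per round. After normalizing by $T_{\mathrm{eff}}$ they give the $\theta_T$ term.
\item The staleness error of honest asynchronous steps gives $L^2\eta^2$ times the squared drift. This drift involves the $(n-1)^2G^2$ contributions of the other clients' in-flight updates plus the same bias terms. It is weighted by the fraction $1-\theta_T$ of non-restart mass.
\item The noise term $L\eta\sigma^2\bigl(1-\theta_T+\theta_T/|\gG|\bigr)$ comes from the second-order part.
\end{itemize}
Collecting these gives
\[
\frac1{T_{\mathrm{eff}}}\sum_t\frac{\hat\eta_t}{\eta}\E\|\nabla f\|^2\lesssim\frac{\Delta}{\eta T_{\mathrm{eff}}}+A\theta_T+L\eta\sigma^2 B+L^2\eta^2(1-\theta_T)C ,
\]
where $A=c\delta\sigma^2+\frac{G^2|\gB|^2}{(q-1)^2}$, $B=1-\theta_T+\frac{\theta_T}{|\gG|}$, and $C=(n-1)^2G^2+A$.

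\textbf{Step 4: tune $\eta$.} Choose $\eta=\min\bigl\{\frac1L,\sqrt{\Delta/(L\sigma^2BT_{\mathrm{eff}})},(\Delta/(L^2(1-\theta_T)CT_{\mathrm{eff}}))^{1/3}\bigr\}$. This standard step yields the three rate terms in Theorem~\ref{thm:main}.

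There is one subtlety here. $T_{\mathrm{eff}}$ and $\theta_T$ are random and depend on arrival order, so they cannot be used directly to set a deterministic $\eta$. I would treat the arrival schedule as fixed, or adversarial but independent of the sampled gradients, and condition on it.

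\textbf{Main obstacle.} The hardest step is controlling the Byzantine drift and bias in Step 3. A flooding client affects both the bias of the update and the staleness of every honest gradient in the round. The geometric sum bound $\sum_c q^{-c}\le 1/(q-1)$ must be used so that this yields $|\gB|^2G^2/(q-1)^2$ independently of the number of floods.

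A second difficulty is that the inner-product term for the Byzantine bias must be split with Young's inequality. It has to be split in a way that charges only the restart mass, so the bias appears multiplied by $\theta_T$ rather than by $1$. I would first try to bound the bias per round against $\|\nabla f(\bm x^{(S_r)})\|$, since the restart step has weight exactly $\eta$.
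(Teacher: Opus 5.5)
\textbf{There is a genuine gap in Step~3.} Some ingredients are right: locality turns the $n$ first arrivals into $\eta\Agg$, the robust-aggregator bound holds with $\rho^2\le2\sigma^2$, the honest restart average has variance $\sigma^2/|\gG|$, and the geometric series gives Byzantine mass $|\gB|G/(q-1)$. The problem is that you measure staleness against the round start $\bm x^{(S_r)}$. The drift is then the whole mass applied so far in the round, which by your own estimate is $\eta\bigl(G'+\sum_i\sum_{c\geq1}q^{-c}G\bigr)\le\eta G\bigl(3+n/(q-1)\bigr)$. This is not merely a loose bound: with $n-1$ fast honest clients arriving many times before a slow one, the iterate can move by order $\eta(n-1)G/(q-1)$ within one round. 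Squaring it puts $n^2G^2/(q-1)^2$ into your constant $C$, not the $q$-independent $(n-1)^2G^2$ of the theorem. Your bound therefore degenerates as $q\to1$ and cannot give Theorem~\ref{thm:main} uniformly in $q>1$, nor Remark~\ref{rem:async}. Your phrase ``the other clients' in-flight updates'' names the right quantity, but nothing in a comparison with $\bm x^{(S_r)}$ produces it.

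A related problem: on the real iterate the $n$ first arrivals are applied one at a time, interleaved with repeated arrivals. To invoke locality you must test each $\frac{\eta}{n}\mathbb A(\bm v_i;\mathbf S_r)$ against the common vector $\nabla f(\bm x^{(S_r)})$. For Byzantine $i$, the replacement cost $\frac{\eta}{n}\langle\nabla f(\bm x^{(t_i-1)})-\nabla f(\bm x^{(S_r)}),\mathbb A(\bm v_i;\mathbf S_r)\rangle$ is first order in $L\cdot\text{drift}$, as you say yourself. No polarization identity is available for an arbitrary vector, and this cost is incurred by the restart mass too. So neither the second-order form $L^2\eta^2(1-\theta_T)C$ nor the factor $1-\theta_T$ follows from your Step~3. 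You would pick up extra terms such as $\delta G^2\theta_T$ or $L\eta\,\delta G^2\bigl(1+n/(q-1)\bigr)\theta_T$, which are absent from the statement.

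\textbf{The missing idea is the paper's two-stage perturbed iterate.}
First, let $\bm y$ apply each gradient at its dispatch time $t$, with the weight $\hat\eta_t$ it will receive on arrival (zero if a restart cancels it). Let $\bm y$ also apply the whole restart aggregate $\frac{\eta}{n}\sum_i\mathbb A(\bm v_i^{(S_r)};\mathbf S_r)=\eta\Agg$ in a single step at $S_r$. Then locality is used exactly and $\bm y^{(S_r)}=\bm x^{(S_r)}$. The gap $\bm x^{(t)}-\bm y^{(t)}=\sum_{m\neq j_t}\eta_{\nexxt(t,m)}\bm g_m^{(\prev(t,m))}$ consists only of the at most $n-1$ in-flight gradients. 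Each has weight at most $\eta$ and norm at most $3G$, so $\|\bm x^{(t)}-\bm y^{(t)}\|\le3(n-1)\eta G$ whatever the delays and $q$.

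Second, let $\bm z$ replace $\Agg$ by the honest average and every clipped or Byzantine repeated update by an honest gradient at $\bm x^{(t)}$, and let $\bm z$ jump to $\bm y^{(S_r)}$ at each restart. By the geometric series and Definition~\ref{def:robust_agg}, $\E\|\bm y^{(t)}-\bm z^{(t)}\|^2=\gO(\eta^2A)$.

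Now run the descent on $f(\bm z^{(t)})$ with the gradient evaluated at $\bm x^{(t)}$ and weight $\hat\eta_t$; this matches the output distribution directly. Both gaps then enter only through $L^2\hat\eta_t\bigl(\|\bm x^{(t)}-\bm y^{(t)}\|^2+\|\bm y^{(t)}-\bm z^{(t)}\|^2\bigr)$. The first gap vanishes at restarts, which is where $1-\theta_T$ comes from. The jump costs $\frac1\eta\|\bm y^{(S_r)}-\bm z^{(S_r)}\|^2=\gO(\eta A)$ once per round, which is exactly the $A\theta_T$ term. No Byzantine inner product ever has to be split against $\|\nabla f(\bm x^{(S_r)})\|^2$.
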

\begin{remark}[Synchronous limit $q \to \infty$]~\label{rem:sync}
    As $q \to \infty$, an effective step corresponds to a round.  
    Thus, $T_{\mathrm{eff}}\to R$ and
    $\theta_T\to1$, where $R$ is the total number of rounds
    including the initial round.  The convergence rate simplifies to
    \[
        \E\|\nabla f(\bm x^{\mathrm{out}})\|^2
        =\gO\left(
            \sqrt{\frac{L\sigma^2\Delta}{|\gG|R}}
            +\frac{L\Delta}{R}
            +c\delta\sigma^2
        \right),
    \]
    which recovers the standard rate of synchronous Byzantine-robust SGD.
    The non-vanishing term $\gO(c\delta\sigma^2)$ is known to be unavoidable for any $(c,\delta)$-robust aggregator applied only to the current stochastic gradients, and it can be removed by momentum or variance reduction~\citep{karimireddy2021icml, gorbunov_variance_2023}.
    For finite $q$, however, the dominant error is the asynchrony-induced term $G^2|\gB|^2/(q-1)^2$, which persists even with full gradients ($\sigma = 0$). Note also that $\sigma^2 \le G^2$ in the non-clipping regime.
    Hence, momentum or variance reduction alone cannot remove the main source of error in the asynchronous setting, and combining momentum with stale gradients is itself nontrivial~\citep{shi_ordered_2025}.
    We leave reducing both terms to future work.
\end{remark}

\begin{remark}[Non-Byzantine asynchronous limit $|\gB|=0$, $q \to 1$]~\label{rem:async}
    If there are no Byzantine clients and $q \to 1$, then we have $T_{\mathrm{eff}} \to T$ and the convergence rate simplifies to
    \[
        \E \|\nabla f(\bm x^{\mathrm{out}})\|^2
        = \gO\left(
            \sqrt{\frac{L\sigma^2\Delta}{T}}
            +\left(\frac{L\Delta}{T}\right)^{2/3}
            \left[(n-1)^2G^2\right]^{1/3}
            +\frac{L\Delta}{T}
        \right),
    \]
    which is a similar convergence rate (up to the $T^{-1}$ term due to normalizing rounds gradients by $1/n$) to asynchronous SGD without Byzantine clients~\citep[Theorem 2]{mishchenko_asynchronous_2023}.
\end{remark}

\section{Related Work} \label{sec:related_work}
\paragraph{Asynchronous SGD. }
Asynchronous optimization has a long history dating back to the 1970s, when \citet{baudet1978asynchronous} studied asynchronous fixed-point iterations. Subsequently, \citet{tsitsiklis1986distributed} established the convergence of asynchronous stochastic gradient methods in distributed settings.
In the 2010s, attention turned to asynchronous variants of SGD. A notable example is Hogwild!~\citep{recht2011hogwild}, a lock-free scheme that has since been widely adopted. Classical analyses of asynchronous SGD~\citep{agarwa2012distributed, lian2015asynchronous, mania2017perturbed, stich2020error} rely on a bounded delay assumption, so their convergence rates depend on the maximum delay and are often overly pessimistic.
\citet{koloskova_sharper_2022} and \citet{mishchenko_asynchronous_2023} concurrently showed that asynchronous SGD attains convergence rates that do not depend on the maximum delay, significantly improving upon prior analyses.
Since then, a growing body of work has built upon these results.
For instance, \citet{shi_ordered_2025} designed a momentum scheme suited to asynchronous SGD. \citet{islamov24asgrad} developed a unified framework that covers many asynchronous SGD variants, including FedBuff~\citep{nguyen2022federated}.
More broadly, asynchronous methods have been studied under data, computation, and communication heterogeneity~\citep{rennala23, shadowheart24, ringmaster25}, and extended to finite-sum settings~\citep{freya24}.

\paragraph{Byzantine-robust Learning. }
Classical Byzantine-robust methods in the synchronous setting replace mean aggregation with a robust aggregator, such as coordinate-wise median and trimmed mean~\citep{yin2018byzantine}, Krum~\citep{blanchard2017krum}, geometric median (RFA)~\citep{pillutla2019rfa}, MDA~\citep{el2020genuinely}, and Bulyan~\citep{guerraoui2018hidden}.
However, due to the noise inherent in stochastic gradients, attackers can craft malicious updates that are statistically indistinguishable from honest ones and yet prevent convergence, as in the ALIE~\citep{baruch2019nips} and IPM~\citep{xie2019icml} attacks.
Another line of work complements robust aggregation with momentum~\citep{karimireddy2021icml, el2021distributed, farhadkhani2022icml} or variance reduction~\citep{gorbunov_variance_2023}, and the optimality of such approaches has been studied in~\citep{yin2018byzantine, zhu2023aistats, shi2025optimal}.
These results, however, assume synchronous updates. 
Incorporating momentum under asynchrony is nontrivial even in the absence of Byzantine workers, since it accumulates stale gradients~\citep{shi_ordered_2025}.
Byzantine robustness has further been studied under more practical conditions, including data heterogeneity~\citep{karimireddy2022iclr, allouah_fixing_2023, allouah2023nips}, heterogeneous client participation~\citep{allouah_byzantine-robust_2024, malinovsky_byzantine_2024, otsuka2026delayed}, and communication compression~\citep{gorbunov_variance_2023, rammal_communication_2024}.

\paragraph{Asynchronous Byzantine-robust Learning. } 
Combining asynchrony with Byzantine robustness is challenging because asynchronous SGD applies each stochastic gradient immediately upon arrival, leaving no natural averaging step to replace with a robust aggregator.
One line of work addresses this by partially \textit{synchronizing} the updates: BASGD and BASGDm~\citep{yang2021basgd, yang2023buffered} aggregate over server-side buffers, and \citet{dahan_weight_2024} aggregate each incoming gradient with the latest gradients of the other clients using a weighting scheme that mitigates staleness.
Another line of work filters incoming gradients using additional information.
Zeno++~\citep{xie2020zenopp} and AFLGuard~\citep{fang2022aflguard} rely on a trusted dataset held by the server, an assumption that may undermine the computational efficiency of distributed learning. 
While Kardam~\citep{kardam2018} filters updates based on an empirical estimate of the smoothness constant, but may reject honest gradients along with malicious ones~\citep{xie2020zenopp}.

\section{Experiments}
\label{sec:expe}
Our experiments show the benefits of {\myalgo} both with and without Byzantine clients. In the Byzantine setting, it remained competitive under most standard attacks and maintains high accuracy under flooding attacks that severely degrade several existing methods. In the non-Byzantine setting, it achieved lower optimization error than minibatch SGD, asynchronous SGD, and the other evaluated baselines under the same gradient computation budget.

\subsection{{\myalgo} is Byzantine-robust}
\begin{figure}[t]
    \centering
    \includegraphics[width=1\linewidth]{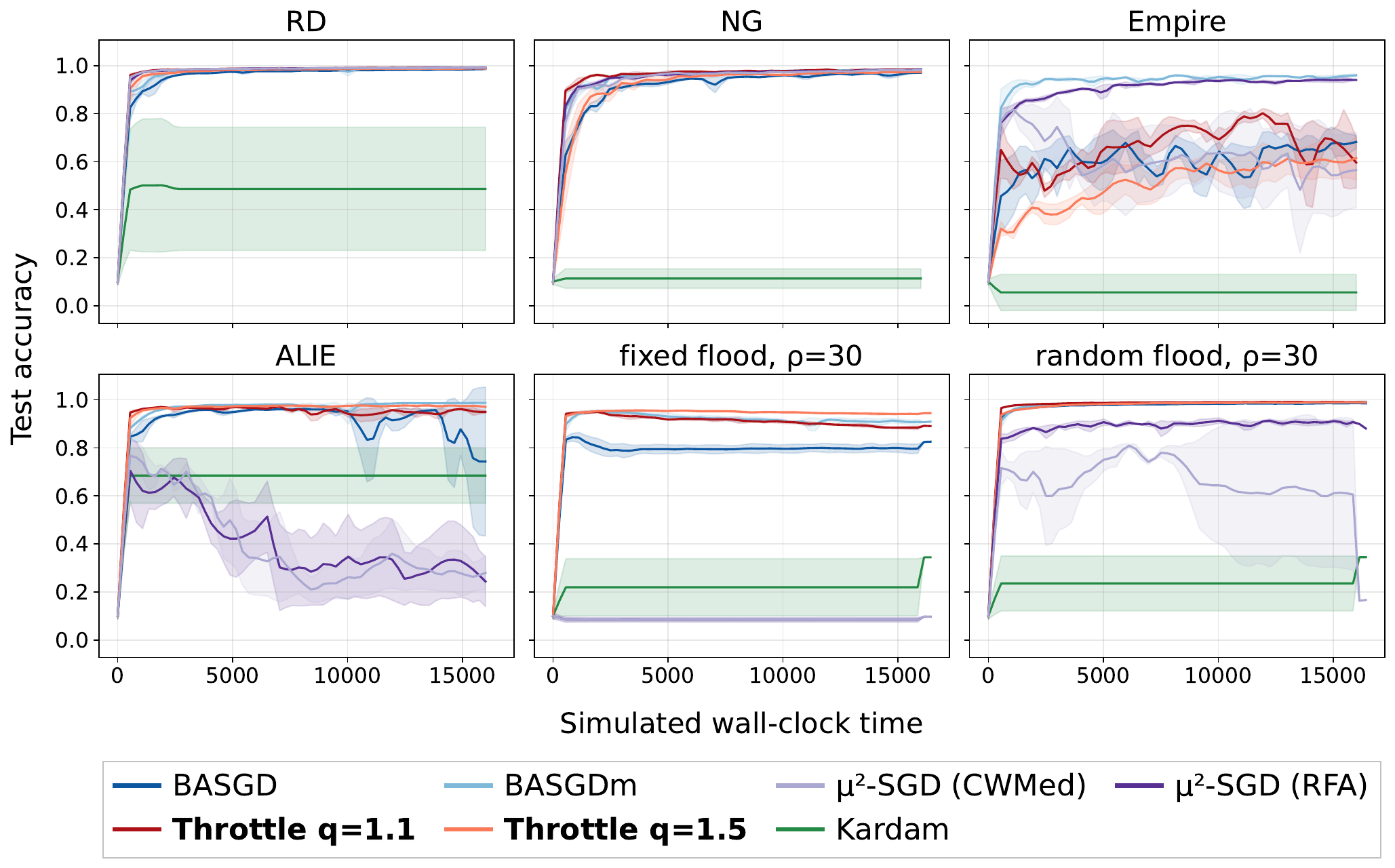}
    \vspace{-2.0em}
    \caption{Test accuracy on MNIST with $n=25$ clients, $|\gB| = 5$ Byzantine clients.
    Standard attacks (RD,NG,Empire,ALIE) under the scheduler based on the codebase\protect\footnotemark[\getrefnumber{fn:wfr}], 
    in which Byzantine updates are at most $1/3$ of all gradients at any time.
    Flooding attacks under using a fixed vector/random vectors. The higher $\rho$, the more frequently the Byzantine clients send their updates. \textsc{Throttle} keeps a higher accuracy overall. Figure~\ref{fig:appendix-exp-figure} also shows the results with other choices of $\rho=1,3,10$.
    }
    \label{fig:byzantine-exp}
\end{figure}

We trained convolutional neural networks using our optimizer and other existing asynchronous SGD algorithms on MNIST with $n=25$ clients and $|\gB| = 5$ Byzantine clients.
We compared \textsc{Throttle} against four standard attacks RD (Random Disturbance attack), NG (Negative Gradient attack~\citep{yang2023buffered}, Empire~\citep{xie_fall_2019}, ALIE~\citep{baruch2019nips} and eight flooding attacks (Figure~\ref{fig:byzantine-exp} and Figure~\ref{fig:appendix-exp-figure}).\footnote{For ALIE and Empire, we used the latest gradients from each honest client as the information for them.} We compared our proposed method with Kardam \citep{kardam2018},
BASGD and BASGDm \citep{yang2021basgd,yang2023buffered}, and Asynchronous Robust $\mu^2$-SGD \citep{dahan_weight_2024} based on their codebase.~\footnote{\url{https://github.com/dahan198/asynchronous-fault-tolerant-ml}\label{fn:wfr}}
Following \citet{dahan_weight_2024}, we simulate asynchronous training as a sequence of updates. 
For the standard attacks in Figure~\ref{fig:byzantine-exp} every third arrival is a Byzantine update and the arriving client is drawn with probability proportional to its index for other updates, as in \citet{dahan_weight_2024}.
In the flooding experiments, client $i$ is an independent Poisson process with
rate $r_i\propto i$ and the Byzantine rates are multiplied by $\rho$, giving a Byzantine fraction of
updates $\rho/(2+\rho)$ ($1/3$ to $0.94$ for $\rho=1$ to $30$). 
Appendix~\ref{app:experiments} gives additional details including all hyperparameter grids, attack details, and asynchrony.

\paragraph{\myalgo{} is robust and outperforms existing algorithms in almost all type of attacks.} 

\myalgo{} outperformed BASGDm under fixed flooding even with increased $\rho$, and outperformed $\mu^2$-SGD with RFA under both ALIE and flooding attacks. The exception was Empire, where \myalgo{} achieved $59.6\%$ and $61.5\%$, compared with $96.0\%$ for BASGDm and $94.0\%$ for $\mu^2$-SGD with RFA. Since both baselines used momentum, we expect that adding momentum to \myalgo{}, together with tuning the clipping radius, would narrow this gap. We leave this to future work.

\begin{figure}[t]
    \centering
    \includegraphics[width=1\linewidth]{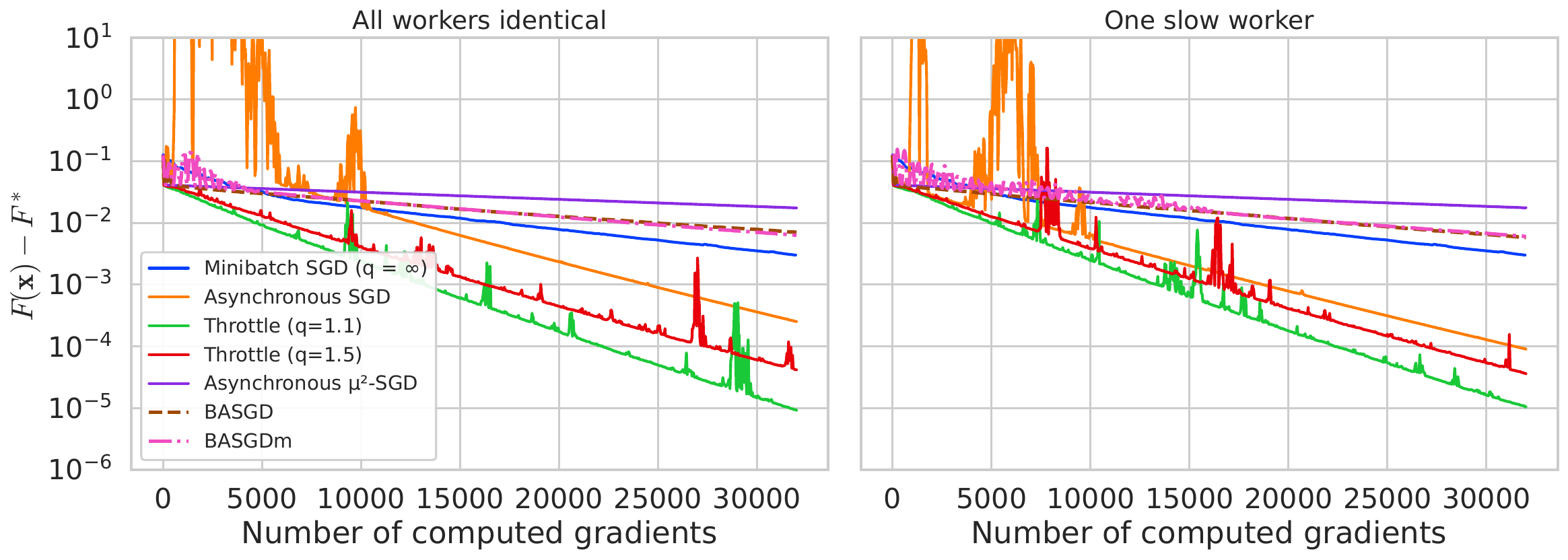}
    \vspace{-2.0em}
    \caption{We ran experiments without Byzantine clients on a simple least-squares problem with random data and tuned all stepsizes for each methods under pure asynchrony using Ray\protect\footnotemark[\getrefnumber{fn:ray}] framework.
    We averaged over three seeds.
    \textit{Left:} all clients compute gradients at similar speeds. \textit{Right:} one worker sleeps for $100\times$ its compute time after every gradient computation. 
    \textbf{\textsc{Throttle} (green) achieves faster convergence} because the \textit{soft throttling} mechanism allows us to take slightly larger stepsize $\eta$ than asynchronous SGD.  }
    \label{fig:no-byzantine-async-throttling}
\end{figure}

\subsection{{\myalgo} Outperforms standard Asynchronous SGD}

Following \citet{mishchenko_asynchronous_2023}, we study least-squares optimization with $40$ asynchronous Ray~\footnote{\url{https://docs.ray.io/en/latest/ray-core/examples/plot_parameter_server.html\#asynchronous-parameter-server-training}\label{fn:ray}} workers and no Byzantine clients (Figure~\ref{fig:no-byzantine-async-throttling}).
We compare \myalgo{} with $q\in\{1.1,1.5\}$ against minibatch SGD ($q=\infty$), asynchronous SGD, $\mu^2$-SGD, BASGD, and BASGDm\footnote{For the latter three methods, we replace robust aggregation with averaging while preserving any weighting scheme used by the original method. Gradient clipping was also disabled $\lambda_t = \infty$ for \myalgo{}.}.
We tuned the stepsizes separately for each method. All methods use a budget of $32{,}000$ computed gradients,
including those discarded at hard restarts by \myalgo{}.
We plot the optimality gap against this count.
The right panel introduces one straggler that sleeps for $100\times$ its gradient computation time after each computation.
Appendix~\ref{app:quadratic} provides further details.

\paragraph{Effectiveness of soft throttling.}
With a tuned stepsize \textsc{Throttle} performs significantly better than its two limits. 
Among the tested values, $q=1.1$ achieves the lowest final optimality gap in both worker configurations.
Although $q=1.5$ converges more slowly per computed gradient, it still outperforms asynchronous SGD and minibatch SGD.
Asynchronous $\mu^2$-SGD, BASGD, and BASGDm did not utilize asynchrony into faster convergence. All three existing methods had slower convergence than minibatch SGD even in the absence of Byzantine clients.

\paragraph{Soft throttling remains effective with a straggler.}
Introducing a worker that sleeps for $100\times$ its computation time
has little effect on the convergence of \myalgo{} per computed gradient
(Figure~\ref{fig:no-byzantine-async-throttling}, right).
Even at $q=1.5$, convergence per computed gradient is largely preserved
relative to the setting without a straggler, although convergence
becomes less stable.

\section{Conclusion}
We introduce soft throttling, an exponential down-weighting of updates by the speed of each client.
We show that the resulting algorithm, \textsc{Throttle}, is a generalization of both synchronous Byzantine-robust SGD ($q \to \infty$), and (almost) asynchronous SGD ($q\to1$) without Byzantine clients.
The convergence rate of \textsc{Throttle} matches the known rate at each limit and holds against flooding attacks without any bound on the Byzantine update ratio. 
Our experiments demonstrate robustness to attacks and show that soft throttling accelerates convergence even in the absence of Byzantine clients.

\section*{Acknowledgements}
Kaoru Otsuka was supported by MEXT Supporting Pioneering Research through AI for 1,000 Discovery challenges Program (SPReAD) Japan Grant Number JPMXP1726264354.
Maxime Meyer was supported by the National Research Foundation, Singapore under its AI Singapore Programme (AISG Award No: AISG3-PhD-2026-01-068T).
Makoto Yamada was partly supported by JSPS KAKENHI Grant Number 24K03004 and by JST ASPIRE JPMJAP2302.

\section*{AI Use Statement}
We used generative AI tools (large language models and LLM-based coding agents) in this work as follows.

\textbf{Theory.} All definitions, main theorem statements, proof strategies, and the proposed algorithm were developed by the authors. LLMs were used to check proofs for errors, correct minor mistakes, verify numerical constants, refine notation, and convert the authors' arguments into polished formal LaTeX.

\textbf{Experiments.} LLM-based coding agents were used to implement methods, design and refine experimental pipelines, and adapt publicly available code from prior works. LLMs also provided limited assistance with interpreting results and qualitative analysis, including cross-checking generated plots against the intended algorithm. 

\textbf{Writing and figures.} We used generative AI tools for drafting and editing parts of the manuscript based on the authors' initial drafts, and for producing the figures as TikZ code from the authors' hand-drawn illustrations.

\textbf{Exclusions.} We did not use generative AI to generate synthetic datasets or to clean or reformat datasets, propose or refine hypotheses, develop the theoretical model or conceptual framework, or propose research ideas.

All LLM-assisted work has been reviewed and tested by the authors.
Proofs were independently checked after LLM-assisted revision, and all LLM-assisted text was verified. 
We take full responsibility for the final content of this work.

\bibliography{references}

\begin{thebibliography}{60}
\providecommand{\natexlab}[1]{#1}
\providecommand{\url}[1]{\texttt{#1}}
\expandafter\ifx\csname urlstyle\endcsname\relax
  \providecommand{\doi}[1]{doi: #1}\else
  \providecommand{\doi}{doi: \begingroup \urlstyle{rm}\Url}\fi

\bibitem[Agarwal \& Duchi(2012)Agarwal and Duchi]{agarwa2012distributed}
Alekh Agarwal and John~C. Duchi.
\newblock Distributed delayed stochastic optimization.
\newblock In \emph{IEEE Conference on Decision and Control}, 2012.

\bibitem[Allouah et~al.(2023{\natexlab{a}})Allouah, Farhadkhani, Guerraoui,
  Gupta, Pinot, and Stephan]{allouah_fixing_2023}
Youssef Allouah, Sadegh Farhadkhani, Rachid Guerraoui, Nirupam Gupta, Rafael
  Pinot, and John Stephan.
\newblock Fixing by mixing: {A} recipe for optimal byzantine {ML} under
  heterogeneity.
\newblock In \emph{International Conference on Artificial Intelligence and
  Statistics}, 2023{\natexlab{a}}.

\bibitem[Allouah et~al.(2023{\natexlab{b}})Allouah, Guerraoui, Gupta, Pinot,
  and Rizk]{allouah2023nips}
Youssef Allouah, Rachid Guerraoui, Nirupam Gupta, Rafael Pinot, and Geovani
  Rizk.
\newblock Robust distributed learning: Tight error bounds and breakdown point
  under data heterogeneity.
\newblock In \emph{Advances in Neural Information Processing Systems},
  2023{\natexlab{b}}.

\bibitem[Allouah et~al.(2024)Allouah, Farhadkhani, Guerraoui, Gupta, Pinot,
  Rizk, and Voitovych]{allouah_byzantine-robust_2024}
Youssef Allouah, Sadegh Farhadkhani, Rachid Guerraoui, Nirupam Gupta, Rafael
  Pinot, Geovani Rizk, and Sasha Voitovych.
\newblock Byzantine-robust federated learning: Impact of client subsampling and
  local updates.
\newblock In \emph{International Conference on Machine Learning}, 2024.

\bibitem[Baruch et~al.(2019)Baruch, Baruch, and Goldberg]{baruch2019nips}
Gilad Baruch, Moran Baruch, and Yoav Goldberg.
\newblock A little is enough: {C}ircumventing defenses for distributed
  learning.
\newblock In \emph{Advances in Neural Information Processing Systems}, 2019.

\bibitem[Baudet(1978)]{baudet1978asynchronous}
Gerard~M Baudet.
\newblock Asynchronous iterative methods for multiprocessors.
\newblock In \emph{Journal of the ACM}, 1978.

\bibitem[Blanchard et~al.(2017)Blanchard, Mhamdi, Guerraoui, and
  Stainer]{blanchard2017krum}
Peva Blanchard, El~Mahdi~El Mhamdi, Rachid Guerraoui, and Julien Stainer.
\newblock Machine learning with adversaries: Byzantine tolerant gradient
  descent.
\newblock In \emph{Advances in Neural Information Processing Systems}, 2017.

\bibitem[Bubeck(2015)]{bubeck2015convex}
S{\'e}bastien Bubeck.
\newblock Convex optimization: Algorithms and complexity.
\newblock \emph{Foundations and trends in Machine Learning}, 8\penalty0
  (3-4):\penalty0 231--357, 2015.

\bibitem[Chen et~al.(2016)Chen, Monga, Bengio, and
  Jozefowicz]{chen2016revisiting}
Jianmin Chen, Rajat Monga, Samy Bengio, and Rafal Jozefowicz.
\newblock Revisiting distributed synchronous sgd.
\newblock In \emph{International Conference on Learning Representations
  Workshop Track}, 2016.

\bibitem[Chen et~al.(2017)Chen, Su, and Xu]{chen2017distributed}
Yudong Chen, Lili Su, and Jiaming Xu.
\newblock Distributed statistical machine learning in adversarial settings:
  Byzantine gradient descent.
\newblock \emph{Proceedings of the ACM on Measurement and Analysis of Computing
  Systems}, 2017.

\bibitem[Dahan \& Levy(2024)Dahan and Levy]{dahan_weight_2024}
Tehila Dahan and Kfir~Y. Levy.
\newblock Weight for robustness: {A} comprehensive approach towards optimal
  fault-tolerant asynchronous {ML}.
\newblock In \emph{Advances in Neural Information Processing Systems}, 2024.

\bibitem[Damaskinos et~al.(2018)Damaskinos, Guerraoui, Patra, Taziki,
  et~al.]{kardam2018}
Georgios Damaskinos, Rachid Guerraoui, Rhicheek Patra, Mahsa Taziki, et~al.
\newblock Asynchronous byzantine machine learning (the case of {SGD}).
\newblock In \emph{International Conference on Machine Learning}, 2018.

\bibitem[El-Mhamdi et~al.(2020)El-Mhamdi, Guerraoui, Guirguis, Hoang, and
  Rouault]{el2020genuinely}
El-Mahdi El-Mhamdi, Rachid Guerraoui, Arsany Guirguis, L{\^e}~Nguy{\^e}n Hoang,
  and S{\'e}bastien Rouault.
\newblock Genuinely distributed byzantine machine learning.
\newblock In \emph{Symposium on Principles of Distributed Computing}, 2020.

\bibitem[Fang et~al.(2022)Fang, Liu, Gong, and Bentley]{fang2022aflguard}
Minghong Fang, Jia Liu, Neil~Zhenqiang Gong, and Elizabeth~S. Bentley.
\newblock Aflguard: {B}yzantine-robust asynchronous federated learning.
\newblock In \emph{Annual Computer Security Applications Conference}, 2022.

\bibitem[Farhadkhani et~al.(2022)Farhadkhani, Guerraoui, Gupta, Pinot, and
  Stephan]{farhadkhani2022icml}
Sadegh Farhadkhani, Rachid Guerraoui, Nirupam Gupta, Rafael Pinot, and John
  Stephan.
\newblock Byzantine machine learning made easy by resilient averaging of
  momentums.
\newblock In \emph{International Conference on Machine Learning}, 2022.

\bibitem[Gorbunov et~al.(2023)Gorbunov, Horv{\'{a}}th, Richt{\'{a}}rik, and
  Gidel]{gorbunov_variance_2023}
Eduard Gorbunov, Samuel Horv{\'{a}}th, Peter Richt{\'{a}}rik, and Gauthier
  Gidel.
\newblock Variance reduction is an antidote to byzantines: Better rates, weaker
  assumptions and communication compression as a cherry on the top.
\newblock In \emph{International Conference on Learning Representations}, 2023.

\bibitem[Horvath et~al.(2021)Horvath, Laskaridis, Almeida, Leontiadis,
  Venieris, and Lane]{horvath2021fjord}
Samuel Horvath, Stefanos Laskaridis, Mario Almeida, Ilias Leontiadis, Stylianos
  Venieris, and Nicholas Lane.
\newblock Fjord: {F}air and accurate federated learning under heterogeneous
  targets with ordered dropout.
\newblock In \emph{Advances in Neural Information Processing Systems}, 2021.

\bibitem[Huber(2005)]{huber2011robust}
Peter~J. Huber.
\newblock Robust statistics.
\newblock In \emph{Wiley Series in Probability and Statistics}, 2005.

\bibitem[Islamov et~al.(2024)Islamov, Safaryan, and Alistarh]{islamov24asgrad}
Rustem Islamov, Mher Safaryan, and Dan Alistarh.
\newblock {AsGrad}: A sharp unified analysis of asynchronous-{SGD} algorithms.
\newblock In \emph{International Conference on Artificial Intelligence and
  Statistics}, 2024.

\bibitem[Kairouz et~al.(2021)Kairouz, McMahan, Avent, Bellet, Bennis, Bhagoji,
  Bonawitz, Charles, Cormode, Cummings, D'Oliveira, Eichner, Rouayheb, Evans,
  Gardner, Garrett, Gasc{\'{o}}n, Ghazi, Gibbons, Gruteser, Harchaoui, He, He,
  Huo, Hutchinson, Hsu, Jaggi, Javidi, Joshi, Khodak, Kone{\v{c}}n{\'y},
  Korolova, Koushanfar, Koyejo, Lepoint, Liu, Mittal, Mohri, Nock,
  {\"{O}}zg{\"{u}}r, Pagh, Qi, Ramage, Raskar, Raykova, Song, Song, Stich, Sun,
  Suresh, Tram{\`{e}}r, Vepakomma, Wang, Xiong, Xu, Yang, Yu, Yu, and
  Zhao]{kairouz2021advances}
Peter Kairouz, H.~Brendan McMahan, Brendan Avent, Aur{\'{e}}lien Bellet, Mehdi
  Bennis, Arjun~Nitin Bhagoji, Kallista~A. Bonawitz, Zachary Charles, Graham
  Cormode, Rachel Cummings, Rafael G.~L. D'Oliveira, Hubert Eichner, Salim~El
  Rouayheb, David Evans, Josh Gardner, Zachary Garrett, Adri{\`{a}}
  Gasc{\'{o}}n, Badih Ghazi, Phillip~B. Gibbons, Marco Gruteser, Za{\"{\i}}d
  Harchaoui, Chaoyang He, Lie He, Zhouyuan Huo, Ben Hutchinson, Justin Hsu,
  Martin Jaggi, Tara Javidi, Gauri Joshi, Mikhail Khodak, Jakub
  Kone{\v{c}}n{\'y}, Aleksandra Korolova, Farinaz Koushanfar, Sanmi Koyejo,
  Tancr{\`{e}}de Lepoint, Yang Liu, Prateek Mittal, Mehryar Mohri, Richard
  Nock, Ayfer {\"{O}}zg{\"{u}}r, Rasmus Pagh, Hang Qi, Daniel Ramage, Ramesh
  Raskar, Mariana Raykova, Dawn Song, Weikang Song, Sebastian~U. Stich, Ziteng
  Sun, Ananda~Theertha Suresh, Florian Tram{\`{e}}r, Praneeth Vepakomma, Jianyu
  Wang, Li~Xiong, Zheng Xu, Qiang Yang, Felix~X. Yu, Han Yu, and Sen Zhao.
\newblock Advances and open problems in federated learning.
\newblock In \emph{Foundations and Trends in Machine Learning}, 2021.

\bibitem[Karimireddy et~al.(2021)Karimireddy, He, and
  Jaggi]{karimireddy2021icml}
Sai~Praneeth Karimireddy, Lie He, and Martin Jaggi.
\newblock Learning from history for byzantine robust optimization.
\newblock In \emph{International Conference on Machine Learning}, 2021.

\bibitem[Karimireddy et~al.(2022)Karimireddy, He, and
  Jaggi]{karimireddy2022iclr}
Sai~Praneeth Karimireddy, Lie He, and Martin Jaggi.
\newblock Byzantine-robust learning on heterogeneous datasets via bucketing.
\newblock In \emph{International Conference on Learning Representations}, 2022.

\bibitem[Koloskova et~al.(2020)Koloskova, Loizou, Boreiri, Jaggi, and
  Stich]{koloskova2020icml}
Anastasia Koloskova, Nicolas Loizou, Sadra Boreiri, Martin Jaggi, and
  Sebastian~U. Stich.
\newblock A unified theory of decentralized {SGD} with changing topology and
  local updates.
\newblock In \emph{International Conference on Machine Learning}, 2020.

\bibitem[Koloskova et~al.(2022)Koloskova, Stich, and
  Jaggi]{koloskova_sharper_2022}
Anastasia Koloskova, Sebastian~U. Stich, and Martin Jaggi.
\newblock Sharper convergence guarantees for asynchronous {SGD} for distributed
  and federated learning.
\newblock In \emph{Advances in Neural Information Processing Systems}, 2022.

\bibitem[Koloskova et~al.(2024)Koloskova, Doikov, Stich, and
  Jaggi]{koloskova2024on}
Anastasia Koloskova, Nikita Doikov, Sebastian~U Stich, and Martin Jaggi.
\newblock On convergence of incremental gradient for non-convex smooth
  functions.
\newblock In \emph{International Conference on Machine Learning}, 2024.

\bibitem[Koloskova et~al.(2023)Koloskova, McKenna, Charles, Rush, and
  McMahan]{koloskova2023gradient}
Anastasiia Koloskova, Ryan McKenna, Zachary Charles, John Rush, and H~Brendan
  McMahan.
\newblock Gradient descent with linearly correlated noise: Theory and
  applications to differential privacy.
\newblock In \emph{Advances in Neural Information Processing Systems}, 2023.

\bibitem[Lamport et~al.(2019)Lamport, Shostak, and
  Pease]{lamport_byzantine_2019}
Leslie Lamport, Robert Shostak, and Marshall Pease.
\newblock \emph{The Byzantine generals problem}.
\newblock Association for Computing Machinery (ACM) / ACM Books, 10 2019.
\newblock ISBN 9781450372701.
\newblock \doi{10.1145/3335772.3335936}.

\bibitem[Lian et~al.(2015)Lian, Huang, Li, and Liu]{lian2015asynchronous}
Xiangru Lian, Yijun Huang, Yuncheng Li, and Ji~Liu.
\newblock Asynchronous parallel stochastic gradient for nonconvex optimization.
\newblock In \emph{Advances in Neural Information Processing Systems}, 2015.

\bibitem[Malinovsky et~al.(2024)Malinovsky, Richt{\'{a}}rik, Horv{\'{a}}th, and
  Gorbunov]{malinovsky_byzantine_2024}
Grigory Malinovsky, Peter Richt{\'{a}}rik, Samuel Horv{\'{a}}th, and Eduard
  Gorbunov.
\newblock Byzantine robustness and partial participation can be achieved at
  once: Just clip gradient differences.
\newblock In \emph{Advances in Neural Information Processing Systems}, 2024.

\bibitem[Mania et~al.(2017)Mania, Pan, Papailiopoulos, Recht, Ramchandran, and
  Jordan]{mania2017perturbed}
Horia Mania, Xinghao Pan, Dimitris Papailiopoulos, Benjamin Recht, Kannan
  Ramchandran, and Michael~I Jordan.
\newblock Perturbed iterate analysis for asynchronous stochastic optimization.
\newblock In \emph{SIAM Journal on Optimization}, 2017.

\bibitem[Maranjyan et~al.(2025)Maranjyan, Tyurin, and
  Richt{\'a}rik]{ringmaster25}
Arto Maranjyan, Alexander Tyurin, and Peter Richt{\'a}rik.
\newblock Ringmaster {ASGD}: The first asynchronous {SGD} with optimal time
  complexity.
\newblock In \emph{Forty-second International Conference on Machine Learning},
  2025.

\bibitem[Mhamdi et~al.(2018)Mhamdi, Guerraoui, and
  Rouault]{guerraoui2018hidden}
El~Mahdi~El Mhamdi, Rachid Guerraoui, and S{\'{e}}bastien Rouault.
\newblock The hidden vulnerability of distributed learning in byzantium.
\newblock In \emph{International Conference on Machine Learning}, 2018.

\bibitem[Mhamdi et~al.(2021)Mhamdi, Guerraoui, and Rouault]{el2021distributed}
El~Mahdi~El Mhamdi, Rachid Guerraoui, and S{\'{e}}bastien Rouault.
\newblock Distributed momentum for byzantine-resilient stochastic gradient
  descent.
\newblock In \emph{International Conference on Learning Representations}, 2021.

\bibitem[Mishchenko et~al.(2022)Mishchenko, Bach, Even, and
  Woodworth]{mishchenko_asynchronous_2023}
Konstantin Mishchenko, Francis~R. Bach, Mathieu Even, and Blake~E. Woodworth.
\newblock Asynchronous {SGD} beats minibatch {SGD} under arbitrary delays.
\newblock In \emph{Advances in Neural Information Processing Systems}, 2022.

\bibitem[Nesterov(2018)]{nesterov2018lectures}
Yurii Nesterov.
\newblock \emph{Lectures on convex optimization}.
\newblock Springer, 2018.

\bibitem[Nguyen et~al.(2022)Nguyen, Malik, Zhan, Yousefpour, Rabbat, Malek, and
  Huba]{nguyen2022federated}
John Nguyen, Kshitiz Malik, Hongyuan Zhan, Ashkan Yousefpour, Mike Rabbat, Mani
  Malek, and Dzmitry Huba.
\newblock Federated learning with buffered asynchronous aggregation.
\newblock In \emph{International conference on artificial intelligence and
  statistics}, 2022.

\bibitem[Otsuka et~al.(2026)Otsuka, Takezawa, and Yamada]{otsuka2026delayed}
Kaoru Otsuka, Yuki Takezawa, and Makoto Yamada.
\newblock Delayed momentum aggregation: Communication-efficient
  byzantine-robust federated learning with partial participation.
\newblock In \emph{International Conference on Machine Learning}, 2026.

\bibitem[Pillutla et~al.(2022)Pillutla, Kakade, and Harchaoui]{pillutla2019rfa}
Krishna Pillutla, Sham~M. Kakade, and Za{\"{\i}}d Harchaoui.
\newblock Robust aggregation for federated learning.
\newblock In \emph{IEEE Transactions on Signal Processing}, 2022.

\bibitem[Rammal et~al.(2024)Rammal, Gruntkowska, Fedin, Gorbunov, and
  Richt{\'{a}}rik]{rammal_communication_2024}
Ahmad Rammal, Kaja Gruntkowska, Nikita Fedin, Eduard Gorbunov, and Peter
  Richt{\'{a}}rik.
\newblock Communication compression for byzantine robust learning: {N}ew
  efficient algorithms and improved rates.
\newblock In \emph{International Conference on Artificial Intelligence and
  Statistics}, 2024.

\bibitem[Recht et~al.(2011)Recht, Re, Wright, and Niu]{recht2011hogwild}
Benjamin Recht, Christopher Re, Stephen Wright, and Feng Niu.
\newblock Hogwild!: {A} lock-free approach to parallelizing stochastic gradient
  descent.
\newblock In \emph{Advances in neural information processing systems}, 2011.

\bibitem[Ryabinin et~al.(2021)Ryabinin, Gorbunov, Plokhotnyuk, and
  Pekhimenko]{ryabinin2021moshpit}
Max Ryabinin, Eduard Gorbunov, Vsevolod Plokhotnyuk, and Gennady Pekhimenko.
\newblock Moshpit {SGD}: {C}ommunication-efficient decentralized training on
  heterogeneous unreliable devices.
\newblock In \emph{Advances in Neural Information Processing Systems}, 2021.

\bibitem[Shi et~al.(2024)Shi, Yang, and Li]{shi_ordered_2025}
Chang{-}Wei Shi, Yi{-}Rui Yang, and Wu{-}Jun Li.
\newblock Ordered momentum for asynchronous {SGD}.
\newblock In \emph{Advances in Neural Information Processing Systems}, 2024.

\bibitem[Shi et~al.(2025)Shi, Peng, Yuan, Wang, and Ling]{shi2025optimal}
Qiankun Shi, Jie Peng, Kun Yuan, Xiao Wang, and Qing Ling.
\newblock Optimal complexity in byzantine-robust distributed stochastic
  optimization with data heterogeneity.
\newblock In \emph{Journal of Machine Learning Research}, 2025.

\bibitem[Stich \& Karimireddy(2020)Stich and Karimireddy]{stich2020error}
Sebastian~U Stich and Sai~Praneeth Karimireddy.
\newblock The error-feedback framework: Sgd with delayed gradients.
\newblock In \emph{Journal of Machine Learning Research}, 2020.

\bibitem[Tsitsiklis et~al.(1986)Tsitsiklis, Bertsekas, and
  Athans]{tsitsiklis1986distributed}
J.~Tsitsiklis, D.~Bertsekas, and M.~Athans.
\newblock Distributed asynchronous deterministic and stochastic gradient
  optimization algorithms.
\newblock In \emph{IEEE Transactions on Automatic Control}, 1986.

\bibitem[Tyurin \& Richt{\'a}rik(2023)Tyurin and Richt{\'a}rik]{rennala23}
Alexander Tyurin and Peter Richt{\'a}rik.
\newblock Optimal time complexities of parallel stochastic optimization methods
  under a fixed computation model.
\newblock In \emph{Thirty-seventh Conference on Neural Information Processing
  Systems}, 2023.

\bibitem[Tyurin et~al.(2024{\natexlab{a}})Tyurin, Gruntkowska, and
  Richt{\'a}rik]{freya24}
Alexander Tyurin, Kaja Gruntkowska, and Peter Richt{\'a}rik.
\newblock Freya {PAGE}: First optimal time complexity for large-scale nonconvex
  finite-sum optimization with heterogeneous asynchronous computations.
\newblock In \emph{The Thirty-eighth Annual Conference on Neural Information
  Processing Systems}, 2024{\natexlab{a}}.

\bibitem[Tyurin et~al.(2024{\natexlab{b}})Tyurin, Pozzi, Ilin, and
  Richt{\'a}rik]{shadowheart24}
Alexander Tyurin, Marta Pozzi, Ivan Ilin, and Peter Richt{\'a}rik.
\newblock Shadowheart {SGD}: Distributed asynchronous {SGD} with optimal time
  complexity under arbitrary computation and communication heterogeneity.
\newblock In \emph{The Thirty-eighth Annual Conference on Neural Information
  Processing Systems}, 2024{\natexlab{b}}.

\bibitem[Verbraeken et~al.(2020)Verbraeken, Wolting, Katzy, Kloppenburg,
  Verbelen, and Rellermeyer]{verbraeken2020survey}
Joost Verbraeken, Matthijs Wolting, Jonathan Katzy, Jeroen Kloppenburg, Tim
  Verbelen, and Jan~S Rellermeyer.
\newblock A survey on distributed machine learning.
\newblock \emph{Acm computing surveys (csur)}, 53\penalty0 (2):\penalty0 1--33,
  2020.

\bibitem[Xie et~al.(2019)Xie, Koyejo, and Gupta]{xie2019icml}
Cong Xie, Sanmi Koyejo, and Indranil Gupta.
\newblock Zeno: {D}istributed stochastic gradient descent with suspicion-based
  fault-tolerance.
\newblock In \emph{International Conference on Machine Learning}, 2019.

\bibitem[Xie et~al.(2020{\natexlab{a}})Xie, Koyejo, and Gupta]{xie_fall_2019}
Cong Xie, Oluwasanmi Koyejo, and Indranil Gupta.
\newblock Fall of empires: Breaking byzantine-tolerant {SGD} by inner product
  manipulation.
\newblock In \emph{Uncertainty in Artificial Intelligence}, 2020{\natexlab{a}}.

\bibitem[Xie et~al.(2020{\natexlab{b}})Xie, Koyejo, and Gupta]{xie2020zenopp}
Cong Xie, Sanmi Koyejo, and Indranil Gupta.
\newblock Zeno++: {R}obust fully asynchronous {SGD}.
\newblock In \emph{International Conference on Machine Learning},
  2020{\natexlab{b}}.

\bibitem[Yang et~al.(2019)Yang, Yi, Wu, Yuan, Wu, Meng, Hong, Wang, Lin, and
  Johansson]{yang2019survey}
Tao Yang, Xinlei Yi, Junfeng Wu, Ye~Yuan, Di~Wu, Ziyang Meng, Yiguang Hong,
  Hong Wang, Zongli Lin, and Karl~H Johansson.
\newblock A survey of distributed optimization.
\newblock \emph{Annual Reviews in Control}, 47:\penalty0 278--305, 2019.

\bibitem[Yang \& Li(2021)Yang and Li]{yang2021basgd}
Yi-Rui Yang and Wu-Jun Li.
\newblock Basgd: {B}uffered asynchronous sgd for byzantine learning.
\newblock In \emph{International conference on machine learning}, 2021.

\bibitem[Yang \& Li(2023)Yang and Li]{yang2023buffered}
Yi{-}Rui Yang and Wu{-}Jun Li.
\newblock Buffered asynchronous {SGD} for byzantine learning.
\newblock In \emph{Journal of Machine Learning Research}, 2023.

\bibitem[Yang et~al.(2024)Yang, Shi, and Li]{yang2024on}
Yi-Rui Yang, Chang-Wei Shi, and Wu-Jun Li.
\newblock On the effect of batch size in byzantine-robust distributed learning.
\newblock In \emph{International Conference on Learning Representations}, 2024.

\bibitem[Yin et~al.(2018)Yin, Chen, Kannan, and Bartlett]{yin2018byzantine}
Dong Yin, Yudong Chen, Ramchandran Kannan, and Peter Bartlett.
\newblock Byzantine-robust distributed learning: {T}owards optimal statistical
  rates.
\newblock In \emph{International conference on machine learning}, 2018.

\bibitem[Zhang et~al.(2020{\natexlab{a}})Zhang, Jin, Fang, and
  Wang]{zhang2020improved}
Bohang Zhang, Jikai Jin, Cong Fang, and Liwei Wang.
\newblock Improved analysis of clipping algorithms for non-convex optimization.
\newblock \emph{Advances in Neural Information Processing Systems},
  2020{\natexlab{a}}.

\bibitem[Zhang et~al.(2020{\natexlab{b}})Zhang, He, Sra, and
  Jadbabaie]{Zhang2020Why}
Jingzhao Zhang, Tianxing He, Suvrit Sra, and Ali Jadbabaie.
\newblock Why gradient clipping accelerates training: A theoretical
  justification for adaptivity.
\newblock In \emph{International Conference on Learning Representations},
  2020{\natexlab{b}}.

\bibitem[Zhu et~al.(2023)Zhu, Wang, Pang, Wang, Jiao, Song, and
  Jordan]{zhu2023aistats}
Banghua Zhu, Lun Wang, Qi~Pang, Shuai Wang, Jiantao Jiao, Dawn Song, and
  Michael~I. Jordan.
\newblock Byzantine-robust federated learning with optimal statistical rates.
\newblock In \emph{International Conference on Artificial Intelligence and
  Statistics}, 2023.

\end{thebibliography}
\bibliographystyle{iclr2027_conference}

\appendix
\newpage

\section{Notations}
\label{app:notations}

In Algorithm~\ref{alg:stoch-async-hard-sgd}, we write

\begin{itemize}
    \item $\bm x^{(0)} \in \mathbb{R}^d$ the initial parameter,
    \item $n$ the total number of clients,
    \item $q\in [1,\infty]$ the scaling factor,
    \item $T$ the number of iterations.
    \item $S_r$ the start of the $r^\text{th}$ round,
    \item $r(t):= \max\{S_r : S_r \le t\}$ the current round number,
    \item $j(t)$ the current client,
    \item $c_i(t) := \sum_{s=r(t)}^{t-1} \mathbf{1}_{\{j_s=i\}}$ the number of gradients that the client $i$ has already delivered in the current round,
    \item $\bm v^{(t-\tau_t)}_{j_t} = \begin{cases}
            \nabla F(\bm x^{(t-\tau_t)};\xi_{j_t}^{(t-\tau_t)}) &\text{if } j_t \in \gG \\
            * &\text{if } j_t \in \gB
        \end{cases}\quad$ the feedback received,
    \item $\eta_{t}
        \coloneqq \begin{cases}
            \frac{\eta}{n} &\text{if $c_{j_t}(t) = 0$}, \\
            \frac{\eta}{q^{c_{j_t}(t)}}  &\text{else}.
        \end{cases}\quad$ the stepsize,
    \item  $\mathbf{S}_{r(t)}$ the current side information.
\end{itemize}

\section{Analyzing Asynchronous SGD through the Virtual Iterate}
\label{app:virtual_iterate}

\citet{koloskova_sharper_2022, mishchenko_asynchronous_2023} study a virtual iterate $\tilde{\bm x}^{(t)}$, where every update is performed on the current model instead of the outdated version,
\begin{equation} \label{eq:virtual_iterate}
    \tilde{\bm x}^{(t)} := \tilde{\bm x}^{(t-1)} - \eta \nabla F(\bm x^{(t-1)}). %
\end{equation}

The sum of the updates of the virtual iterate over $T$ steps is simply $\tilde S=\tilde{\bm x}^{(0)}+\sum_{t=1}^T\eta \nabla F(\bm x^{(t-1)})$. The key insight is that the sum of the updates of the actual iterate $S=\bm x^{(0)}+\sum_{t=1}^T\eta \nabla F(\bm x^{(t-\tau_t)})$ corresponds almost exactly to $\tilde S$. It only differs by the first factor $\bm x^{(0)}$ and the fact that the update $\nabla F(\bm x^{(0)})$ appears $n$ times in $S$ but only once in $\tilde S$, which comprises later updates instead ($S$ can be seen as being ``delayed'' compared to $\tilde S$). Therefore, both iterates differ by at most $n-1$ stochastic gradients, regardless of $T$ \citep[Lemma~1]{mishchenko_asynchronous_2023}. Standard Synchronous SGD techniques can be used to analyse the convergence of the virtual iterate, and hence that of the real iterate $ \bm x^{(t)} $.

\paragraph{Adapting to the Byzantine Setting.} However, this virtual iterate loses its usefulness in the Byzantine setting. Indeed, it can consist of an arbitrarily large proportion of Byzantine updates, and hence be arbitrarily bad. This contrasts with standard Byzantine distributed learning settings and explains the added assumptions of previous work~\citep{dahan_weight_2024}. We remove this assumption by considering a modified version of the virtual iterate, which we introduce in Appendix~\ref{app:conv_proofs}.

\section{Convergence Proofs}~\label{app:conv_proofs}
Our proofs build on \textit{perturbed iterate analysis}~\citep{mania2017perturbed, stich2020error}.
Existing analyses of asynchronous SGD~\citep{mishchenko_asynchronous_2023, koloskova_sharper_2022} also rely on this technique, approximating the asynchronous iterate by a \textit{synchronized} virtual iterate.
A related construction is the \textit{virtual sequence with restart}~\citep{koloskova2023gradient}, later adapted to incremental gradient methods~\citep{koloskova2024on}.
\citet{islamov24asgrad} combined the two ideas through a two-stage virtual sequence to analyze asynchronous SGD with shuffling.

Inspired by \citet{islamov24asgrad}, our analysis of {\myalgo} also uses a two-stage virtual sequence.
The first stage is the standard approximation of the asynchronous iterate by a synchronized one (Fig.~\ref{fig:async-gap}).
The second approximates this synchronized iterate, which still contains Byzantine updates, by a synchronized iterate free of them (Fig.~\ref{fig:virtual-restart}):
\[
\begin{tikzcd}[column sep=large]
\bm x^{(t)} \arrow[r, "\text{synchronization}"] & \bm y^{(t)} \arrow[r, "\text{de-Byzantinization}"] & \bm z^{(t)} \\[-20pt]
\text{real iterate} & \text{``synchronized'' iterate} & \text{de-Byzantinized iterate}
\end{tikzcd}
\]
Both the de-Byzantinization step and its combination with asynchronous virtual sequences with restart are new to our analysis.

In Section~\ref{app:de-byz}, we show that perturbed iterate analysis also applies to standard Byzantine-robust SGD analysis (with momentum, for generality).
Section~\ref{app:stochastic_hard_reset} then presents the full analysis of {\myalgo} using the two-stage virtual sequence above.

\subsection{Virtual Iterate Analysis for Synchronous Byzantine-robust SGD} \label{app:de-byz}
In this section, we present a new analysis technique for Byzantine-robust SGD.
It recovers the known convergence rate and, more importantly, provides the proof strategy for our main theorem.
Consider the actual iterates of Byzantine-robust SGD with momentum.
We include momentum just for generality ($\alpha = 1$ reduces to SGD).
\[
  \bm x^{(t)} \coloneqq \bm x^{(t-1)} - \eta \Agg\bigl(\{\bm m_i^{(t)}\}_{i=1}^n\bigr), \qquad
  \bm m_i^{(t)} \coloneqq (1-\alpha)\,\bm m_i^{(t-1)} + \alpha \nabla F(\bm x^{(t-1)}; \xi_i^{(t)}).
\]
We construct the virtual sequence $\{\bm y^{(t)}\}_{t=0}^{T-1}$ by
\[
  \bm y^{(t)} \coloneqq \bm x^{(t-1)} - \eta \bar{\bm m}^{(t)}, \qquad \bm y^{(0)} \coloneqq \bm x^{(0)},
\]
that is, the point reached from $\bm x^{(t-1)}$ when the aggregate is replaced by the average momentum of honest clients.

\begin{lemma}[Descent Lemma for Virtual Iterate Analysis for Synchronous Byzantine-robust SGDm]
Suppose the stepsizes satisfy
\(0<\eta_t\le 1/(8L)\).
and denote
\(\mathbb{E}_t[\cdot]
:=\mathbb{E}[\cdot\mid\mathcal{F}_{t-1}]\),
where \(\mathcal{F}_{t-1}\) contains the full history through
round \(t-1\).
Then, for every \(t\ge2\),
\begin{align*}
    \mathbb{E}_t\bigl[f(\bm y^{(t)})\bigr]
    \leq &
    f(\bm y^{(t-1)})
    -\frac{3\eta_t}{8}
    \bigl\|\nabla f(\bm x^{(t-1)})\bigr\|^2
    +
    \frac{5\eta_t}{4}
    \mathbb{E}_t
    \bigl\|
        \bar{\bm m}^{(t)}
        -\nabla f(\bm x^{(t-1)})
    \bigr\|^2
    +
    \underbrace{\frac{3}{2\eta_t}
    \bigl\|
        \bm x^{(t-1)}-\bm y^{(t-1)}
    \bigr\|^2}_{=\frac{3\eta_{t-1}^{2}}{2\eta_t}
    \bigl\|
        \bm m^{(t-1)}-\bar{\bm m}^{(t-1)}
    \bigr\|^2}.
\end{align*}
\end{lemma}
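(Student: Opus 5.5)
The plan is a one-step smoothness argument centred at $\bm y^{(t-1)}$, with the error $\bm x^{(t-1)}-\bm y^{(t-1)}$ handled as a deterministic perturbation. First I would record the recursion satisfied by the virtual sequence. Set $\bm e^{(t-1)} \coloneqq \bm x^{(t-1)}-\bm y^{(t-1)}$. Since $\bm x^{(t-1)} = \bm x^{(t-2)}-\eta_{t-1}\Agg(\{\bm m_i^{(t-1)}\})$ and $\bm y^{(t-1)}=\bm x^{(t-2)}-\eta_{t-1}\bar{\bm m}^{(t-1)}$, we get $\bm e^{(t-1)} = -\eta_{t-1}\bigl(\bm m^{(t-1)}-\bar{\bm m}^{(t-1)}\bigr)$, where $\bm m^{(t-1)}$ denotes the aggregate. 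This gives the identity under the brace for $t\ge 2$. It also gives
\[
\bm y^{(t)} = \bm y^{(t-1)} + \bm e^{(t-1)} - \eta_t \bar{\bm m}^{(t)} .
\]
The key observation is that $\bm e^{(t-1)}$ is $\mathcal F_{t-1}$-measurable. Hence the only randomness in the step is $\bar{\bm m}^{(t)}$, which involves only honest clients, and the Byzantine error enters purely as a fixed perturbation.

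Next I would apply $L$-smoothness (Assumption~\ref{ass:objective}) between $\bm y^{(t-1)}$ and $\bm y^{(t)}$:
\[
f(\bm y^{(t)}) \le f(\bm y^{(t-1)}) + \bigl\langle \nabla f(\bm y^{(t-1)}), \bm e^{(t-1)}-\eta_t\bar{\bm m}^{(t)}\bigr\rangle + \tfrac L2\bigl\|\bm e^{(t-1)}-\eta_t\bar{\bm m}^{(t)}\bigr\|^2 .
\]
I would then split $\nabla f(\bm y^{(t-1)}) = \nabla f(\bm x^{(t-1)}) + \bigl(\nabla f(\bm y^{(t-1)})-\nabla f(\bm x^{(t-1)})\bigr)$ and bound the second piece in norm by $L\|\bm e^{(t-1)}\|$. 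This leaves four kinds of terms, which I would treat as follows.
\begin{itemize}
\item For the main term, write $\bar{\bm m}^{(t)} = \nabla f(\bm x^{(t-1)}) + (\bar{\bm m}^{(t)}-\nabla f(\bm x^{(t-1)}))$. Then $-\eta_t\langle\nabla f(\bm x^{(t-1)}),\bar{\bm m}^{(t)}\rangle \le -\tfrac{\eta_t}{2}\|\nabla f(\bm x^{(t-1)})\|^2 + \tfrac{\eta_t}{2}\|\bar{\bm m}^{(t)}-\nabla f(\bm x^{(t-1)})\|^2$ by Young's inequality.
\item The cross term $\langle\nabla f(\bm x^{(t-1)}),\bm e^{(t-1)}\rangle$ is bounded by $\tfrac{\eta_t}{16}\|\nabla f(\bm x^{(t-1)})\|^2 + \tfrac{4}{\eta_t}\|\bm e^{(t-1)}\|^2$, or with a similar split tuned to the constants.
\item The gradient-difference term contributes at most $L\|\bm e^{(t-1)}\|^2 + L\eta_t\|\bm e^{(t-1)}\|\,\|\bar{\bm m}^{(t)}\|$.
\item The quadratic term contributes at most $L\|\bm e^{(t-1)}\|^2 + L\eta_t^2\|\bar{\bm m}^{(t)}\|^2$.
\end{itemize}
In the last two items, every occurrence of $\|\bar{\bm m}^{(t)}\|^2$ is replaced by $2\|\nabla f(\bm x^{(t-1)})\|^2 + 2\|\bar{\bm m}^{(t)}-\nabla f(\bm x^{(t-1)})\|^2$. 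Every factor $L\eta_t$ or $L$ multiplying $\|\bm e^{(t-1)}\|^2$ is then converted using $\eta_t\le 1/(8L)$, i.e.\ $L\le \tfrac{1}{8\eta_t}$.

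Finally, I would take $\mathbb E_t$. This is legitimate without any unbiasedness, because the bound will already be in terms of $\mathbb E_t\|\bar{\bm m}^{(t)}-\nabla f(\bm x^{(t-1)})\|^2$, which is convenient since momentum is biased. I would then collect coefficients and check that the gradient-norm terms sum to at most $-\tfrac{3\eta_t}{8}$, that the variance terms sum to at most $\tfrac{5\eta_t}{4}$, and that the $\|\bm e^{(t-1)}\|^2$ terms sum to at most $\tfrac{3}{2\eta_t}$.

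The main obstacle is this constant bookkeeping, in particular the Young split of the cross term $\langle\nabla f(\bm x^{(t-1)}),\bm e^{(t-1)}\rangle$. Giving it only a small share of $\eta_t\|\nabla f\|^2$ forces a large $1/\eta_t$ coefficient, so the split must be balanced against the $-\tfrac{\eta_t}{2}$ budget. If the stated $\tfrac{3}{2\eta_t}$ does not close with a naive split, I would first try a different expansion: expand $f$ around $\bm x^{(t-1)}$ rather than $\bm y^{(t-1)}$, using $f(\bm y^{(t-1)}) \ge f(\bm x^{(t-1)}) - \langle\nabla f(\bm x^{(t-1)}),\bm e^{(t-1)}\rangle - \tfrac L2\|\bm e^{(t-1)}\|^2$. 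With this expansion the cross term appears once with the favourable sign structure, and the constants can be re-optimised.
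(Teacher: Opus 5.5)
\paragraph{Your main route does not close with the splits you prescribe.} Write $\bm g:=\nabla f(\bm x^{(t-1)})$. Your second item already puts $\frac{4}{\eta_t}>\frac{3}{2\eta_t}$ on $\|\bm e^{(t-1)}\|^2$.

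The fourth item is the bigger problem. You bound it by $L\|\bm e^{(t-1)}\|^2+L\eta_t^2\|\bar{\bm m}^{(t)}\|^2$ and then use $\|\bar{\bm m}^{(t)}\|^2\le 2\|\bm g\|^2+2\|\bar{\bm m}^{(t)}-\bm g\|^2$. This charges $2L\eta_t^2$ to the gradient, which is $\frac{\eta_t}{4}$ at $L\eta_t=\frac18$. So the first and fourth items alone leave $-\frac{\eta_t}{4}$, already worse than $-\frac{3\eta_t}{8}$ before anything else is paid.

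Retuning the Young weights does not help while the third and fourth items are bounded as you describe. Split the first two items as $\frac{b\eta_t}{2}\|\bm g\|^2+\frac{\eta_t}{2b}\|\bar{\bm m}^{(t)}-\bm g\|^2$ and $\frac{a\eta_t}{2}\|\bm g\|^2+\frac{1}{2a\eta_t}\|\bm e^{(t-1)}\|^2$, and take $L\eta_t=\frac18$. The $\frac54$ budget then forces $b\ge\frac12$, and the $\frac32$ budget forces $a\ge\frac25$. The gradient coefficient is then at least $-\eta_t+\frac{(a+b)\eta_t}{2}+\frac{\eta_t}{4}\ge-\frac{3\eta_t}{10}>-\frac{3\eta_t}{8}$.

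The loss lies in these splits rather than in the expansion point itself. Recovering the constants along this route would require handling $\frac L2\|\bm e^{(t-1)}-\eta_t\bar{\bm m}^{(t)}\|^2$ jointly with the cross terms.

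The paper avoids the issue differently. It keeps $\nabla f(\bm y^{(t-1)})$ and applies the polarization identity to $-\eta_t\langle\nabla f(\bm y^{(t-1)}),\bm g\rangle$. The resulting $-\frac{\eta_t}{2}\|\nabla f(\bm y^{(t-1)})\|^2$ pays for Young's inequality on the combined residual $-\eta_t\langle\nabla f(\bm y^{(t-1)}),\bar{\bm m}^{(t)}-\bm g-\bm e^{(t-1)}/\eta_t\rangle$. The gradient mismatch then enters only as $\frac{\eta_tL^2}{2}\|\bm e^{(t-1)}\|^2$, and only $L\eta_t^2$ of the quadratic term lands on $\|\bm g\|^2$.

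\paragraph{Your fallback works and is simpler than the paper's argument; make it the proof.} Since $\bm y^{(t)}=\bm x^{(t-1)}-\eta_t\bar{\bm m}^{(t)}$ exactly, combine smoothness at $\bm x^{(t-1)}$ with $f(\bm x^{(t-1)})\le f(\bm y^{(t-1)})+\langle\bm g,\bm e^{(t-1)}\rangle+\frac L2\|\bm e^{(t-1)}\|^2$. This gives
\[
f(\bm y^{(t)})\le f(\bm y^{(t-1)})-\eta_t\bigl\langle\bm g,\;\bar{\bm m}^{(t)}-\bm e^{(t-1)}/\eta_t\bigr\rangle+\frac L2\|\bm e^{(t-1)}\|^2+\frac{L\eta_t^2}{2}\|\bar{\bm m}^{(t)}\|^2 .
\]
Write $\bar{\bm m}^{(t)}-\bm e^{(t-1)}/\eta_t=\bm g+\bm v$. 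One Young step gives
\[
-\tfrac{\eta_t}{2}\|\bm g\|^2+\tfrac{\eta_t}{2}\|\bm v\|^2\le-\tfrac{\eta_t}{2}\|\bm g\|^2+\eta_t\|\bar{\bm m}^{(t)}-\bm g\|^2+\tfrac{1}{\eta_t}\|\bm e^{(t-1)}\|^2 .
\]
The last term of the display satisfies $\frac{L\eta_t^2}{2}\|\bar{\bm m}^{(t)}\|^2\le L\eta_t^2\|\bm g\|^2+L\eta_t^2\|\bar{\bm m}^{(t)}-\bm g\|^2$. The resulting coefficients are:
\begin{itemize}
\item $-\eta_t(\frac12-L\eta_t)\le-\frac{3\eta_t}{8}$ on $\|\bm g\|^2$;
\item $\eta_t(1+L\eta_t)\le\frac{9\eta_t}{8}$ on $\|\bar{\bm m}^{(t)}-\bm g\|^2$;
\item $\frac{1}{\eta_t}+\frac L2\le\frac{17}{16\eta_t}$ on $\|\bm e^{(t-1)}\|^2$.
\end{itemize}
All three are within the statement. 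Taking $\mathbb E_t$ is then immediate, as you noted.

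Compared with the paper, this route never needs $\nabla f(\bm y^{(t-1)})$ or polarization. The step from $\bm x^{(t-1)}$ carries no perturbation, and re-centring to $\bm y^{(t-1)}$ costs only $\frac L2\|\bm e^{(t-1)}\|^2$. One caveat: the re-centring inequality is the lower quadratic bound, so it uses Lipschitz continuity of $\nabla f$ rather than only the one-sided inequality written in Assumption~\ref{ass:objective}. The paper's proof relies on the same property when it bounds $\|\nabla f(\bm x^{(t-1)})-\nabla f(\bm y^{(t-1)})\|\le L\|\bm x^{(t-1)}-\bm y^{(t-1)}\|$.
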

\begin{proof}
By the definition of the virtual iterate,
\[
    \bm y^{(t)}
    =
    \bm y^{(t-1)}
    -\eta_t\bar{\bm m}^{(t)}
    +\bigl(\bm x^{(t-1)}-\bm y^{(t-1)}\bigr).
\]
Therefore, \(L\)-smoothness gives
\begin{align*}
    f(\bm y^{(t)})
    \le{}&
    f(\bm y^{(t-1)})
    -\eta_t
    \left\langle
        \nabla f(\bm y^{(t-1)}),
        \bar{\bm m}^{(t)}
    \right\rangle
    \\
    &+
    \left\langle
        \nabla f(\bm y^{(t-1)}),
        \bm x^{(t-1)}-\bm y^{(t-1)}
    \right\rangle
    \\
    &+
    \frac{L\eta_t^2}{2}
    \left\|
        \bar{\bm m}^{(t)}
        -\frac{\bm x^{(t-1)}-\bm y^{(t-1)}}{\eta_t}
    \right\|^2.
\end{align*}
Adding and subtracting \(\nabla f(\bm x^{(t-1)})\)
inside the inner product, while retaining the
virtual iterate gap, yields
\begin{align*}
    f(\bm y^{(t)})
    \le{}&
    f(\bm y^{(t-1)})
    -\eta_t
    \left\langle
        \nabla f(\bm y^{(t-1)}),
        \nabla f(\bm x^{(t-1)})
    \right\rangle
    \\
    &-
    \eta_t
    \left\langle
        \nabla f(\bm y^{(t-1)}),
        \bar{\bm m}^{(t)}
        -\nabla f(\bm x^{(t-1)})
        -\frac{\bm x^{(t-1)}-\bm y^{(t-1)}}{\eta_t}
    \right\rangle
    +
    \frac{L\eta_t^2}{2}
    \left\|
        \bar{\bm m}^{(t)}
        -\frac{\bm x^{(t-1)}-\bm y^{(t-1)}}{\eta_t}
    \right\|^2.
\end{align*}

For the first inner product, polarization identity and
\(L\)-smoothness imply
\begin{align*}
    -\eta_t
    \left\langle
        \nabla f(\bm y^{(t-1)}),
        \nabla f(\bm x^{(t-1)})
    \right\rangle
    &=
    -\frac{\eta_t}{2}
    \bigl\|\nabla f(\bm x^{(t-1)})\bigr\|^2
    -\frac{\eta_t}{2}
    \bigl\|\nabla f(\bm y^{(t-1)})\bigr\|^2
    +
    \frac{\eta_t}{2}
    \bigl\|
        \nabla f(\bm x^{(t-1)})
        -\nabla f(\bm y^{(t-1)})
    \bigr\|^2
    \\
    &\le
    -\frac{\eta_t}{2}
    \bigl\|\nabla f(\bm x^{(t-1)})\bigr\|^2
    -\frac{\eta_t}{2}
    \bigl\|\nabla f(\bm y^{(t-1)})\bigr\|^2
    +
    \frac{\eta_tL^2}{2}
    \bigl\|\bm x^{(t-1)}-\bm y^{(t-1)}\bigr\|^2.
\end{align*}
For the second inner product, Young's inequality gives
\begin{align*}
    &-\eta_t
    \left\langle
        \nabla f(\bm y^{(t-1)}),
        \bar{\bm m}^{(t)}
        -\nabla f(\bm x^{(t-1)})
        -\frac{\bm x^{(t-1)}-\bm y^{(t-1)}}{\eta_t}
    \right\rangle
    \\
    &\quad\le
    \frac{\eta_t}{2}
    \bigl\|\nabla f(\bm y^{(t-1)})\bigr\|^2
    +
    \frac{\eta_t}{2}
    \left\|
        \bar{\bm m}^{(t)}
        -\nabla f(\bm x^{(t-1)})
        -\frac{\bm x^{(t-1)}-\bm y^{(t-1)}}{\eta_t}
    \right\|^2.
\end{align*}
Combining these inequalities cancels the squared gradient
norm at \(\bm y^{(t-1)}\), leaving
\begin{align*}
    f(\bm y^{(t)})
    \le &
    f(\bm y^{(t-1)})
    -\frac{\eta_t}{2}
    \bigl\|\nabla f(\bm x^{(t-1)})\bigr\|^2
    +
    \frac{\eta_tL^2}{2}
    \bigl\|\bm x^{(t-1)}-\bm y^{(t-1)}\bigr\|^2
    \\
    &+
    \frac{\eta_t}{2}
    \left\|
        \bar{\bm m}^{(t)}
        -\nabla f(\bm x^{(t-1)})
        -\frac{\bm x^{(t-1)}-\bm y^{(t-1)}}{\eta_t}
    \right\|^2
    +
    \frac{L\eta_t^2}{2}
    \left\|
        \bar{\bm m}^{(t)}
        -\frac{\bm x^{(t-1)}-\bm y^{(t-1)}}{\eta_t}
    \right\|^2.
\end{align*}

Next, inserting \(\nabla f(\bm x^{(t-1)})\) into the
last squared norm gives
\begin{align*}
    \left\|
        \bar{\bm m}^{(t)}
        -\frac{\bm x^{(t-1)}-\bm y^{(t-1)}}{\eta_t}
    \right\|^2
    &\le
    2\bigl\|\nabla f(\bm x^{(t-1)})\bigr\|^2
    +
    2\left\|
        \bar{\bm m}^{(t)}
        -\nabla f(\bm x^{(t-1)})
        -\frac{\bm x^{(t-1)}-\bm y^{(t-1)}}{\eta_t}
    \right\|^2.
\end{align*}
Moreover,
\begin{align*}
    \left\|
        \bar{\bm m}^{(t)}
        -\nabla f(\bm x^{(t-1)})
        -\frac{\bm x^{(t-1)}-\bm y^{(t-1)}}{\eta_t}
    \right\|^2
    &\le
    2\bigl\|
        \bar{\bm m}^{(t)}
        -\nabla f(\bm x^{(t-1)})
    \bigr\|^2
    +
    \frac{2}{\eta_t^2}
    \bigl\|\bm x^{(t-1)}-\bm y^{(t-1)}\bigr\|^2.
\end{align*}
Substituting both bounds and collecting terms, we obtain
\begin{align*}
    f(\bm y^{(t)})
    \le &
    f(\bm y^{(t-1)})
    -\eta_t\left(\frac12-L\eta_t\right)
    \bigl\|\nabla f(\bm x^{(t-1)})\bigr\|^2
    \\
    &+
    \eta_t(1+2L\eta_t)
    \bigl\|
        \bar{\bm m}^{(t)}
        -\nabla f(\bm x^{(t-1)})
    \bigr\|^2
    \\
    &+
    \left(
        \frac1{\eta_t}+2L+\frac{\eta_tL^2}{2}
    \right)
    \bigl\|\bm x^{(t-1)}-\bm y^{(t-1)}\bigr\|^2.
\end{align*}
Since \(L\eta_t\le1/8\),
\[
    \frac12-L\eta_t\ge\frac38,
    \qquad
    1+2L\eta_t\le\frac54,
    \qquad
    1+2L\eta_t+\frac{L^2\eta_t^2}{2}
    \le\frac{161}{128}\le\frac32.
\]
Consequently,
\begin{align*}
    f(\bm y^{(t)})
    \le &
    f(\bm y^{(t-1)})
    -\frac{3\eta_t}{8}
    \bigl\|\nabla f(\bm x^{(t-1)})\bigr\|^2
    +
    \frac{5\eta_t}{4}
    \bigl\|
        \bar{\bm m}^{(t)}
        -\nabla f(\bm x^{(t-1)})
    \bigr\|^2
    +
    \frac{3}{2\eta_t}
    \bigl\|\bm x^{(t-1)}-\bm y^{(t-1)}\bigr\|^2.
\end{align*}
Taking conditional expectations and for \(t\ge2\), the real and virtual updates imply
\begin{align*}
    \bm x^{(t-1)}-\bm y^{(t-1)}
    &=
    \bigl(\bm x^{(t-2)}-\eta_{t-1}\bm m^{(t-1)}\bigr)
    -
    \bigl(\bm x^{(t-2)}
          -\eta_{t-1}\bar{\bm m}^{(t-1)}\bigr)
    \\
    &=
    -\eta_{t-1}
    \bigl(\bm m^{(t-1)}-\bar{\bm m}^{(t-1)}\bigr).
\end{align*}
Substitution proves the Lemma.
\end{proof}

\begin{figure}[t]
\centering
    \begin{minipage}{\linewidth}
    \centering
    \resizebox{\linewidth}{!}{%
    \begin{tikzpicture}[
      x=1.25cm,y=1.20cm,
      font=\small, line cap=round,line join=round,
      >={Stealth[length=2.6mm,width=1.8mm]},
      every node/.style={inner sep=1.5pt},
      actual/.style={draw=black,line width=1pt},
      virtual/.style={draw=virtualred,line width=1pt},
      initial/.style={draw=black,line width=1pt,
        dash pattern=on 0pt off 2.3pt},
      canceled/.style={draw=virtualred,line width=.85pt,
        dash pattern=on 4pt off 2.7pt},
      restart/.style={draw=restartblue,line width=1.2pt,
        dash pattern=on 0pt off 2.4pt},
      callout/.style={draw=black!55,line width=.45pt,
        -{Stealth[open,length=1.6mm,width=1.1mm]},
        shorten <=2pt,shorten >=3pt},
      gradient/.style={font=\small},
      note/.style={font=\sffamily\scriptsize}
    ]
      \definecolor{levelone}{RGB}{231,158,158}
      \definecolor{leveltwo}{RGB}{237,194,127}
      \definecolor{levelthree}{RGB}{229,231,138}
      \definecolor{levelfour}{RGB}{169,218,139}
      \definecolor{levelfive}{RGB}{133,216,179}
      \definecolor{levelsix}{RGB}{133,207,226}
      \definecolor{levelseven}{RGB}{150,175,226}
      \definecolor{leveleight}{RGB}{185,154,223}
      \definecolor{virtualred}{HTML}{C74440}
      \definecolor{restartblue}{HTML}{2D70B3}
    
      \foreach \level/\shade in {
        36/levelone,28/leveltwo,21/levelthree,15/levelfour,
        10/levelfive,6/levelsix,2/levelseven,.15/leveleight}{
        \path[fill=\shade,draw=black!23,line width=.35pt]
          (0,0) ellipse [x radius={sqrt(\level)},y radius={sqrt(\level/5)}];
      }
    
      \coordinate (x0) at (-4.55, .30);
      \coordinate (x1) at (-4.20,1.85);
      \coordinate (x2) at (-2.80,1.15);
      \coordinate (x3) at (-1.50,-.50);
      \coordinate (x4) at (-.50,1.65);
      \coordinate (x5) at ( 1.50,1.00);
      \coordinate (x6) at ( .30,0);
    
      \coordinate (y0) at (x0);
      \coordinate (y1) at ($(x1)+(x3)-(x2)+(x6)-(x5)$); %
      \coordinate (y2) at ($(y1)+(x2)-(x1)$);          %
      \coordinate (y3) at ($(y2)+(x4)-(x3)$);          %
      \coordinate (y4) at ($(y3)+(x5)-(x4)$);          %
      \coordinate (canceledC) at ($(y4)+(.90,-.65)$);
      \coordinate (canceledA) at ($(canceledC)+(1.15,.40)$);
      \coordinate (initialA) at ($(x0)+(x3)-(x2)$);
      \coordinate (initialB) at ($(x0)+(x6)-(x5)$);
    
      \draw[->,initial] (x0) -- (initialA);
      \draw[->,initial] (x0) -- (initialB);
      \draw[canceled] (y4) -- (canceledC) -- (canceledA);
      \draw[->,restart,shorten >=2.5pt] (canceledA) -- (x6);
    
      \foreach \a/\b in {0/1,1/2,2/3,3/4,4/5,5/6}
        \draw[->,actual,shorten >=2.5pt] (x\a) -- (x\b);
      \foreach \a/\b in {0/1,1/2,2/3,3/4}
        \draw[->,virtual,shorten >=2.5pt] (y\a) -- (y\b);
    
      \foreach \i in {0,...,6}{
        \fill[black,opacity=.18] (x\i) circle[radius=3pt];
        \fill[black] (x\i) circle[radius=1.1pt];
      }
      \foreach \point in {y1,y2,y3,canceledC,canceledA}{
        \fill[virtualred,opacity=.18] (\point) circle[radius=3pt];
        \fill[virtualred] (\point) circle[radius=1.1pt];
      }
    
      \node[anchor=south east] at ($(x0)+(-.06,.07)$) {$\bm x^{(0)}=\bm y^{(0)}$};
      \node[above=3pt] at (x1) {$\bm x^{(1)}$};
      \node[above right=2pt] at (x2) {$\bm x^{(2)}$};
      \node[below right=3pt] at (x3) {$\bm x^{(3)}$};
      \node[above=3pt] at (x4) {$\bm x^{(4)}$};
      \node[above right=2pt] at (x5) {$\bm x^{(5)}$};
      \node[text=virtualred,below left=2pt] at (y1) {$\bm y^{(1)}$};
      \node[text=virtualred,below=4pt] at (y2) {$\bm y^{(2)}$};
      \node[text=virtualred,above left=3pt] at (y3) {$\bm y^{(3)}$};
      \node (finalIterateLabel) at (.05,-1.85)
        {$\bm x^{(6)}=\textcolor{virtualred}{\bm y^{(4)}=\bm y^{(5)}=\bm y^{(6)}}$};
    
      \node[gradient] at (-4.90,1.10) {$\frac{1}{n}\bm g_C^{(0)}$};
      \node[gradient] at (-3.45,2.12) {$\frac{1}{q}\bm g_C^{(1)}$};
      \node[gradient] at (-2.72, .25) {$\frac{1}{n}\bm g_A^{(0)}$};
      \node[gradient] at (-1.25,1.35) {$\frac{1}{q^2}\bm g_C^{(2)}$};
      \node[gradient] at ( .55,1.90) {$\frac{1}{q}\bm g_A^{(3)}$};
      \node[gradient] at (1.65, .45) {$\frac{1}{n}\bm g_B^{(0)}$};
    
      \node[gradient] at (-3.70,-.40) {$\frac{1}{n}\bm g_A^{(0)}$};
      \node[gradient] at (-5.42,-1.02) {$\frac{1}{n}\bm g_B^{(0)}$};
    
      \begin{scope}[every node/.append style={gradient,text=virtualred}]
        \node[font=\footnotesize] at (-3.88,.10)
          {$\hat{\bm g}^{(0)}$};
        \node at (-3.70,-1.55) {$\frac{1}{q}\bm g_C^{(1)}$};
        \node at (-2.93,-.57) {$\frac{1}{q^2}\bm g_C^{(2)}$};
        \node at (-.45,.65) {$\frac{1}{q}\bm g_A^{(3)}$};
      \end{scope}
    
      \node[note,text=virtualred,anchor=east] at (1.03,-.59)
        {canceled $\bm g_C^{(4)}$};
      \node[note,text=virtualred] (canceledALabel) at (2.20,-1.25)
        {canceled $\bm g_A^{(5)}$};
      \node[note,text=restartblue] at (3.45,.20) {Restart at $S_1=6$};

      \draw[callout] (finalIterateLabel.north)
        .. controls (-1.20,-1.10) and (-1.15,-.30) .. (x6);
      \draw[callout] (canceledALabel.north)
        .. controls (2.20,-.90) and (2.10,-.70)
        .. ($(canceledC)!.5!(canceledA)$);
    
    \end{tikzpicture}%
    }
    \caption{Actual and virtual iterates with a hard restart.}
    \label{fig:async-gap}
    \end{minipage}

    \par\vspace{\baselineskip}
    \noindent\hfill
\end{figure}

\begin{figure}
    \centering
    \begin{tikzpicture}[
      x={(0cm,-.55cm)},y={(1.10cm,0cm)},
      font=\normalsize,
      flowarrow/.style={line width=.9pt,-{Latex[length=1.8mm,width=1.2mm]},
        line cap=round,line join=round},
      redarrow/.style={flowarrow,draw=yred!85!black},
      bluearrow/.style={flowarrow,draw=zblue!85!black},
      ylab/.style={text=yred!85!black},
      zlab/.style={text=zblue!85!black},
      vectorlabel/.style={font=\normalsize,fill=white,inner sep=2pt},
      reset/.style={draw=resetgray,line width=.6pt,densely dashed,
        -{Latex[length=1.5mm,width=1mm]}},
    ]
    
    \def\alphaOne{0.62}
    \def\alphaTwo{0.58}
    \def\alphaThree{0.68}
    
    \coordinate (O)  at (0,0);
    \coordinate (Y1) at (2.9,1.8);
    \coordinate (Z1) at (-2.3,1.8);
    
    \coordinate (V1) at (-2.0,1.5);
    \coordinate (V2) at (3.2,1.6);
    \coordinate (V3) at (-2.0,1.9);
    \coordinate (Z2) at ($(Z1)+(V1)$);
    \coordinate (Z3) at ($(Z2)+(2.2,1.7)$);
    \coordinate (Z4) at ($(Z3)+(V3)$);
    \coordinate (Y2) at ($(Y1)+\alphaOne*(V1)$);
    \coordinate (Y3) at ($(Y2)+\alphaTwo*(V2)$);
    \coordinate (Y4) at ($(Y3)+\alphaThree*(V3)$);
    
    \coordinate (Y5) at (Y4);
    \coordinate (Y6) at (Y4);
    \coordinate (Z5) at (Z4);
    \coordinate (Z6) at (Z4);
    
    \coordinate (Y7) at ($(Y6)+(2.3,2.8)$);
    \coordinate (Z7) at ($(Y6)+(-1.8,2.55)$);
    
    \draw[reset] (Z6) -- (Y6)
      node[pos=.45,right=9pt,vectorlabel,text=resetgray]
        {reset at $S_1=6$};
    
    \draw[redarrow] (O) -- (Y1)
      node[pos=.46,below left=5pt,vectorlabel,ylab] {$\hat{\bm g}^{(0)}$};
    \draw[bluearrow] (O) -- (Z1)
      node[pos=.49,above left=4pt,vectorlabel,zlab] {$\bar{\bm g}^{(0)}$};
    
    \draw[redarrow] (Y1) -- (Y2);
    \draw[redarrow] (Y2) -- (Y3);
    \draw[redarrow] (Y3) -- (Y4);
    \draw[bluearrow] (Z1) -- (Z2);
    \draw[bluearrow] (Z2) -- (Z3);
    \draw[bluearrow] (Z3) -- (Z4);
    
    \draw[redarrow] (Y6) -- (Y7)
      node[pos=.58,below=5pt,vectorlabel,ylab] {$\hat{\bm g}^{(6)}$};
    \draw[bluearrow] (Y6) -- (Z7)
      node[pos=.62,above=5pt,vectorlabel,zlab] {$\bar{\bm g}^{(6)}$};
    
    \fill[black!80] (O) circle (1.6pt);
    \node[left=8pt] at (O) {$\bm x^{(0)}=\bm y^{(0)}=\bm z^{(0)}$};
    \foreach \k in {1,2,3,4,7}{
      \fill[yred!85!black] (Y\k) circle (1.6pt);
      \fill[zblue!85!black] (Z\k) circle (1.6pt);
    }
    \node[ylab,below=6pt] at (Y1) {$\bm y^{(1)}$};
    \node[ylab,above=5pt] at (Y2) {$\bm y^{(2)}$};
    \node[ylab,below=6pt] at (Y3) {$\bm y^{(3)}$};
    \node[zlab,above=6pt] at (Z1) {$\bm z^{(1)}$};
    \node[zlab,above=6pt] at (Z2) {$\bm z^{(2)}$};
    \node[zlab,below=6pt] at (Z3) {$\bm z^{(3)}$};
    \node[ylab,anchor=north,inner sep=3pt] (Yidentity)
      at ($(Y6)+(3.3,0)$)
      {$\bm y^{(4)}=\bm y^{(5)}=\bm y^{(6)}\textcolor{black}{=\bm x^{(6)}}$};
    \draw[yred!45!black,line width=.35pt] (Y6) -- (Yidentity.north);
    \node[zlab,above=9pt] at (Z6) {$\bm z^{(4)}=\bm z^{(5)}=\bm z^{(6)}$};
    \node[ylab,right=5pt] at (Y7) {$\bm y^{(7)}$};
    \node[zlab,right=5pt] at (Z7) {$\bm z^{(7)}$};
    
    \end{tikzpicture}%
        \caption{Virtual iterates with clipping and a hard restart.}
        \label{fig:virtual-restart}
\end{figure}

Combining this lemma with \citet[Lemmas~9 and~11]{karimireddy2021icml} yields the same convergence rate up to a constant factor.
This construction also provides key intuition for {\myalgo} and its analysis. 
It corresponds to the special case of {\myalgo} in which every round is a synchronization round ($q \to \infty$).

\subsection{Proof of the Main Theorem}~\label{app:stochastic_hard_reset}

Let us restate the main theorem in its precise form.
\begin{mainrestated}[Non-convex Convergence rate]
Let $T\geq2$ and $q>1$.  Suppose that Assumptions
\ref{ass:objective},~\ref{ass:stochastic-gradients}, and
\ref{ass:bdd_gradient}, as well as
Assumption~\ref{ass:stoch-bounded-local-contributions} with $G' \leq 3G$ hold, together with
the local $(c,\delta)$-robust aggregator $\mathbb{A}$, and use $\lambda_t=G$.  
Define initial distance $\Delta$, effective time steps $T_{\mathrm{eff}}$, number of restart rounds $N_T$, and restart ratio $\theta_T$ as
{\small
\begin{equation*}
    \Delta
    \coloneqq
    f(\bm x^{(0)})-f_*, \quad
    T_{\mathrm{eff}}
    \eqqcolon
    1 +\sum_{t=1}^{T-1}\hat\eta_t/\eta, \quad 
    N_T
    \coloneqq
    1+\sum_{t=1}^{T-1}\indi_{\{t=r(t)\}}, \quad
    \theta_T
    \coloneqq
    \frac{N_T}{T_{\mathrm{eff}}}.
\end{equation*}
}
Suppose that $\Delta>0$, and choose the output iterate according to
{\small
\begin{equation} \label{eq:weighted_average_in_main_thm}
    \mathbb P(\bm x^{\mathrm{out}}=\bm x^{(0)})
    =
    \frac{1}{T_{\mathrm{eff}}}, \quad
    \mathbb P(\bm x^{\mathrm{out}}=\bm x^{(t)})
    =
    \frac{\hat\eta_t}{\eta T_{\mathrm{eff}}},
    \qquad t\in\{1,\ldots,T-1\}.
\end{equation}
}
Choose the stepsize $\eta > 0$ as
{\small
\begin{align*}
    \eta
    =
    \min\Bigg\{
        &\sqrt{\frac{\Delta}{L\sigma^2(1-\theta_T+\theta_T/|\gG|)T_{\mathrm{eff}}}},
        &\left(
            \frac{\Delta}
            {L^2T_{\mathrm{eff}}(1-\theta_T)
            \left[
                9(n-1)^2G^2
                +4c\delta\sigma^2
                +\dfrac{8G^2|\gB|^2}{(q-1)^2}
            \right]}
        \right)^{1/3},
        \frac{1}{4L}
    \Bigg\}.
\end{align*}
}
We have
\begin{align*}
    \E\|\nabla f(\bm x^{\mathrm{out}})\|^2
    =\gO\Bigg(
        &\sqrt{\frac{L\sigma^2\Delta}{T_{\mathrm{eff}}}
            \left(1-\theta_T+\frac{\theta_T}{|\gG|}\right)}\\
        &+\left(\frac{L\Delta}{T_{\mathrm{eff}}}\right)^{2/3}
        \left[
            (1-\theta_T)\left(
                9(n-1)^2G^2
                +
                4c\delta\sigma^2
                +\frac{8G^2|\gB|^2}{(q-1)^2}
            \right)
        \right]^{1/3}\\
        &+\frac{L\Delta}{T_{\mathrm{eff}}}
        +\left(
            4c\delta\sigma^2
            +\frac{8G^2|\gB|^2}{(q-1)^2}
        \right)
        \theta_T
    \Bigg).
\end{align*}

\end{mainrestated}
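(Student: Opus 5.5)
We follow the two-stage virtual sequence $\bm x^{(t)}\to\bm y^{(t)}\to\bm z^{(t)}$ sketched at the start of Appendix~\ref{app:conv_proofs}, combined with the descent argument of the synchronous lemma in Section~\ref{app:de-byz}. Write $\hat\eta_t$ for the weight of step $t$ as it will be re-indexed in the virtual sequence: it is $\eta$ at a restart index and $\eta/q^{c}$ at a repeated arrival.

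\textbf{Step 1: the synchronized iterate $\bm y$.} Within round $r$, define $\bm y$ so that the $n$ first arrivals, each computed at $\bm x^{(S_r)}$ and weighted $\eta/n$, are collapsed into a single step $-\eta\hat{\bm g}^{(S_r)}$ taken at the round start. Here $\hat{\bm g}^{(S_r)}=\frac1n\sum_i\mathbb A(\bm v_i;\mathbf S_r)$, which by locality (Definition~\ref{def:local_robust_agg}) equals $\Agg(\bm v_1,\dots,\bm v_n;\mathbf S_r)$. Every repeated arrival becomes a step of size $\eta/q^{c}$ with gradient evaluated at the stale point. Discarded computations do not count, and $\bm y^{(S_{r+1})}=\bm x^{(S_{r+1})}$ after each hard restart, as in Fig.~\ref{fig:async-gap}.

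The gap $\bm x^{(t)}-\bm y^{(t)}$ within a round is a sum of at most $n-1$ first-arrival contributions not yet received. Each has norm at most $\eta G'/n\le 3\eta G/n$ by Assumption~\ref{ass:stoch-bounded-local-contributions}, so $\|\bm x^{(t)}-\bm y^{(t)}\|\le 3\eta(n-1)G/n$. This produces the $9(n-1)^2G^2$ term, and it matters only at non-restart indices, which is the origin of the factor $(1-\theta_T)$.

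\textbf{Step 2: the de-Byzantinized iterate $\bm z$.} Replace $\hat{\bm g}^{(S_r)}$ by the honest average $\bar{\bm g}^{(S_r)}$, and delete all repeated Byzantine steps, resetting $\bm z=\bm y$ at each round start (Fig.~\ref{fig:virtual-restart}). The restart-step error is controlled by Definition~\ref{def:robust_agg}: $\E\|\hat{\bm g}-\bar{\bm g}\|^2\le c\delta\sigma^2$. This needs $\rho^2\lesssim\sigma^2$, which holds because all honest first arrivals are i.i.d.\ at the same point and unchanged by clipping (Remark~\ref{rem:stoch-centered-clipping-bounded}).

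The repeated Byzantine steps are clipped at $\lambda_t=G$ and weighted at most $q^{-c}$, $c\ge1$. Per Byzantine client the total drift is therefore at most $\eta G/(q-1)$, and over $|\gB|$ clients it is at most $\eta G|\gB|/(q-1)$. Squaring gives $G^2|\gB|^2/(q-1)^2$, with constant $8$ after Young's inequality. The discrepancy $\bm y-\bm z$ carries these two errors.

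\textbf{Step 3: descent and averaging.} Run the descent lemma of Section~\ref{app:de-byz} on $f(\bm z^{(t)})$ with stepsizes $\hat\eta_t\le\eta\le1/(4L)$. The honest gradient-noise term gives variance $\sigma^2/|\gG|$ at restart steps (an average of $|\gG|$ i.i.d.\ gradients) and $\sigma^2$ at single repeated steps. Summed with weights $\hat\eta_t$, this yields $\sigma^2(1-\theta_T+\theta_T/|\gG|)$. The gap terms $\frac1{\hat\eta_t}\|\bm x-\bm z\|^2$ are bounded using Steps 1--2, and $L$-smoothness converts $\|\nabla f(\bm z)\|$ to $\|\nabla f(\bm x)\|$.

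The robustness error injected at a restart step enters as $\hat\eta_t\cdot(\text{error})$ at the $N_T$ restart indices, giving the non-vanishing term times $\theta_T$. At non-restart indices it enters as an $L^2\eta^2$-weighted drift, giving the $(1-\theta_T)$ cubic term. Telescoping, dividing by $\eta T_{\mathrm{eff}}$, using the output distribution in (\ref{eq:weighted_average_in_main_thm}), and plugging in the stated $\eta$ via the standard three-regime tuning lemma gives the $\sqrt{\cdot}$, $(\cdot)^{2/3}$ and $L\Delta/T_{\mathrm{eff}}$ terms.

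\textbf{Main obstacle.} The hardest step is Step 2's bookkeeping of which error is paid once per round and which is paid per step. Naively the Byzantine drift enters the gap $\|\bm x-\bm z\|^2/\hat\eta_t$ at every step, and dividing by the tiny $\hat\eta_t=\eta q^{-c}$ would blow it up. The plan is to bound the gap at every step by a round-uniform quantity $\eta^2 B$, where $B=9(n-1)^2G^2+4c\delta\sigma^2+8G^2|\gB|^2/(q-1)^2$. The gap term is then multiplied by $L^2\hat\eta_t$ rather than divided by $\hat\eta_t$. To get this, I would use a smoothness-based descent form $\langle\nabla f(\bm z),\cdot\rangle$ with the gradient evaluated at $\bm x$, as in Koloskova-style analyses, instead of the $1/\eta_t$ version. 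The remaining $\theta_T$-weighted term must then be charged only at the $N_T$ restart steps, where the aggregation error is injected directly rather than as a gap.
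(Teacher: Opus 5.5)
\textbf{Where your plan matches the paper.} Your overall plan is the paper's. You use the re-indexed synchronized iterate $\bm y$, in which the gradient computed at $\bm x^{(t)}$ is applied at index $t$ with weight $\hat\eta_t$. You use a de-Byzantinized $\bm z$ that is reset to $\bm y$ at each round start. Gaps enter with weight $L^2\hat\eta_t$ at non-restart steps. The jump $\bm y^{(t)}-\bm z^{(t)}$ is paid as $\frac1\eta\|\bm y^{(t)}-\bm z^{(t)}\|^2$ only at the $N_T-1$ restart times, exactly as in your ``main obstacle'' paragraph.

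\textbf{Genuine gap: deleting Byzantine steps from $\bm z$.} Because you delete the repeated Byzantine steps, at every index $t>r(t)$ with $j_t\in\gB$ you have $\bm z^{(t+1)}=\bm z^{(t)}$, and there is no term $-\frac{\hat\eta_t}{4}\|\nabla f(\bm x^{(t)})\|^2$. These indices still carry weight $\hat\eta_t>0$ in $T_{\mathrm{eff}}$ and in the law (\ref{eq:weighted_average_in_main_thm}) of $\bm x^{\mathrm{out}}$. Under flooding with slow honest clients they make up almost all of $T_{\mathrm{eff}}-N_T$, up to $|\gB|/(q-1)$ per round. Your telescoped sum then controls only the honest indices. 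For the others you could only fall back on $\|\nabla f\|^2\le G^2$, which does not appear in the bound.

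The paper instead replaces each such step by a phantom honest step $-\hat\eta_t\nabla F(\bm x^{(t)};\xi^{(t)}_{j_t})$ with a fresh sample, so every index is an unbiased descent step. The per-step discrepancy $\nabla F(\bm x^{(k)};\xi^{(k)}_{j_k})-\bm g^{(k)}_{j_k}$ then has norm up to $2G$. This gives $2G|\gB|/(q-1)$, and hence the constant $8=2\cdot 2^2$ after $\|\bm a+\bm b\|^2\le 2\|\bm a\|^2+2\|\bm b\|^2$. Deletion would give $2$, so your constant $8$ does not follow from your own construction.

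\textbf{Second error: the gap $\bm x^{(t)}-\bm y^{(t)}$ in Step 1.} Your description is wrong for the re-indexed $\bm y$ that Step 3 needs. The pending terms are one in-flight computation per client $m\neq j_t$. A client that has already arrived once this round has a repeated computation in flight, with weight $\eta/q^{c}$, $c\ge 1$, not $\eta/n$.

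For example, take three clients. Suppose A and then C make their first arrivals at $t=1,2$, and A's gradient computed at $\bm x^{(1)}$ arrives before B's first one. Then $\bm x^{(2)}-\bm y^{(2)}=\frac{\eta}{3}\bm g_B^{(0)}+\frac{\eta}{q}\bm g_A^{(1)}$. For $q$ near $1$ this violates your bound $\|\bm x^{(t)}-\bm y^{(t)}\|\le 3\eta(n-1)G/n$.

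The correct bound is the paper's $\|\bm x^{(t)}-\bm y^{(t)}\|^2\le 9(n-1)^2\eta^2G^2$: each weight is at most $\eta$, and each vector has norm at most $3G$ by Assumption~\ref{ass:stoch-bounded-local-contributions} or by clipping at $G$. Your final $B$ happens to use this constant, so the rate survives, but your argument does not produce it. If you instead applied repeated arrivals at their arrival times, so that only first arrivals remain in the gap, the descent step would involve $\nabla f(\bm x^{(t-\tau_t)})$ and you would be back to bounding staleness.
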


Let the real iterate be
$$
    \bm x^{(t)} =
        \bm x^{(t-1)} - \eta_t \bm g^{(t-\tau_t)}_{j_t}, \quad \text{where } t - \tau_t \geq r(t),
$$
with the stepsize $\eta_t$ in \eqref{eq:stepsize}.  First arrivals use the local aggregator with side information from the last synchronized round, and repeated arrivals use clipping at radius $G$.

Define the first virtual iterate (de-asynchronized version) as
$$
    \bm y^{(t+1)} = \begin{cases}
    \bm y^{(t)} - \hat \eta_t \bm g^{(t)}_{j_t}, & \text{if } t > r(t),\\
    \bm x^{(t)} - \frac{\eta}{n} \displaystyle\sum_{i=1}^n \bm g^{(t)}_i , & \text{if } t = r(t),
    \end{cases},
\quad \text{and} \quad
    \bm y^{(1)} = \bm x^{(0)} - \frac{\eta}{n} \displaystyle\sum_{i=1}^n \bm g^{(0)}_i.
$$
where
$$
    \hat\eta_t
    \coloneqq
    \begin{cases}
        \eta, & \text{if }t=r(t), \\
        \eta_{\nexxt(t+1,j_t)}, & \text{if }t>r(t) \text{ and the gradient computation is not canceled by the server} ,\\
        0 & t > r(t) \text{ and the gradient computation is canceled by the server}.
    \end{cases}
$$

Furthermore, we define another virtual iterate to remove the effect of clipping and Byzantine updates:
$$
    \bm z^{(t+1)} = \begin{cases}
    \bm z^{(t)} - \hat \eta_t \nabla F(\bm x^{(t)}; \xi_{j_t}^{(t)}), & \text{if } t > r(t),\\
    \bm y^{(t)} - \eta \cdot \frac{1}{|\gG|} \sum_{i \in \gG} \nabla F(\bm x^{(t)}; \xi_i^{(t)}), & \text{if } t = r(t),
    \end{cases}
$$
and
$$
    \bm z^{(1)} = \bm x^{(0)} - \eta \cdot \frac{1}{|\gG|} \sum_{i \in \gG} \nabla F(\bm x^{(0)}; \xi_i^{(0)}).
$$
If the job is canceled (i.e., $\hat \eta_t = 0$), define $\bm y^{(t+1)} = \bm y^{(t)}$ and $\bm z^{(t+1)} = \bm z^{(t)}$.
By construction, we have at every synchronization time $S_r$, $\bm y^{(S_r)} = \bm x^{(S_r)}.$

The definition of $\bm z^{(1)}$ gives one additional stochastic descent
step before the virtual-iterate recursion starts.
\begin{lemma}[Initial stochastic descent step]
\label{lem:stoch-hard-initial-descent}
If $\eta\leq1/L$, then
\begin{align*}
    \E[f(\bm z^{(1)})\mid\bm x^{(0)}]
    &\leq
    f(\bm x^{(0)})
    -\frac{\eta}{2}\|\nabla f(\bm x^{(0)})\|^2
    +\frac{L\eta^2\sigma^2}{2|\gG|}.
\end{align*}
\end{lemma}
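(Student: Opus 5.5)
This is a single step of minibatch SGD started at $\bm x^{(0)}$, so the plan is the standard one-step descent argument from $L$-smoothness. Write $\bar{\bm g}^{(0)} \coloneqq \frac{1}{|\gG|}\sum_{i\in\gG}\nabla F(\bm x^{(0)};\xi_i^{(0)})$, so that $\bm z^{(1)} = \bm x^{(0)} - \eta\,\bar{\bm g}^{(0)}$. Apply Assumption~\ref{ass:objective} with $\bm x=\bm x^{(0)}$ and $\bm y=\bm z^{(1)}$:
\[
    f(\bm z^{(1)}) \le f(\bm x^{(0)}) - \eta\langle \nabla f(\bm x^{(0)}), \bar{\bm g}^{(0)}\rangle + \frac{L\eta^2}{2}\|\bar{\bm g}^{(0)}\|^2 .
\]
Then take the conditional expectation given $\bm x^{(0)}$.

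For the inner product, Assumption~\ref{ass:stochastic-gradients} gives unbiasedness of each honest gradient. Hence $\E[\bar{\bm g}^{(0)}\mid \bm x^{(0)}]=\nabla f(\bm x^{(0)})$, and the linear term becomes $-\eta\|\nabla f(\bm x^{(0)})\|^2$.

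For the quadratic term, use the bias--variance decomposition $\E\|\bar{\bm g}^{(0)}\|^2 = \|\nabla f(\bm x^{(0)})\|^2 + \E\|\bar{\bm g}^{(0)}-\nabla f(\bm x^{(0)})\|^2$. The key input is that the $|\gG|$ honest samples are conditionally i.i.d. (Assumption~\ref{ass:stochastic-gradients}). The cross terms therefore vanish, and the variance of the average is at most $\sigma^2/|\gG|$. Byzantine clients play no role, since $\bm z^{(1)}$ is defined purely through honest gradients.

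Combining the pieces gives
\[
    \E[f(\bm z^{(1)})\mid \bm x^{(0)}] \le f(\bm x^{(0)}) - \eta\Bigl(1-\frac{L\eta}{2}\Bigr)\|\nabla f(\bm x^{(0)})\|^2 + \frac{L\eta^2\sigma^2}{2|\gG|}.
\]
Under $\eta\le 1/L$ we have $1-L\eta/2\ge 1/2$, which yields the claimed bound.

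There is no real obstacle here. The only point requiring care is justifying the $1/|\gG|$ variance reduction through the conditional independence of the honest samples at the common point $\bm x^{(0)}$.
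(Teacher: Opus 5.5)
Your proposal is correct and matches the paper's proof essentially step for step. Both apply $L$-smoothness to $\bm z^{(1)}=\bm x^{(0)}-\eta\,\overline{\bm g}^{(0)}$, use unbiasedness for the linear term and conditional independence for the $\sigma^2/|\gG|$ variance bound on the quadratic term, and then absorb $\frac{L\eta^2}{2}\|\nabla f(\bm x^{(0)})\|^2$ using $L\eta\le 1$.
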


\begin{proof}
Counting the initial round as restart-like, let
\[
    \overline{\bm g}^{(0)}
    \coloneqq
    \frac{1}{|\gG|}\sum_{i\in\gG}
    \nabla F(\bm x^{(0)};\xi_i^{(0)}).
\]
Conditional independence, unbiasedness, and the variance bound give
\begin{align*}
    \E[\overline{\bm g}^{(0)}\mid\bm x^{(0)}]
    &=\nabla f(\bm x^{(0)}),\\
    \E[\|\overline{\bm g}^{(0)}-\nabla f(\bm x^{(0)})\|^2
        \mid\bm x^{(0)}]
    &\leq\frac{\sigma^2}{|\gG|}.
\end{align*}
Therefore, $L$-smoothness and
$\bm z^{(1)}=\bm x^{(0)}-\eta\overline{\bm g}^{(0)}$ imply
\begin{align*}
    \E[f(\bm z^{(1)})\mid\bm x^{(0)}]
    &\leq
    f(\bm x^{(0)})
    -\eta\|\nabla f(\bm x^{(0)})\|^2\\
    &\quad+
    \frac{L\eta^2}{2}
    \left(
        \|\nabla f(\bm x^{(0)})\|^2
        +
        \frac{\sigma^2}{|\gG|}
    \right)\\
    &\leq
    f(\bm x^{(0)})
    -\frac{\eta}{2}\|\nabla f(\bm x^{(0)})\|^2
    +\frac{L\eta^2\sigma^2}{2|\gG|},
\end{align*}
where the last inequality uses $L\eta\leq1$.
\end{proof}

At a synchronization time $t=r(t)$, the aggregate used by the first
virtual iterate satisfies
\[
    \frac{1}{n}\sum_{i=1}^n
    \mathbb A(\bm v^{(t)}_i;S)
    =
    \frac{1}{n}\sum_{i=1}^n\bm g_i^{(t)}.
\]

We use a local $(c,\delta)$-robust aggregator in the
sense of the main text.

\begin{lemma}[Synchronization aggregation error]
\label{lem:stoch-hard-sync-aggregation}
Under Assumption~\ref{ass:stochastic-gradients}, at every synchronization
time $t$,
\begin{align*}
    \E\left[
        \left\|
            \frac{1}{|\gG|}\sum_{i\in\gG}
            \nabla F(\bm x^{(t)};\xi_i^{(t)})
            -\frac{1}{n}\sum_{i=1}^n
            \mathbb A(\bm v^{(t)}_i; \mathbf{S}_{r(t)})
        \right\|^2
        \mid\gF_t
    \right]
    \leq
    2c\delta\sigma^2.
\end{align*}
\end{lemma}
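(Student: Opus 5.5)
The plan is to reduce this directly to Definition~\ref{def:robust_agg}, using the locality in Definition~\ref{def:local_robust_agg}. At a synchronization time $t=r(t)$ every good client $i\in\gG$ computes its first gradient of the round at the common point $\bm x^{(t)}$ with a fresh sample, so $\bm v_i^{(t)}=\nabla F(\bm x^{(t)};\xi_i^{(t)})$. The side information $\mathbf{S}_{r(t)}$ is $\gF_t$-measurable, since it is built from the previous round. Locality then gives
\[
\frac1n\sum_{i=1}^n\mathbb A(\bm v_i^{(t)};\mathbf{S}_{r(t)})=\Agg(\bm v_1^{(t)},\dots,\bm v_n^{(t)};\mathbf{S}_{r(t)}).
\]
The quantity to bound is therefore exactly $\E\|\hat{\bm g}-\overline{\bm g}\|^2$ in the notation of Definition~\ref{def:robust_agg}, conditioned on $\gF_t$. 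Because the first arrivals of a round are aggregated additively over time, the order of arrival does not matter. Byzantine inputs may depend on everything, which the definition allows since it quantifies over arbitrary Byzantine vectors.

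The next step is to verify the hypothesis $\E\|\bm v_i-\bm v_j\|^2\le\rho^2$ for $i,j\in\gG$, conditionally on $\gF_t$. By Assumption~\ref{ass:stochastic-gradients}, the good gradients are conditionally i.i.d. given $\bm x^{(t)}$, unbiased, and have variance at most $\sigma^2$. Hence, for $i\neq j$,
\[
\E[\|\bm v_i-\bm v_j\|^2\mid\gF_t]=\E\|\bm v_i-\nabla f\|^2+\E\|\bm v_j-\nabla f\|^2\le 2\sigma^2,
\]
because the cross term vanishes by conditional independence and unbiasedness. So we may take $\rho^2=2\sigma^2$. The fraction condition $|\gG|\ge(1-\delta)n$ with $\delta<1/2$ holds by the setup.

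Applying Definition~\ref{def:robust_agg} conditionally then yields the bound $c\delta\rho^2=2c\delta\sigma^2$, which is the claim.

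The main obstacle is the conditioning subtlety. We need the definition to hold given $\gF_t$, with the side information fixed and the Byzantine inputs chosen adversarially and possibly depending on the honest samples. I would handle this by noting that the robust-aggregator guarantee is a statement about arbitrary random vectors satisfying the pairwise condition, and applying it under the regular conditional distribution given $\gF_t$.

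A second point to check is that the honest first arrivals of the round really are evaluated at $\bm x^{(t)}$, and not at a stale model. The hard restart guarantees this: in-flight computations are discarded and all clients restart from $\bm x^{(S_r)}$.
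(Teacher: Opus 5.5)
Your proposal is correct and follows essentially the same route as the paper: you bound the pairwise dispersion of the honest gradients at the synchronization point by $\rho^2 = 2\sigma^2$ using conditional independence and unbiasedness, and then apply the local $(c,\delta)$-robust aggregator guarantee conditionally on $\gF_t$ to obtain $c\delta\rho^2 = 2c\delta\sigma^2$. Your additional remarks on locality, the $\gF_t$-measurability of the side information, and the role of the hard restart spell out what the paper's two-line proof leaves implicit.
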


\begin{proof}
For any $i,j\in\gG$, conditional independence and unbiasedness give
\begin{align*}
    \E[\|\nabla F(\bm x^{(t)};\xi_i^{(t)})
        -\nabla F(\bm x^{(t)};\xi_j^{(t)})\|^2\mid\gF_t]
    &=
    \E[\|\nabla F(\bm x^{(t)};\xi_i^{(t)})
        -\nabla f(\bm x^{(t)})\|^2\mid\gF_t]\\
    &\quad+
    \E[\|\nabla F(\bm x^{(t)};\xi_j^{(t)})
        -\nabla f(\bm x^{(t)})\|^2\mid\gF_t]\\
    &\leq2\sigma^2.
\end{align*}
Thus the pairwise radius $\rho^2$ in
Definition~\ref{def:robust_agg} is at most $2\sigma^2$.  Applying the local $(c,\delta)$-robust
aggregator definition~\ref{def:robust_agg} conditionally on $\gF_t$ proves the claim.
\end{proof}

\begin{lemma}[Stochastic virtual-iterate relationship]
\label{lem:stoch-hard-virtual-iterate-relationship}
For every $t$ after the first restart,
\begin{align*}
    \bm y^{(t+1)}-\bm z^{(t+1)}
    &=
    \eta\left[
        \frac{1}{|\gG|}\sum_{i\in\gG}
        \nabla F(\bm x^{(r(t))};\xi_i^{(r(t))})
        -\frac{1}{n}\sum_{i=1}^n
        \mathbb A(\bm v^{(r(t))}_i; \mathbf{S}_{r(t)})
    \right]\\
    &\quad+
    \sum_{k=r(t)+1}^{t}
    \hat\eta_k
    \left[
        \nabla F(\bm x^{(k)};\xi_{j_k}^{(k)})-\bm g_{j_k}^{(k)}
    \right].
\end{align*}
The second sum is zero when $t=r(t)$.  Moreover,
\[
    \E\|\bm y^{(t+1)}-\bm z^{(t+1)}\|^2
    \leq
    \eta^2
    \left(
        4c\delta\sigma^2
        +\frac{8G^2|\gB|^2}{(q-1)^2}
    \right).
\]
\end{lemma}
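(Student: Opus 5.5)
The plan has two parts: first establish the identity by unrolling the two recursions within a round, then bound the two resulting terms separately.

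\textbf{The identity.} Fix $t$ with round start $r(t)$. At the synchronization time $s=r(t)$, the definitions give
\[
\bm y^{(s+1)}=\bm y^{(s)}-\tfrac{\eta}{n}\textstyle\sum_i \bm g_i^{(s)}
\quad\text{and}\quad
\bm z^{(s+1)}=\bm y^{(s)}-\tfrac{\eta}{|\gG|}\textstyle\sum_{i\in\gG}\nabla F(\bm x^{(s)};\xi_i^{(s)}).
\]
Both start from the same point $\bm y^{(s)}=\bm x^{(s)}$. The first arrivals satisfy $\bm g_i^{(s)}=\mathbb A(\bm v_i^{(s)};\mathbf S_{r(t)})$, so the difference $\bm y^{(s+1)}-\bm z^{(s+1)}$ is exactly $\eta$ times the bracket in the statement. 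For $k=r(t)+1,\dots,t$, both recursions subtract $\hat\eta_k$ times a vector, namely $\bm g_{j_k}^{(k)}$ for $\bm y$ and $\nabla F(\bm x^{(k)};\xi_{j_k}^{(k)})$ for $\bm z$. Canceled jobs have $\hat\eta_k=0$ and contribute nothing. Telescoping from $r(t)+1$ to $t$ then yields the identity, and the sum is empty when $t=r(t)$.

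\textbf{Bounding the expectation.} I use $\|a+b\|^2\le 2\|a\|^2+2\|b\|^2$. The first term is handled by Lemma~\ref{lem:stoch-hard-sync-aggregation}, applied conditionally on $\gF_{r(t)}$ and then with the tower rule. This gives at most $2\eta^2\cdot 2c\delta\sigma^2=4c\delta\sigma^2\eta^2$.

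For the second term, fix $k$. If $j_k\in\gG$, Assumption~\ref{ass:bdd_gradient} gives $\|\nabla F\|\le G=\lambda_k$, so clipping is the identity and the summand vanishes. Only Byzantine clients therefore contribute. For a Byzantine index $k$, the vector $\bm g_{j_k}^{(k)}$ has norm at most $G$ after clipping. The "virtual" honest gradient $\nabla F(\bm x^{(k)};\xi^{(k)}_{j_k})$ is interpreted as the honest gradient the client would have computed, and it also has norm at most $G$. Hence each summand has norm at most $2G\hat\eta_k$.

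The sum is then bounded deterministically by the triangle inequality:
\[
\Big\|\sum_k\cdots\Big\|\le 2G\sum_{i\in\gB}\sum_{k:\,j_k=i,\,k>r(t)}\hat\eta_k .
\]
Within a round, the $\hat\eta_k$ assigned to client $i$'s non-first jobs are the throttled stepsizes $\eta/q^{c}$ for $c=1,2,\dots$, each used at most once. This holds because $\hat\eta_k=\eta_{\nexxt(k+1,i)}$, the stepsize at which that job's gradient is eventually applied, and a repeated arrival with counter $c\ge1$ receives $\eta/q^{c}$. The job started at $r(t)$ is the one weighted by $\eta/n$, and it is accounted for in the first term. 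So for each Byzantine client,
\[
\sum_k\hat\eta_k\le \eta\sum_{c\ge1}q^{-c}=\frac{\eta}{q-1}.
\]
Summing over Byzantine clients gives a norm of at most $2G|\gB|\eta/(q-1)$. Squaring and multiplying by 2 yields $8G^2|\gB|^2\eta^2/(q-1)^2$.

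\textbf{Main obstacle.} The delicate step is the bookkeeping for $\hat\eta_k$. I need to confirm that each throttled weight $q^{-c}$ with $c\ge1$ is matched to a distinct job start time inside the round, with no double counting and no contribution from jobs canceled at the hard restart. I also need to check that the sync-time job is not counted again in the sum over $k>r(t)$. I would verify this by following the timeline of a single client between two restarts (as in Figure~\ref{fig:hard-restart-timeline}): each completed job starting at $k$ arrives exactly once, at $\nexxt(k+1,i)$, with counter incremented by one relative to the previous arrival.
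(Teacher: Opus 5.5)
Your proposal is correct and follows the paper's proof essentially step for step. You unroll the two recursions from the common point $\bm y^{(r(t))}=\bm x^{(r(t))}$, split with $\|a+b\|^2\le 2\|a\|^2+2\|b\|^2$, apply Lemma~\ref{lem:stoch-hard-sync-aggregation} to the restart bracket, and bound the remaining sum pathwise by $2G|\gB|\eta/(q-1)$, using that honest summands vanish under clipping at radius $G$. Your explicit check that each client's non-canceled jobs started after $r(t)$ receive distinct weights $\eta q^{-c}$ with $c\ge1$ is a more careful version of the paper's one-line remark about summing the geometric series within the round.
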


\begin{proof}
Let $r=r(t)$.  At the synchronization round,
\[
    \bm y^{(r+1)}-\bm z^{(r+1)}
    =
    \eta\left[
        \frac{1}{|\gG|}\sum_{i\in\gG}
        \nabla F(\bm x^{(r)};\xi_i^{(r)})
        -\frac{1}{n}\sum_{i=1}^n
        \mathbb A(\bm v^{(r)}_i; \mathbf{S}_{r(t)})
    \right].
\]
Every subsequent round $k>r$ satisfies
\[
    \bm y^{(k+1)}-\bm z^{(k+1)}
    =
    \bm y^{(k)}-\bm z^{(k)}
    +\hat\eta_k
    \left[
        \nabla F(\bm x^{(k)};\xi_{j_k}^{(k)})-\bm g_{j_k}^{(k)}
    \right].
\]
Unrolling this recursion proves the first display.

For every good client, clipping is inactive by
Assumption~\ref{ass:bdd_gradient}, and hence
$\nabla F(\bm x^{(k)};\xi_{j_k}^{(k)})-\bm g_{j_k}^{(k)}=0$.
The sum below contains only repeated-arrival contributions, whose norms
are at most $G$ by non-restart clipping at radius $G$.  Together with
Assumption~\ref{ass:bdd_gradient}, this gives
\begin{align*}
    \left\|
        \sum_{k=r(t)+1}^{t}
        \frac{\hat\eta_k}{\eta}
        \left[
            \nabla F(\bm x^{(k)};\xi_{j_k}^{(k)})-\bm g_{j_k}^{(k)}
        \right]
    \right\|
    &\leq
    2G\sum_{k=r(t)+1}^{t}
    \frac{\hat\eta_k}{\eta}
    \indi_{\{j_k\in\gB\}}\\
    &\leq
    \frac{2G|\gB|}{q-1}.
\end{align*}
The last inequality follows by summing, for each Byzantine client, the
geometric sequence $q^{-1}+q^{-2}+\cdots$ within the round.  Therefore,
Lemma~\ref{lem:stoch-hard-sync-aggregation} gives
\begin{align*}
    \E\|\bm y^{(t+1)}-\bm z^{(t+1)}\|^2
    &\leq
    \eta^2\left(
    4c\delta\sigma^2
    +\frac{8G^2|\gB|^2}{(q-1)^2}
    \right).
\end{align*}
This proves the result.
\end{proof}

\begin{lemma}[Real--virtual iterate difference after synchronization]
\label{lem:stoch-hard-real-virtual-difference}
Suppose Assumption~\ref{ass:stoch-bounded-local-contributions} holds, and
let $r$ be a synchronization round.  At non-restart times $t>r(t)$,
under the same hard-restart indexing as in the deterministic proof,
\[
    \bm x^{(t)}-\bm y^{(t)}
    =
    \sum_{m\in[n]\setminus\{j_t\}}
    \eta_{\nexxt(t,m)}
    \bm g_m^{(\prev(t,m))}.
\]
Consequently,
\[
    \|\bm x^{(t)}-\bm y^{(t)}\|^2
    \leq
    9(n-1)^2\eta^2G^2
    \quad\text{almost surely}.
\]
At restart times, $\bm x^{(S_r)}=\bm y^{(S_r)}$, so the gap is zero.
\end{lemma}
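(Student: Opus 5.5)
The plan is a bookkeeping argument within a single round $r=r(t)$, with $S_r\le t$, comparing the two unrolled expressions for $\bm x^{(t)}$ and $\bm y^{(t)}$ from the common point $\bm x^{(S_r)}=\bm y^{(S_r)}$. The restart equality holds by construction: the $\bm y$ recursion is reset to $\bm x$ at each synchronization time. The only issue is the first-arrival step, where $\bm y^{(S_r+1)}=\bm x^{(S_r)}-\frac{\eta}{n}\sum_i\bm g_i^{(S_r)}$ already contains all $n$ first-arrival contributions.

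\textbf{Step 1: unroll both iterates.} Unrolling the real iterate gives
\[
\bm x^{(t)}=\bm x^{(S_r)}-\sum_{k=S_r+1}^{t}\eta_k\bm g_{j_k}^{(k-\tau_k)} .
\]
Here each arrival at time $k$ carries a gradient computed at $k-\tau_k=\prev(k,j_k)\ge S_r$, with weight $\eta_k$. Unrolling the virtual iterate gives
\[
\bm y^{(t)}=\bm x^{(S_r)}-\frac{\eta}{n}\sum_{i=1}^n\bm g_i^{(S_r)}-\sum_{k=S_r+1}^{t-1}\hat\eta_k\bm g_{j_k}^{(k)} .
\]
In this expression the gradient computed at $k$ is weighted by $\hat\eta_k=\eta_{\nexxt(k+1,j_k)}$, the stepsize it will receive when it actually arrives.

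\textbf{Step 2: match terms.} I reindex the real sum by the time each gradient was \emph{computed} rather than the time it arrived. Each computation started at time $s\in\{S_r,\dots,t-1\}$ by client $m$ enters $\bm y^{(t)}$ with its eventual stepsize. It enters $\bm x^{(t)}$ with the same stepsize if it arrived by time $t$, and not at all otherwise. For $s=S_r$, the weight $\eta/n$ of each first arrival agrees with $\eta_{\nexxt}$ because first arrivals receive $\eta/n$. Hence $\bm x^{(t)}-\bm y^{(t)}$ equals the sum, over clients $m$ whose current computation is still in flight at time $t$, of $\eta_{\nexxt(t,m)}\bm g_m^{(\prev(t,m))}$. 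Canceled jobs have $\hat\eta=0$ and contribute to neither iterate.

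\textbf{Step 3: identify the in-flight set.} Every client has exactly one computation in flight. The client $j_t$ delivers at time $t$, and its delivered gradient appears in both $\bm x^{(t)}$ and $\bm y^{(t)}$. Its new computation, started at $t$, appears in neither. So the in-flight set is exactly $[n]\setminus\{j_t\}$, which gives the displayed identity. The main obstacle I expect is the indexing convention at this step: whether $\bm y^{(t)}$ includes the step indexed by $t$, and how $\prev$ and $\nexxt$ are anchored. I will fix this by checking the identity at $t=S_r+1$ directly. At that time the real iterate has applied only $j_{S_r+1}$'s first arrival, while $\bm y$ has applied all $n$ first arrivals, so the difference is the $n-1$ pending first arrivals with weights $\eta/n$, consistent with the formula.

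\textbf{Step 4: the bound.} Each pending term has weight $\eta_{\nexxt(t,m)}\le\eta$ (at most $\eta/n$ or $\eta/q\le\eta$). Each vector has norm at most $G'\le3G$: first arrivals are bounded by Assumption~\ref{ass:stoch-bounded-local-contributions}, and clipped repeats by $\lambda_t=G$. The triangle inequality over the $n-1$ terms then yields $\|\bm x^{(t)}-\bm y^{(t)}\|\le3(n-1)\eta G$. Squaring gives $9(n-1)^2\eta^2G^2$, which holds almost surely since nothing was averaged in expectation.
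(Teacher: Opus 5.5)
Your proposal is correct and follows essentially the same route as the paper. You unroll both recursions from $\bm x^{(S_r)}$ and observe that every computation started in $[S_r,t-1]$ and delivered by time $t$ enters both iterates with the same weight, while canceled jobs carry $\hat\eta=0$. Only the in-flight vectors of the $n-1$ clients $m\neq j_t$ survive, each with weight at most $\eta$ and norm at most $3G$.

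The differences are cosmetic. You bound the norm with the triangle inequality, whereas the paper uses $\|\sum_m \bm u_m\|^2\le(n-1)\sum_m\|\bm u_m\|^2$; both give $9(n-1)^2\eta^2G^2$. Also, strictly speaking, $\bm x^{(S_r)}=\bm y^{(S_r)}$ is not a literal reset of the recursion, since the reset defines $\bm y^{(S_r+1)}$. It instead follows from your same count applied to the previous round at $t=S_r$, where every in-flight job is canceled and so carries zero weight.
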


\begin{proof}
For $t>r(t)$, denote $\bm a_t=\bm x^{(t)}-\bm y^{(t)}$.  Exactly as in the deterministic
proof, unrolling the two recursions after $r(t)$ gives
\begin{align*}
    \bm a_t
    &=
    \frac{\eta}{n}\sum_{i=1}^n\bm g_i^{(r(t))}
    -\sum_{k=r(t)+1}^{t}
        \eta_k\bm g_{j_k}^{(k-\tau_k)}
    +\sum_{k=r(t)+1}^{t-1}
        \hat\eta_k\bm g_{j_k}^{(k)}\\
    &=
    \sum_{m\in[n]\setminus\{j_t\}}
    \eta_{\nexxt(t,m)}
    \bm g_m^{(\prev(t,m))}.
\end{align*}
Because $k-\tau_k\geq r(k)$, all but at most one pending vector per client
cancel.  Canceled computations have weight zero, so at most $n-1$ terms remain.
First-arrival contributions have norm at most $3G$ by
Assumption~\ref{ass:stoch-bounded-local-contributions}, and repeated-arrival
contributions have norm at most $G\leq3G$ by clipping at radius $G$.
Using $\eta_{\nexxt(t,m)}\leq\eta$ therefore gives
\begin{align*}
    \|\bm x^{(t)}-\bm y^{(t)}\|^2
    &\leq
    (n-1)
    \sum_{m\in[n]\setminus\{j_t\}}
    \eta_{\nexxt(t,m)}^2
    \|\bm g_m^{(\prev(t,m))}\|^2\\
    &\leq
    9(n-1)^2\eta^2G^2.
\end{align*}
At restart times the gap is zero by construction.  This bound is pathwise
and therefore also holds after taking expectation.
\end{proof}

\begin{lemma}[Descent Lemma for Non-Restart Rounds]
\label{lem:stoch-hard-non-restart-descent}
If $t>r(t)$ and $\hat\eta_t\leq1/(2L)$, then
\begin{align*}
    \E[f(\bm z^{(t+1)})\mid\gF_t]
    &\leq
    f(\bm z^{(t)})
    -\frac{\hat\eta_t}{4}
    \|\nabla f(\bm x^{(t)})\|^2
    -\frac{\hat\eta_t}{2}
    \|\nabla f(\bm z^{(t)})\|^2\\
    &\quad+
    L^2\hat\eta_t
    \|\bm x^{(t)}-\bm y^{(t)}\|^2
    +L^2\hat\eta_t
    \|\bm y^{(t)}-\bm z^{(t)}\|^2\\
    &\quad+
    \frac{L\sigma^2\hat\eta_t^2}{2}.
\end{align*}
\end{lemma}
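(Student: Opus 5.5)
We would prove the lemma from $L$-smoothness applied to the one-step recursion of the de-Byzantinized iterate. For $t>r(t)$ we have $\bm z^{(t+1)}=\bm z^{(t)}-\hat\eta_t\nabla F(\bm x^{(t)};\xi_{j_t}^{(t)})$, and the sample $\xi_{j_t}^{(t)}$ is fresh with respect to $\gF_t$. Here $\hat\eta_t$ depends only on the future arrival pattern of $j_t$ (and whether the job is canceled), not on the sample itself; we treat the scheduling as independent of $\xi$, as is implicit in the paper. Under this, Assumption~\ref{ass:stochastic-gradients} gives
\[
\E[\nabla F(\bm x^{(t)};\xi_{j_t}^{(t)})\mid\gF_t]=\nabla f(\bm x^{(t)}),
\qquad
\E\|\nabla F(\bm x^{(t)};\xi_{j_t}^{(t)})\|^2\le\|\nabla f(\bm x^{(t)})\|^2+\sigma^2 .
\]
If $\hat\eta_t=0$ (canceled job), the claim is trivial since $\bm z^{(t+1)}=\bm z^{(t)}$.

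Smoothness then yields
\[
\E[f(\bm z^{(t+1)})\mid\gF_t]\le f(\bm z^{(t)})-\hat\eta_t\langle\nabla f(\bm z^{(t)}),\nabla f(\bm x^{(t)})\rangle+\frac{L\hat\eta_t^2}{2}\bigl(\|\nabla f(\bm x^{(t)})\|^2+\sigma^2\bigr).
\]
The inner product is handled by the polarization identity, exactly as in the synchronous descent lemma of Appendix~\ref{app:de-byz}:
\[
-\langle a,b\rangle=-\tfrac12\|a\|^2-\tfrac12\|b\|^2+\tfrac12\|a-b\|^2 .
\]
This produces $-\frac{\hat\eta_t}{2}\|\nabla f(\bm z^{(t)})\|^2-\frac{\hat\eta_t}{2}\|\nabla f(\bm x^{(t)})\|^2+\frac{\hat\eta_t}{2}\|\nabla f(\bm x^{(t)})-\nabla f(\bm z^{(t)})\|^2$. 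By $L$-smoothness and the triangle inequality $\bm x^{(t)}-\bm z^{(t)}=(\bm x^{(t)}-\bm y^{(t)})+(\bm y^{(t)}-\bm z^{(t)})$, we get
\[
\tfrac{\hat\eta_t}{2}L^2\|\bm x^{(t)}-\bm z^{(t)}\|^2\le L^2\hat\eta_t\|\bm x^{(t)}-\bm y^{(t)}\|^2+L^2\hat\eta_t\|\bm y^{(t)}-\bm z^{(t)}\|^2 ,
\]
which are precisely the two gap terms in the statement. Both are $\gF_t$-measurable, so they pass through the conditional expectation.

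Finally we absorb the quadratic term: since $\hat\eta_t\le 1/(2L)$, $\frac{L\hat\eta_t^2}{2}\|\nabla f(\bm x^{(t)})\|^2\le\frac{\hat\eta_t}{4}\|\nabla f(\bm x^{(t)})\|^2$. Combined with $-\frac{\hat\eta_t}{2}$, this leaves $-\frac{\hat\eta_t}{4}\|\nabla f(\bm x^{(t)})\|^2$, and the remaining $\frac{L\sigma^2\hat\eta_t^2}{2}$ is the noise term. The only delicate point is the measurability claim above: that $\hat\eta_t$ (defined via $\nexxt(t+1,j_t)$) and the iterates $\bm x^{(t)},\bm y^{(t)},\bm z^{(t)}$ are determined before $\xi_{j_t}^{(t)}$ is drawn, so that unbiasedness applies conditionally. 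We would state this as the convention that arrival times are independent of gradient samples, after which the computation is routine.
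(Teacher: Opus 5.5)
Your proposal is correct and matches the paper's proof step for step. Both use $L$-smoothness with conditional unbiasedness, then the polarization identity, then the bound $\|\nabla f(\bm x^{(t)})-\nabla f(\bm z^{(t)})\|^2\le 2L^2\|\bm x^{(t)}-\bm y^{(t)}\|^2+2L^2\|\bm y^{(t)}-\bm z^{(t)}\|^2$, and finally absorb $\frac{L\hat\eta_t^2}{2}\|\nabla f(\bm x^{(t)})\|^2$ using $\hat\eta_t\le 1/(2L)$. You also point out that $\hat\eta_t$ is defined through the future arrival $\nexxt(t+1,j_t)$, so conditional unbiasedness requires the schedule to be independent of the samples or included in the conditioning; the paper leaves this implicit.
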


\begin{proof}
By $L$-smoothness and conditional unbiasedness,
\begin{align*}
    \E[f(\bm z^{(t+1)})\mid\gF_t]
    &\leq
    f(\bm z^{(t)})
    -\hat\eta_t
    \langle
        \nabla f(\bm z^{(t)}),
        \nabla f(\bm x^{(t)})
    \rangle\\
    &\quad+
    \frac{L\hat\eta_t^2}{2}
    \left(
        \|\nabla f(\bm x^{(t)})\|^2+\sigma^2
    \right).
\end{align*}
Using
\[
    -\langle\bm a,\bm b\rangle
    =
    -\frac12\|\bm a\|^2
    -\frac12\|\bm b\|^2
    +\frac12\|\bm a-\bm b\|^2,
\]
we obtain
\begin{align*}
    \E[f(\bm z^{(t+1)})\mid\gF_t]
    &\leq
    f(\bm z^{(t)})
    -\left(
        \frac{\hat\eta_t}{2}
        -\frac{L\hat\eta_t^2}{2}
    \right)
    \|\nabla f(\bm x^{(t)})\|^2\\
    &\quad-
    \frac{\hat\eta_t}{2}
    \|\nabla f(\bm z^{(t)})\|^2
    +\frac{\hat\eta_t}{2}
    \|\nabla f(\bm z^{(t)})-
      \nabla f(\bm x^{(t)})\|^2\\
    &\quad+
    \frac{L\sigma^2\hat\eta_t^2}{2}.
\end{align*}
Finally,
\begin{align*}
    \frac{\hat\eta_t}{2}
    \|\nabla f(\bm z^{(t)})-
      \nabla f(\bm x^{(t)})\|^2
    &\leq
    L^2\hat\eta_t
    \|\bm x^{(t)}-\bm y^{(t)}\|^2\\
    &\quad+
    L^2\hat\eta_t
    \|\bm y^{(t)}-\bm z^{(t)}\|^2,
\end{align*}
and $\hat\eta_t\leq1/(2L)$ implies
$\hat\eta_t/2-L\hat\eta_t^2/2\geq\hat\eta_t/4$.
\end{proof}

\begin{lemma}[Descent Lemma for Restart Rounds]
\label{lem:stoch-hard-restart-descent}
If $t=r(t)$ and $4L\eta\leq1$, then $\hat\eta_t=\eta$ and
\begin{align*}
    \E[f(\bm z^{(t+1)})\mid\gF_t]
    &\leq
    f(\bm z^{(t)})
    -\frac{\eta}{4}
    \|\nabla f(\bm x^{(t)})\|^2\\
    &\quad+
    L^2\eta
    \|\bm x^{(t)}-\bm y^{(t)}\|^2
    +L^2\eta
    \|\bm y^{(t)}-\bm z^{(t)}\|^2\\
    &\quad+
    \frac{1}{\eta}
    \|\bm y^{(t)}-\bm z^{(t)}\|^2
    +\frac{L\eta^2\sigma^2}{2|\gG|}.
\end{align*}
\end{lemma}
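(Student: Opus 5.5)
At a restart time $t=r(t)$ we have $\hat\eta_t=\eta$, and the $\bm z$ recursion reads $\bm z^{(t+1)}=\bm y^{(t)}-\eta\,\overline{\bm g}^{(t)}$ with $\overline{\bm g}^{(t)}\coloneqq\frac{1}{|\gG|}\sum_{i\in\gG}\nabla F(\bm x^{(t)};\xi_i^{(t)})$. Because $\bm y^{(t)}$ sits where $\bm z^{(t)}$ would be expected, I would write
\[
\bm z^{(t+1)}=\bm z^{(t)}-\eta\,\overline{\bm g}^{(t)}+\bm e_t,\qquad \bm e_t\coloneqq\bm y^{(t)}-\bm z^{(t)},
\]
where $\bm e_t$ is $\gF_t$-measurable. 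The proof then mirrors the synchronous virtual-iterate lemma of Section~\ref{app:de-byz}: I treat $\bm e_t/\eta$ as a deterministic perturbation of the stochastic step.

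\textbf{Step 1.} Apply $L$-smoothness at $\bm z^{(t)}$ and take $\E[\cdot\mid\gF_t]$. Conditional independence and unbiasedness give $\E[\overline{\bm g}^{(t)}\mid\gF_t]=\nabla f(\bm x^{(t)})$, with conditional variance at most $\sigma^2/|\gG|$. This yields
\[
f(\bm z^{(t)})-\eta\langle\nabla f(\bm z^{(t)}),\nabla f(\bm x^{(t)})-\bm e_t/\eta\rangle+\tfrac{L\eta^2}{2}\bigl(\|\nabla f(\bm x^{(t)})-\bm e_t/\eta\|^2+\sigma^2/|\gG|\bigr).
\]
The noise part of the last term gives exactly $L\eta^2\sigma^2/(2|\gG|)$.

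\textbf{Step 2.} Split the inner product as $-\eta\langle\nabla f(\bm z^{(t)}),\nabla f(\bm x^{(t)})\rangle+\langle\nabla f(\bm z^{(t)}),\bm e_t\rangle$. Handle the first piece by the polarization identity, as in Lemma~\ref{lem:stoch-hard-non-restart-descent}. That produces $-\frac\eta2\|\nabla f(\bm x^{(t)})\|^2-\frac\eta2\|\nabla f(\bm z^{(t)})\|^2+\frac\eta2\|\nabla f(\bm z^{(t)})-\nabla f(\bm x^{(t)})\|^2$. Bound the last of these, via smoothness and $(a+b)^2\le2a^2+2b^2$ through $\bm y^{(t)}$, by $L^2\eta\|\bm x^{(t)}-\bm y^{(t)}\|^2+L^2\eta\|\bm y^{(t)}-\bm z^{(t)}\|^2$.

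\textbf{Step 3.} Control the perturbation cross term by Young's inequality:
\[
\langle\nabla f(\bm z^{(t)}),\bm e_t\rangle\le\tfrac{\eta}{2}\|\nabla f(\bm z^{(t)})\|^2+\tfrac{1}{2\eta}\|\bm e_t\|^2.
\]
This exactly cancels the $-\frac\eta2\|\nabla f(\bm z^{(t)})\|^2$ from Step 2, which is why no negative $\bm z$-gradient term appears in the statement.

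\textbf{Step 4.} For the quadratic term, use $\|\nabla f(\bm x^{(t)})-\bm e_t/\eta\|^2\le2\|\nabla f(\bm x^{(t)})\|^2+2\|\bm e_t\|^2/\eta^2$. This contributes $L\eta^2\|\nabla f(\bm x^{(t)})\|^2+L\|\bm e_t\|^2$.

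\textbf{Collecting terms.} The coefficient of $\|\nabla f(\bm x^{(t)})\|^2$ is $-\frac\eta2+L\eta^2\le-\frac\eta4$ since $L\eta\le1/4$. The coefficient of $\|\bm e_t\|^2$ is $\frac1{2\eta}+L\le\frac1{2\eta}+\frac1{4\eta}\le\frac1\eta$.

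\textbf{Main obstacle.} The only delicate point is this constant bookkeeping: the Young split must be chosen so that the $\nabla f(\bm z^{(t)})$ terms cancel exactly, while the residual $\|\bm e_t\|^2$ coefficients stay below $1/\eta$. Using $4L\eta\le1$ makes both work.

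I would also note the following, although Lemma~\ref{lem:stoch-hard-restart-descent} as stated keeps a general form. At a restart, $\bm y^{(t)}=\bm x^{(t)}$, so the $\|\bm x^{(t)}-\bm y^{(t)}\|^2$ term actually vanishes. Its size would later be bounded by Lemma~\ref{lem:stoch-hard-virtual-iterate-relationship}.
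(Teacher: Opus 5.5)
Your proposal is correct and matches the paper's proof step for step. Like you, the paper rewrites the restart update as $\bm z^{(t+1)}=\bm z^{(t)}+(\bm y^{(t)}-\bm z^{(t)})-\eta\overline{\bm g}^{(t)}$, applies smoothness with conditional unbiasedness, uses polarization on $-\eta\langle\nabla f(\bm z^{(t)}),\nabla f(\bm x^{(t)})\rangle$ and Young's inequality on $\langle\nabla f(\bm z^{(t)}),\bm y^{(t)}-\bm z^{(t)}\rangle$ to cancel the $\|\nabla f(\bm z^{(t)})\|^2$ terms, and concludes from the same coefficients $-\eta(\tfrac12-L\eta)\le-\tfrac{\eta}{4}$ and $L+\tfrac{1}{2\eta}\le\tfrac{1}{\eta}$.
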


\begin{proof}
Conditional independence gives
\begin{align*}
    \E\left[
        \frac{1}{|\gG|}\sum_{i\in\gG}
        \nabla F(\bm x^{(t)};\xi_i^{(t)})
        \mathrel{\Big|}\gF_t
    \right]
    &=\nabla f(\bm x^{(t)}),\\
    \E\left[
        \left\|
            \frac{1}{|\gG|}\sum_{i\in\gG}
            \nabla F(\bm x^{(t)};\xi_i^{(t)})
            -\nabla f(\bm x^{(t)})
        \right\|^2
        \mid\gF_t
    \right]
    &\leq\frac{\sigma^2}{|\gG|}.
\end{align*}
The restart update is
\[
    \bm z^{(t+1)}
    =
    \bm z^{(t)}
    +\bm y^{(t)}-\bm z^{(t)}
    -\frac{\eta}{|\gG|}\sum_{i\in\gG}
    \nabla F(\bm x^{(t)};\xi_i^{(t)}).
\]
By $L$-smoothness and conditional unbiasedness,
\begin{align*}
    \E[f(\bm z^{(t+1)})\mid\gF_t]
    &\leq
    f(\bm z^{(t)})
    -\eta\langle
        \nabla f(\bm z^{(t)}),
        \nabla f(\bm x^{(t)})
    \rangle\\
    &\quad+
    \left\langle
        \nabla f(\bm z^{(t)}),
        \bm y^{(t)}-\bm z^{(t)}
    \right\rangle
    +L\eta^2\|\nabla f(\bm x^{(t)})\|^2\\
    &\quad+
    L\|\bm y^{(t)}-\bm z^{(t)}\|^2
    +\frac{L\eta^2\sigma^2}{2|\gG|}.
\end{align*}
The same inner-product identity as above and Young's inequality give
\begin{align*}
    \left\langle
        \nabla f(\bm z^{(t)}),
        \bm y^{(t)}-\bm z^{(t)}
    \right\rangle
    \leq
    \frac{\eta}{2}\|\nabla f(\bm z^{(t)})\|^2
    +\frac{1}{2\eta}\|\bm y^{(t)}-\bm z^{(t)}\|^2.
\end{align*}
Moreover,
\begin{align*}
    \frac{\eta}{2}
    \|\nabla f(\bm z^{(t)})-
      \nabla f(\bm x^{(t)})\|^2
    &\leq
    L^2\eta
    \|\bm x^{(t)}-\bm y^{(t)}\|^2\\
    &\quad+
    L^2\eta
    \|\bm y^{(t)}-\bm z^{(t)}\|^2.
\end{align*}
Combining these estimates gives the coefficient
\[
    -\eta\left(\frac12-L\eta\right)
\]
in front of $\|\nabla f(\bm x^{(t)})\|^2$.  The total coefficient in front
of $\|\bm y^{(t)}-\bm z^{(t)}\|^2$ is
\[
    L^2\eta+L+\frac{1}{2\eta}.
\]
Since $4L\eta\leq1$,
\[
    \frac12-L\eta\geq\frac14,
    \qquad
    L+\frac{1}{2\eta}\leq\frac{3}{4\eta}\leq\frac{1}{\eta},
\]
which proves the result.
\end{proof}

\begin{theorem}[Combined Stochastic Descent Lemma]
\label{thm:stoch-hard-combined-descent}
If $\eta\leq1/(4L)$, then for all $t$,
\begin{align*}
    \E[f(\bm z^{(t+1)})\mid\gF_t]
    &\leq
    f(\bm z^{(t)})
    -\frac{\hat\eta_t}{4}
    \|\nabla f(\bm x^{(t)})\|^2\\
    &\quad+
    L^2\hat\eta_t
    \|\bm x^{(t)}-\bm y^{(t)}\|^2
    +L^2\hat\eta_t
    \|\bm y^{(t)}-\bm z^{(t)}\|^2\\
    &\quad+
    \frac{L\hat\eta_t^2\sigma^2}{2}
    \left(
        \indi_{\{t>r(t)\}}
        +\frac{\indi_{\{t=r(t)\}}}{|\gG|}
    \right)
    +\frac{1}{\eta}
    \|\bm y^{(t)}-\bm z^{(t)}\|^2
    \indi_{\{t=r(t)\}}.
\end{align*}
\end{theorem}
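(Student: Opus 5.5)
The combined inequality in Theorem~\ref{thm:stoch-hard-combined-descent} follows by splitting on whether $t$ is a restart time, invoking Lemma~\ref{lem:stoch-hard-non-restart-descent} or Lemma~\ref{lem:stoch-hard-restart-descent}, and checking that each bound is dominated by the unified right-hand side. The indicator terms are set up so that in each case exactly one of the two variance/extra-gap terms is active.

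\emph{Non-restart times} $t>r(t)$. We first verify the hypothesis $\hat\eta_t\leq 1/(2L)$. By definition, $\hat\eta_t$ is either $0$ (canceled job) or $\eta_{\nexxt(t+1,j_t)}$. The latter equals $\eta/n$ or $\eta/q^{c}$ with $c\ge1$, $q>1$, so in either case $\hat\eta_t\leq\eta\leq 1/(4L)\leq 1/(2L)$. If $\hat\eta_t=0$, then $\bm z^{(t+1)}=\bm z^{(t)}$ by convention and every term on the right-hand side is $0$ except $f(\bm z^{(t)})$, so the claim is trivial. Otherwise Lemma~\ref{lem:stoch-hard-non-restart-descent} applies directly. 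We then drop the nonpositive term $-\tfrac{\hat\eta_t}{2}\|\nabla f(\bm z^{(t)})\|^2$. With $\indi_{\{t>r(t)\}}=1$ and $\indi_{\{t=r(t)\}}=0$, the noise term is exactly $\tfrac{L\hat\eta_t^2\sigma^2}{2}$ and the extra $\tfrac1\eta\|\bm y^{(t)}-\bm z^{(t)}\|^2$ term vanishes. This matches the statement.

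\emph{Restart times} $t=r(t)$. Here $\hat\eta_t=\eta$ by definition, and $4L\eta\le1$ is the assumption. Lemma~\ref{lem:stoch-hard-restart-descent} then gives exactly the claimed bound with $\hat\eta_t=\eta$. The noise term becomes $\tfrac{L\eta^2\sigma^2}{2|\gG|}$ and the extra term is $\tfrac1\eta\|\bm y^{(t)}-\bm z^{(t)}\|^2$, consistent with the indicators.

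The only subtle point is the measurability and conditioning convention in the non-restart case. The bound in Lemma~\ref{lem:stoch-hard-non-restart-descent} must be read with $\hat\eta_t$ and the cancel event treated as $\gF_t$-measurable, or at least as not affecting the conditional unbiasedness of $\nabla F(\bm x^{(t)};\xi_{j_t}^{(t)})$. I would state this explicitly, noting that $\hat\eta_t$ is determined by scheduling and not by the sample $\xi_{j_t}^{(t)}$, as implicitly assumed in the lemma. The initial step $t=0$ is handled separately by Lemma~\ref{lem:stoch-hard-initial-descent}, which is consistent with the restart case since the initial round counts as restart-like.
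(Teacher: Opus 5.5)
Your proposal is correct and follows the paper's proof: you split on $t>r(t)$ versus $t=r(t)$, apply the non-restart descent lemma (dropping $-\frac{\hat\eta_t}{2}\|\nabla f(\bm z^{(t)})\|^2$) in the first case, and apply the restart descent lemma in the second. The paper leaves three points implicit that you make explicit, and each is sound: that $\hat\eta_t\le\eta\le 1/(4L)\le 1/(2L)$, which discharges the non-restart lemma's hypothesis; the trivial canceled-job case $\hat\eta_t=0$; and the caveat that $\hat\eta_t$ must not depend on the sample $\xi_{j_t}^{(t)}$.
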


\begin{proof}
If $t>r(t)$, apply
Lemma~\ref{lem:stoch-hard-non-restart-descent} and drop the nonpositive term
\[
    -\frac{\hat\eta_t}{2}
    \|\nabla f(\bm z^{(t)})\|^2.
\]
If $t=r(t)$, apply
Lemma~\ref{lem:stoch-hard-restart-descent}.  These two cases prove the stated
inequality.
\end{proof}

\begin{lemma}[Summed restart perturbation bound]
\label{lem:stoch-hard-summed-restart-perturbation}
Let
\[
    N_T
    \coloneqq
    1+\sum_{t=1}^{T-1}\indi_{\{t=r(t)\}}.
\]
Then
\begin{align*}
    \frac{1}{\eta}\sum_{t=1}^{T-1}
    \E\|\bm y^{(t)}-\bm z^{(t)}\|^2
    \indi_{\{t=r(t)\}}
    &\leq
    \eta
    \left(
        4c\delta\sigma^2
        +\frac{8G^2|\gB|^2}{(q-1)^2}
    \right)(N_T-1)
    \\
    &\leq
    \eta
    \left(
        4c\delta\sigma^2
        +\frac{8G^2|\gB|^2}{(q-1)^2}
    \right)N_T.
\end{align*}
\end{lemma}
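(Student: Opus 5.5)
The plan is to bound the sum term by term, using Lemma~\ref{lem:stoch-hard-virtual-iterate-relationship}, and then count how many terms survive. Fix a synchronization time $t\in\{1,\dots,T-1\}$ with $t=r(t)$. The gap $\bm y^{(t)}-\bm z^{(t)}$ is the value of $\bm y^{(s+1)}-\bm z^{(s+1)}$ at $s=t-1$, i.e., the gap accumulated during the preceding round, which started at $r(t-1)$. Lemma~\ref{lem:stoch-hard-virtual-iterate-relationship} applies to every index $s$, including the round begun at time $0$, since $\bm y^{(1)}$ and $\bm z^{(1)}$ are defined as a restart-type step. It therefore gives
\[
    \E\|\bm y^{(t)}-\bm z^{(t)}\|^2
    \leq
    \eta^2\left(4c\delta\sigma^2+\frac{8G^2|\gB|^2}{(q-1)^2}\right).
\]
If $t-1$ is itself a restart time, so that the previous round is degenerate, the same lemma still applies with an empty second sum.

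Multiplying by $\frac{1}{\eta}$ turns each term into $\eta\bigl(4c\delta\sigma^2+8G^2|\gB|^2/(q-1)^2\bigr)$. It remains to count the indices $t\in\{1,\dots,T-1\}$ with $t=r(t)$. By definition this count is $\sum_{t=1}^{T-1}\indi_{\{t=r(t)\}}=N_T-1$. Summing the per-term bound over these indices gives the first inequality, and $N_T-1\leq N_T$ gives the second.

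The only subtle step is the index bookkeeping. One must check that $\bm y^{(t)}$ and $\bm z^{(t)}$ at a restart time $t$ are the end-of-previous-round quantities covered by the lemma, with $t-1$ playing the role of ``$t$'' there. One must also check that the expectation bound holds unconditionally even though the indicator $\indi_{\{t=r(t)\}}$ is random. For the latter, I would note that the bound in Lemma~\ref{lem:stoch-hard-virtual-iterate-relationship} is obtained from an almost-sure bound on the clipping/Byzantine sum together with a conditional bound (Lemma~\ref{lem:stoch-hard-sync-aggregation}) holding given $\gF_{r}$. Hence $\E[\|\bm y^{(t)}-\bm z^{(t)}\|^2\indi_{\{t=r(t)\}}]\leq \eta^2(\cdots)\,\E[\indi_{\{t=r(t)\}}]$. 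Summing over $t$ then yields $\E[N_T-1]$, which is bounded by $N_T$ when the schedule is treated as fixed, as the theorem implicitly does.
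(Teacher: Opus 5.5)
Your proposal is correct and matches the paper's proof: for each restart time $t$, apply Lemma~\ref{lem:stoch-hard-virtual-iterate-relationship} at index $t-1$, divide by $\eta$, and sum over the $N_T-1$ restart times in $\{1,\dots,T-1\}$. Your extra remark on the random indicator is only needed if the arrival schedule is random. In that case, the step $\E[\|\bm y^{(t)}-\bm z^{(t)}\|^2\indi_{\{t=r(t)\}}]\le\eta^2(\cdots)\,\E[\indi_{\{t=r(t)\}}]$ is not automatic, because the restart event depends on arrivals after the preceding synchronization. It would need that event to leave the conditional aggregation-error bound intact. Under the fixed schedule that you and the paper both adopt, the step holds trivially.
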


\begin{proof}
Fix a restart time $t$.  Applying
Lemma~\ref{lem:stoch-hard-virtual-iterate-relationship} with index $t-1$
gives
\[
    \E\|\bm y^{(t)}-\bm z^{(t)}\|^2
    \leq
    \eta^2
    \left(
        4c\delta\sigma^2
        +\frac{8G^2|\gB|^2}{(q-1)^2}
    \right).
\]
Dividing by $\eta$ and summing over the restart times proves the claim.
\end{proof}

\begin{proof}[Proof of Theorem~\ref{thm:main}]
Apply Lemma~\ref{lem:stoch-hard-initial-descent}, weaken its descent
coefficient from $1/2$ to $1/4$, and then sum
Theorem~\ref{thm:stoch-hard-combined-descent} over $t=1,\ldots,T-1$.
Taking total expectation, telescoping from $f(\bm x^{(0)})$, and using
$f(\bm z^{(T)})\geq f_*$ gives
\begin{align*}
    \frac14
    \left(
        \eta\|\nabla f(\bm x^{(0)})\|^2
        +
        \sum_{t=1}^{T-1}\hat\eta_t
        \E\|\nabla f(\bm x^{(t)})\|^2
    \right)
    &\leq
    \Delta
    +L^2\sum_{t=1}^{T-1}\hat\eta_t
        \E\|\bm x^{(t)}-\bm y^{(t)}\|^2\\
    &\quad+
    L^2\sum_{t=1}^{T-1}\hat\eta_t
        \E\|\bm y^{(t)}-\bm z^{(t)}\|^2\\
    &\quad+
    \frac{L\eta^2\sigma^2}{2|\gG|}
    +
    \frac{L\sigma^2}{2}
    \sum_{\substack{t=1\\t>r(t)}}^{T-1}\hat\eta_t^2\\
    &\quad+
    \frac{L\sigma^2}{2|\gG|}
    \sum_{\substack{t=1\\t=r(t)}}^{T-1}\hat\eta_t^2
    +4c\delta\sigma^2\eta(N_T-1)\\
    &\quad+
    \frac{8G^2|\gB|^2}{(q-1)^2}\eta(N_T-1).
\end{align*}
Lemma~\ref{lem:stoch-hard-real-virtual-difference},
Lemma~\ref{lem:stoch-hard-virtual-iterate-relationship}, and the definition
of $T_{\mathrm{eff}}$ imply
\[
    \sum_{\substack{t=1\\t>r(t)}}^{T-1}\hat\eta_t
    =
    \eta(T_{\mathrm{eff}}-N_T),
\]
and, since the real--virtual gap is zero at restart times,
\begin{align*}
    L^2\sum_{t=1}^{T-1}\hat\eta_t
        \E\|\bm x^{(t)}-\bm y^{(t)}\|^2
    &=
    L^2\sum_{\substack{t=1\\t>r(t)}}^{T-1}\hat\eta_t
        \E\|\bm x^{(t)}-\bm y^{(t)}\|^2\\
    &\leq
    9L^2(n-1)^2G^2\eta^3(T_{\mathrm{eff}}-N_T)\\
    &=
    9L^2(n-1)^2G^2\eta^3T_{\mathrm{eff}}(1-\theta_T),\\
    L^2\sum_{\substack{t=1\\t>r(t)}}^{T-1}\hat\eta_t
        \E\|\bm y^{(t)}-\bm z^{(t)}\|^2
    &\leq
    L^2\eta^3(T_{\mathrm{eff}}-N_T)
    \left(
        4c\delta\sigma^2
        +\frac{8G^2|\gB|^2}{(q-1)^2}
    \right).
\end{align*}
At restart times, the remaining part of the same sum satisfies
\begin{align*}
    &L^2\sum_{t=1}^{T-1}\hat\eta_t
        \E\|\bm y^{(t)}-\bm z^{(t)}\|^2
        \indi_{\{t=r(t)\}}\\
    &\quad\leq
    L^2\eta^3
    \left(
        4c\delta\sigma^2
        +\frac{8G^2|\gB|^2}{(q-1)^2}
    \right)(N_T-1).
\end{align*}
Since $L^2\eta^2\leq1/16$, combining this restart contribution with
Lemma~\ref{lem:stoch-hard-summed-restart-perturbation} gives at most
\[
    \frac{17}{16}\eta
    \left(
        4c\delta\sigma^2
        +\frac{8G^2|\gB|^2}{(q-1)^2}
    \right)(N_T-1)
    \leq
    \frac{17}{16}\eta
    \left(
        4c\delta\sigma^2
        +\frac{8G^2|\gB|^2}{(q-1)^2}
    \right)N_T.
\]
Also, $\hat\eta_t=\eta$ at restart times, while
$\hat\eta_t^2\leq\eta\hat\eta_t$ otherwise.  Including the initial step,
the variance weights satisfy
\begin{align*}
    &\frac{\eta^2}{|\gG|}
    +\sum_{\substack{t=1\\t>r(t)}}^{T-1}\hat\eta_t^2
    +\frac{1}{|\gG|}
        \sum_{\substack{t=1\\t=r(t)}}^{T-1}\hat\eta_t^2\\
    &\quad=
    \frac{\eta^2N_T}{|\gG|}
    +\sum_{\substack{t=1\\t>r(t)}}^{T-1}\hat\eta_t^2\\
    &\quad\leq
    \eta^2\left(\frac{N_T}{|\gG|}+T_{\mathrm{eff}}-N_T\right)\\
    &\quad=
    \eta^2T_{\mathrm{eff}}
    \left(1-\theta_T+\frac{\theta_T}{|\gG|}\right).
\end{align*}
Collecting these estimates yields
\begin{align*}
    \frac14
    \left(
        \eta\|\nabla f(\bm x^{(0)})\|^2
        +
        \sum_{t=1}^{T-1}\hat\eta_t
        \E\|\nabla f(\bm x^{(t)})\|^2
    \right)
    &\leq
    \Delta
    +\frac{L\sigma^2\eta^2T_{\mathrm{eff}}}{2}
        \left(1-\theta_T+\frac{\theta_T}{|\gG|}\right)\\
    &\quad+
    9L^2\eta^3T_{\mathrm{eff}}(1-\theta_T)(n-1)^2G^2\\
    &\quad+
    4L^2\eta^3T_{\mathrm{eff}}(1-\theta_T)c\delta\sigma^2\\
    &\quad+
    8L^2\eta^3T_{\mathrm{eff}}(1-\theta_T)
    \frac{G^2|\gB|^2}{(q-1)^2}\\
    &\quad+
    \frac{17}{4}c\delta\sigma^2\eta N_T\\
    &\quad+
    \frac{17}{2}\eta\frac{G^2|\gB|^2}{(q-1)^2}N_T.
\end{align*}
Dividing by $\eta T_{\mathrm{eff}}/4$ and using the output distribution
gives
\begin{align*}
    \E\|\nabla f(\bm x^{\mathrm{out}})\|^2
    &\leq
    \frac{4\Delta}{\eta T_{\mathrm{eff}}}
    +2L\sigma^2\eta
        \left(1-\theta_T+\frac{\theta_T}{|\gG|}\right)\\
    &\quad+
    4L^2\eta^2(1-\theta_T)
    \left[
        9(n-1)^2G^2
        +4c\delta\sigma^2
        +\frac{8G^2|\gB|^2}{(q-1)^2}
    \right]\\
    &\quad+
    \frac{17}{4}
    \left(
        4c\delta\sigma^2
        +\frac{8G^2|\gB|^2}{(q-1)^2}
    \right)
    \theta_T.
\end{align*}
Using the stepsize tuning lemma (Lemma 17~\citep{koloskova2020icml}), we obtain the stated convergence rate.
\end{proof}

\section{Local Robust Aggregators}
\label{app:local_agg}
 
This appendix collects the material on local $(c,\delta)$-robust aggregators (Definition~\ref{def:local_robust_agg}) that is used in Sections~\ref{sec:theory}: 
a general construction with examples (Appendix~\ref{app:local_agg_construction}) and
the impossibility of decomposition without side information (Appendix~\ref{app:local_agg_impossibility}).
Assumption~\ref{ass:stoch-bounded-local-contributions} for centered clipping is verified in Remark~\ref{rem:stoch-centered-clipping-bounded}.
 
\subsection{A General Construction}
\label{app:local_agg_construction}
 
Every local aggregator we use is of the following form.
 
\begin{lemma}[Centered aggregators are decomposable]
\label{lem:anchored_local}
Let $\psi : \mathbb{R}^d \to \mathbb{R}^d$ be any map and take the side information to be an anchor, $S = \bm a \in \mathbb{R}^d$.
Then
\[
    \mathbb{A}(\bm v; \bm a) := \bm a + \psi(\bm v - \bm a),
    \qquad
    \Agg(\bm x_1, \dots, \bm x_n; \bm a) := \bm a + \frac{1}{n} \sum_{i=1}^n \psi(\bm x_i - \bm a),
\]
satisfy $\Agg(\bm x_1, \dots, \bm x_n; \bm a) = \frac{1}{n} \sum_{i=1}^n  \mathbb{A}(\bm x_i; \bm a)$.
\end{lemma}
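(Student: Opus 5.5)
This is a direct algebraic verification; no probabilistic or robustness properties are needed. Fix the anchor $\bm a\in\mathbb{R}^d$ and arbitrary inputs $\bm x_1,\dots,\bm x_n\in\mathbb{R}^d$. We expand the right-hand side $\frac1n\sum_{i=1}^n\mathbb{A}(\bm x_i;\bm a)$ using the definition of $\mathbb{A}$, and then use linearity of the finite sum.

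Substituting the definition gives
\[
    \frac1n\sum_{i=1}^n \mathbb{A}(\bm x_i;\bm a)
    =\frac1n\sum_{i=1}^n\bigl(\bm a+\psi(\bm x_i-\bm a)\bigr).
\]
We split the sum into the constant part and the $\psi$ part. The constant part is $\frac1n\cdot n\,\bm a=\bm a$, since the anchor is identical for every summand. The remaining part is exactly $\frac1n\sum_{i=1}^n\psi(\bm x_i-\bm a)$. Adding the two recovers $\Agg(\bm x_1,\dots,\bm x_n;\bm a)$ as defined.

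The only point needing care is that the side information $\bm a$ is shared across all inputs and does not depend on the $\bm x_i$. This is what lets the anchor average to itself and makes the aggregator decompose into per-input contributions. It also explains why Proposition~\ref{prop:no_side_info} does not contradict this lemma. The argument places no assumptions on $\psi$: no boundedness, continuity, or linearity is used.

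Finally, we note the consequence: whenever such an $\Agg$ is additionally $(c,\delta)$-robust (as for centered clipping in Example~\ref{ex:centered_clipping}, with $\psi=\clip_\tau$), it is local in the sense of Definition~\ref{def:local_robust_agg}, with map $\mathbb{A}(\cdot;\bm a)$.
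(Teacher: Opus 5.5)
Your proof is correct and is the same argument as the paper's: substitute the definition of $\mathbb{A}$, note that the shared anchor averages to itself, $\frac1n\sum_{i=1}^n \bm a = \bm a$, and what remains is exactly $\frac1n\sum_{i=1}^n\psi(\bm x_i-\bm a)$. Your extra remarks (no assumptions on $\psi$ are needed, and why this is consistent with Proposition~\ref{prop:no_side_info}) are accurate but not required for the identity itself.
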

\begin{proof}
The identity is immediate from $\frac1n \sum_i (\bm a + \psi(\bm x_i - \bm a)) = \bm a + \frac1n \sum_i \psi(\bm x_i - \bm a)$.
\end{proof}
 
The role of $\psi$ is to leave honest inputs (essentially) unchanged, $\psi(\bm u) \approx \bm u$ when $\|\bm u\|$ is of the order of the honest inputs' magnitude, while bounding the effect of arbitrary inputs.
Two natural choices are
\begin{align*}
    \text{Centered Clipping~\citep{karimireddy2021icml}:}\quad & \psi_{\mathrm{cc}}(\bm u) := \bm u \min\Big\{1, \frac{\tau}{\|\bm u\|}\Big\}, \\
    \text{Centered Truncated Mean (CTM, new)}\quad & \psi_{\mathrm{ctm}}(\bm u) := \bm u \, \mathbbm{1}_{\{\|\bm u\| \le R\}}.
\end{align*}
Centered clipping with $S = (\bm a, \tau)$ is Example~\ref{ex:centered_clipping}. Centered clipping is $(c,\delta)$-robust when the anchor is close to the honest mean and the radius is chosen accordingly \citep[Theorem~IV]{karimireddy2021icml}.
Here, we show that CTM is also a local $(c,\delta)$-robust aggregator.
\begin{proposition}[Centered Truncated Mean is a $(c,\delta)$-robust aggregator]
\label{prop:ahtm}
Let $\mathrm{CTM}_{a,R}$ be the aggregator in the preceding theorem with $\psi_{\mathrm{ctm}}$.
Let $n\ge 2$ and $\frac{1}{n}\leq \delta < \frac12.$
Let $\gG\subseteq[n]$ denote the set of honest inputs and $\gB=[n]\setminus\gG$ the set of Byzantine inputs, with $\beta := \frac{|\gB|}{n}\leq \delta.$
Fix the side information $(\bm a, R)$ before observing the current inputs. Suppose that, conditional on this side information, the honest inputs are independent and satisfy
$$\E \| \bm x_i - \bm a \|^2 \leq V^2, \quad \forall i \in \gG,$$
for some $V \geq 0.$
Choosing $R^2 = \frac{V^2}{\delta}$ yields that
$$\E \| \mathrm{CTM}_{a,R}(\bm x_1, \bm x_2 , \ldots, \bm x_n) - \bar x\|^2 \leq 10 \delta V^2.$$
Consequently, if the admissible side information pair satisfies $V^2 \leq K \rho^2$ for some constant $K$, then Centered Truncated Mean satisfies the definition of $(c,\delta)$-robust aggregator with $c = 10K.$
\end{proposition}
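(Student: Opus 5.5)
The plan is to reduce everything to the centered residuals $\bm u_i := \bm x_i - \bm a$. By Lemma~\ref{lem:anchored_local}, $\mathrm{CTM}_{a,R}(\bm x_1,\dots,\bm x_n) = \bm a + \frac1n\sum_{i=1}^n \psi_{\mathrm{ctm}}(\bm u_i)$. Since $\bar x - \bm a = \frac{1}{|\gG|}\sum_{i\in\gG}\bm u_i$, the error splits into three pieces:
\[
\mathrm{CTM}_{a,R} - \bar x
= \underbrace{\frac1n\sum_{i\in\gB}\psi_{\mathrm{ctm}}(\bm u_i)}_{E_1}
\;\underbrace{-\;\frac1n\sum_{i\in\gG}\bm u_i\,\mathbbm{1}_{\{\|\bm u_i\|>R\}}}_{E_2}
\;\underbrace{-\;\frac{|\gB|}{n|\gG|}\sum_{i\in\gG}\bm u_i}_{E_3}.
\]
Here $E_2$ uses $\psi_{\mathrm{ctm}}(\bm u)-\bm u = -\bm u\mathbbm 1_{\{\|\bm u\|>R\}}$, and $E_3$ uses $\frac1n - \frac1{|\gG|} = -\frac{|\gB|}{n|\gG|}$. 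I will bound each $\E\|E_k\|^2$ by a multiple of $\delta V^2$ and then combine them with a weighted Young inequality.

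\textbf{Byzantine term.} Every truncated input has norm at most $R$, so $\|E_1\|\le \beta R$ almost surely, whatever the adversary sends. Hence
\[
\|E_1\|^2 \le \beta^2 \frac{V^2}{\delta} \le \delta V^2 ,
\]
using $\beta\le\delta$.

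\textbf{Truncation bias.} This is the step I expect to be the main obstacle, because the summands $\bm w_i := \bm u_i\mathbbm 1_{\{\|\bm u_i\|>R\}}$ are independent but not centered. I will expand
\[
\E\Bigl\|\sum_{i\in\gG}\bm w_i\Bigr\|^2 \le \sum_i \E\|\bm w_i\|^2 + \sum_{i\ne j}\E\|\bm w_i\|\,\E\|\bm w_j\| .
\]
The diagonal terms are each at most $V^2$. For the cross terms, Markov's trick gives $\E\|\bm w_i\| \le \E\|\bm u_i\|^2/R \le V^2/R = \sqrt{\delta}\,V$. Together these give
\[
\E\|E_2\|^2 \le \frac{|\gG|V^2 + |\gG|^2\delta V^2}{n^2} \le \frac{V^2}{n} + \delta V^2 \le 2\delta V^2 ,
\]
where the last step uses the hypothesis $\delta\ge 1/n$. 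This is exactly where the choice $R^2 = V^2/\delta$ balances the two sources of error.

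\textbf{Reweighting term and combination.} By Jensen, $\E\|\frac1{|\gG|}\sum_{i\in\gG}\bm u_i\|^2\le V^2$, so
\[
\E\|E_3\|^2 \le \Bigl(\frac{\beta}{1-\beta}\Bigr)^2 V^2 \le 4\delta^2 V^2 \le 2\delta V^2 ,
\]
using $\beta\le\delta<\tfrac12$. The crude bound $(a+b+c)^2\le 3(a^2+b^2+c^2)$ gives $15\delta V^2$. To reach the stated constant $10$, I will instead use $\|E_1+E_2+E_3\|^2 \le (1+\epsilon)\|E_1\|^2 + (1+\epsilon^{-1})\|E_2+E_3\|^2$ with a tuned $\epsilon$. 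I will also sharpen the $E_3$ bound to $\beta^2 V^2/(1-\beta)^2$, with $\beta^2\le\beta\delta$, and handle the $E_1$ bound similarly. If the constant still does not close, I will absorb $E_3$ into $E_2$ by writing $E_2+E_3$ as a single centered-plus-bias sum before squaring.

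\textbf{Final claim.} Substituting $V^2\le K\rho^2$ gives $\E\|\mathrm{CTM}_{a,R}-\bar x\|^2 \le 10K\delta\rho^2$. Locality holds by Lemma~\ref{lem:anchored_local}, so CTM is a local $(c,\delta)$-robust aggregator with $c=10K$.
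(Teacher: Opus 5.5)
Your decomposition into $E_1+E_2+E_3$ is exactly the paper's: the Byzantine truncated sum, the honest tail mass, and $-\beta(\bar{\bm x}-\bm a)$. Your bounds on $E_1$ and $E_2$ also match the paper's, with Markov plus independence for the cross terms giving $V^2/n+V^4/R^2$.

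The gap is in the last step. As written you only prove $15\delta V^2$, and the route you sketch to reach $10$ cannot work with the bounds you state, because your $E_3$ estimate is inflated by a slip. Since
\[
\frac{|\gB|}{n|\gG|}\sum_{i\in\gG}\bm u_i=\beta\cdot\frac{1}{|\gG|}\sum_{i\in\gG}\bm u_i ,
\]
the coefficient is $\beta$, not $\beta/(1-\beta)=|\gB|/|\gG|$. Your proposed ``sharpening'' to $\beta^2V^2/(1-\beta)^2$ is the same bound, not a sharper one.

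With $\E\|E_1\|^2\le\delta V^2$, $\E\|E_2\|^2\le2\delta V^2$ and $\E\|E_3\|^2\le4\delta^2V^2$, optimally weighted Young inequalities can do no better than the $L^2$-Minkowski bound $\delta V^2\bigl(1+\sqrt2+2\sqrt\delta\bigr)^2$. That bound exceeds $10\delta V^2$ once $\delta$ is above roughly $0.14$, and tends to $(1+2\sqrt2)^2\approx14.7$ as $\delta\to\tfrac12$. So tuning $\epsilon$ alone does not close the argument.

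The fix is short. By Jensen, $\E\|E_3\|^2\le\beta^2V^2\le\delta^2V^2$. Then group the two $\gO(\beta)$ terms together, as the paper does:
\[
\E\|E_1+E_2+E_3\|^2\le2\E\|E_2\|^2+4\E\|E_1\|^2+4\E\|E_3\|^2\le\Bigl(\tfrac{2}{n}+2\delta\Bigr)V^2+4\delta V^2+4\delta^2V^2\le10\delta V^2 .
\]
The last inequality uses $1/n\le\delta$ and $4\delta^2\le2\delta$. Minkowski with the corrected bounds also works and gives $(1+\sqrt2+1/\sqrt2)^2\delta V^2\approx9.74\,\delta V^2$.

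With this correction your argument coincides with the paper's proof, and the final step $V^2\le K\rho^2\Rightarrow c=10K$ goes through as you wrote it.
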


\begin{proof}
Write
$$ \psi_R(\bm u)=\bm u\mathbbm{1}_{\{\|\bm u\|\leq R\}}, \quad Z_i=(\bm x_i-\bm a)\mathbbm{1}_{\{\|\bm x_i- \bm a\|>R\}}, \quad i\in \gG. $$
By assumption, $\E \|Z_i\|^2 \leq V^2$.
Also, by Markov's inequality,
\begin{align*}
    \|\E Z_i\| &\leq \E \left[ \|\bm x_i- \bm a\|\mathbbm{1}_{\{\|\bm x_i- \bm a\|>R\}} \right]\\ 
    &\leq \frac{\mathbb E\|x_i-a\|^2}{R} \le \frac{V^2}{R}.
\end{align*}

Next, the aggregation error admits the decomposition
$$ \mathrm{CTM}_{a,R} (\bm x_1, \ldots, \bm x_n) -\bar x = -\frac1n\sum_{i\in\gG}Z_i +  \frac1n\sum_{i\in\gB}\psi_R(\bm x_i-\bm a) -\beta(\bar{\bm x} - \bm a) . $$
Because \(\|\psi_R(u)\|\leq R\),
$$ \left\| \frac1n\sum_{i\in\gB}\psi_R(\bm x_i-\bm a) \right\|^2 \leq \beta R.$$
Furthermore, Jensen's inequality gives
$$ \mathbb E\|\bar{\bm x}- \bm a\|^2 \le \frac1{|\mathcal G|}\sum_{i\in\gG}\E\|\bm x_i- \bm a\|^2 \leq V^2. $$
Therefore,
$$ \mathbb E\left\| \frac1n\sum_{i\in\gB}\psi_R(\bm x_i-\bm a) -\beta(\bar{\bm x} - \bm a) \right\|^2 \le 2\beta^2(R^2+V^2). $$
Combining above result yields
\begin{align*}    
\mathbb E\| \mathrm{CTM}_{a,R} (\bm x_1, \ldots, \bm x_n)-\bar{\bm x} \|^2 
&\leq \frac{2V^2}{n} +\frac{2V^4}{R^2} +4\delta^2(R^2+V^2) \\
&\leq \left(\frac2n+6\delta+4\delta^2\right)V^2 \\
&\leq 10\delta V^2
\end{align*}
by $R^2=V^2/\delta$ and $\frac 1n\leq \delta <1/2 $.
The case $V=0$ is immediate with $R=0$.
\end{proof}
 
\subsection{Impossibility Without Side Information}
\label{app:local_agg_impossibility}
 
Definition~\ref{def:local_robust_agg} allows the local map $\mathbb{A}$ to depend on a side information $S$.
The following proposition shows that this is not a technical convenience: no per-input map of the input alone can be robust.
 
\begin{proposition}[Failure without side information]
\label{prop:no_side_info}
Let $n \ge 3$, $d \ge 1$, and $1/n \le \delta < 1/2$.
For any finite $c \ge 0$, there is no map
$f : \R^d \to \R^d$ such that the aggregator
\[
    \Agg(\bm x_1,\ldots,\bm x_n)
    := \frac{1}{n}\sum_{i=1}^n f(\bm x_i)
\]
is $(c,\delta)$-robust in the sense of
Definition~\ref{def:robust_agg}.
\end{proposition}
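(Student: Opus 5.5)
The plan is to exploit the degenerate case $\rho=0$ of Definition~\ref{def:robust_agg}. There the robustness requirement becomes an exact identity, which forces $f$ to be the identity map. An average of identities is the plain mean, and the plain mean is trivially broken by a single Byzantine input.

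\emph{Step 1 (forcing $f=\mathrm{id}$).} Fix an arbitrary $\bm x\in\R^d$ and take the configuration with no Byzantine inputs, $\gG=[n]$, where every input is deterministically $\bm g_i=\bm x$. This is admissible, since $|\gG|=n>(1-\delta)n$ and $\E\|\bm g_i-\bm g_j\|^2=0$, so we may take $\rho=0$. Robustness then gives $\E\|\Agg(\bm x,\ldots,\bm x)-\bm x\|^2\le c\delta\cdot 0=0$. Hence $\frac1n\sum_{i}f(\bm x)=f(\bm x)=\bm x$. As $\bm x$ is arbitrary, $f$ is the identity and $\Agg$ is the arithmetic mean.

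\emph{Step 2 (breaking the mean).} Now take exactly one Byzantine client, $|\gB|=1$, which the assumption $\delta\ge 1/n$ is meant to permit. Let the $n-1$ honest inputs all equal some $\bm x$, again with $\rho=0$, and let the Byzantine input be some $\bm y\neq\bm x$. By Step 1,
\[
\Agg=\bm x+\tfrac1n(\bm y-\bm x),
\]
so the error against $\overline{\bm g}=\bm x$ is $\|\bm y-\bm x\|^2/n^2>0$. This contradicts the required bound $c\delta\rho^2=0$ for every finite $c$. The conditions $n\ge3$ and $\delta<1/2$ only guarantee that such a configuration, with an honest majority, lies inside the regime of the definition.

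\emph{Main obstacle.} The delicate point is the boundary convention in Definition~\ref{def:robust_agg}. It is stated with a strict inequality $|\gG|>(1-\delta)n$, so at $\delta=1/n$ exactly a single Byzantine client is not literally covered. I would read the definition, as intended in the literature, as allowing $|\gB|\le\delta n$, and state this explicitly.

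If one insists on the strict form, I would instead run Step 2 with $\delta>1/n$, or avoid relying on the precise degenerate case altogether with the following variant. Choose honest inputs that are i.i.d.\ with a fixed positive pairwise second moment $\rho^2$, and send the Byzantine input $\bm y$ to infinity. The error $\E\|\tfrac1n(\bm y-\overline{\bm g})+\cdots\|^2$ then grows unboundedly while $c\delta\rho^2$ stays fixed. This gives the contradiction for any finite $c$, and it also covers any reading in which $\rho$ must be positive.
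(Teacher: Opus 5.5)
Your proof is correct and is essentially the paper's argument. The one difference is that the paper never leaves the single split $\gG=[n-1]$, $\gB=\{n\}$: it gets $f=\mathrm{id}$ by setting $\bm y=\bm a$ in that same configuration rather than from an all-honest one, and it too silently reads admissibility as $|\gG|=n-1\ge(1-\delta)n$ at $\delta=1/n$.

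Your boundary remark is well taken, since under the strict reading the claim is actually false at $\delta=1/n$ ($\gB=\emptyset$ is forced and the plain mean is exact). However, your $\bm y\to\infty$ fallback presupposes $f=\mathrm{id}$, which you only derived from the $\rho=0$ case, so it does not independently handle a positive-$\rho$ reading. For that reading, simply note that identical deterministic inputs satisfy the dispersion bound for every $\rho>0$, and let $\rho\downarrow 0$.
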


\begin{proof}
Suppose such a map $f$ exists.
Let $\gG=[n-1]$ and $\gB=\{n\}$.
This split is admissible because
$|\gG|=n-1\ge(1-\delta)n$.

Fix arbitrary $\bm a,\bm y\in\R^d$, and take the deterministic
inputs
\[
    \bm x_1=\cdots=\bm x_{n-1}=\bm a,
    \qquad \bm x_n=\bm y.
\]
The honest inputs are independent, their mean is $\bm a$,
and their pairwise dispersion is zero.
Definition~\ref{def:robust_agg}, applied with $\rho=0$,
therefore gives
\[
    \frac{n-1}{n}f(\bm a)+\frac{1}{n}f(\bm y)
    = \bm a
    \qquad\text{for all }\bm a,\bm y\in\R^d.
\]
Taking $\bm y=\bm a$ shows that $f(\bm a)=\bm a$
for every $\bm a\in\R^d$.
Substituting this identity back into the display yields
\[
    \frac{n-1}{n}\bm a+\frac{1}{n}\bm y=\bm a
    \qquad\text{for all }\bm a,\bm y\in\R^d,
\]
which is impossible when $\bm y\ne\bm a$.
\end{proof}

\raggedbottom
\section{Additional Experimental Details}~\label{app:experiments}

\begin{figure}[t]
    \centering
    \includegraphics[width=1\linewidth]{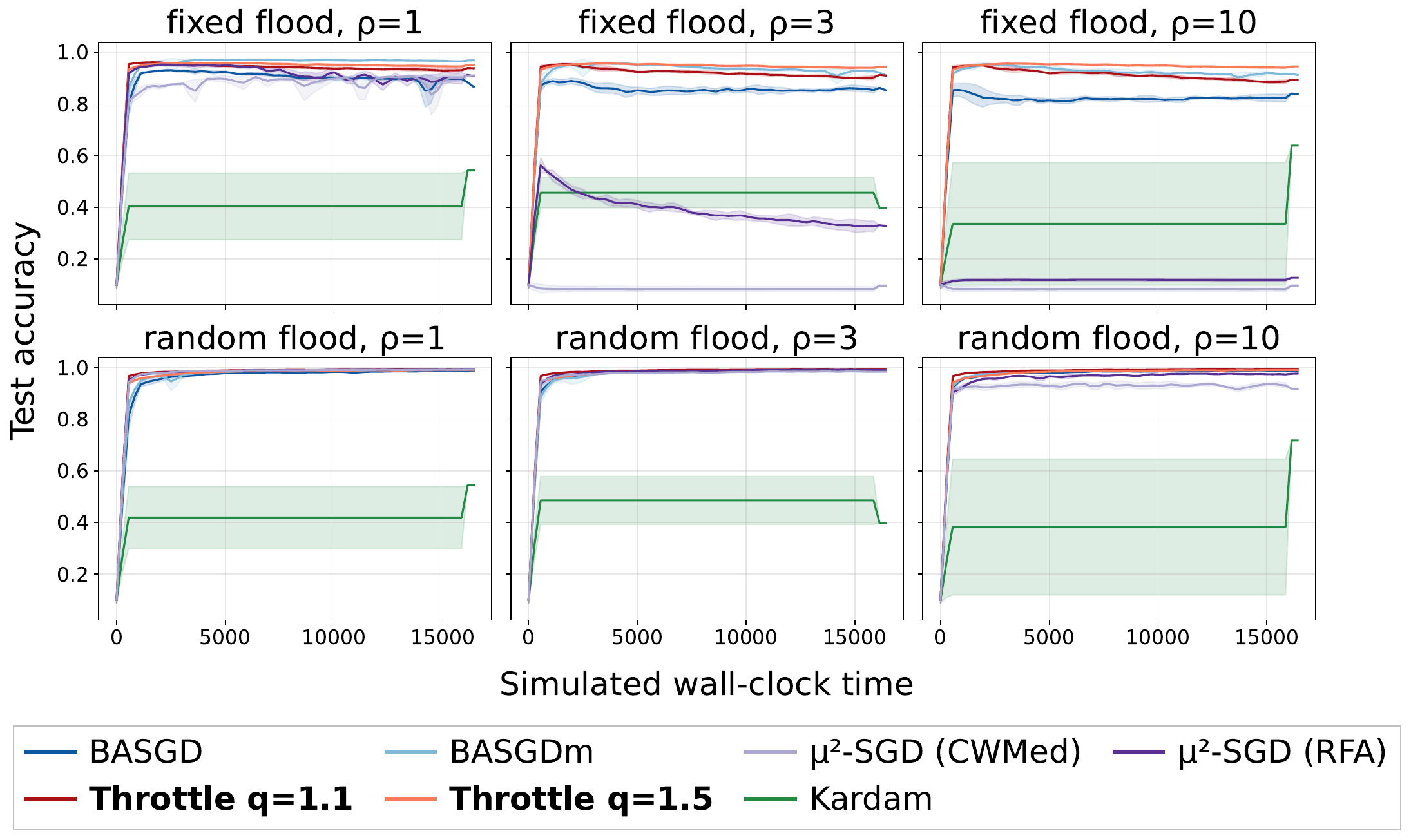}
    \caption{Additional MNIST flooding-attack results complementing
Figure~\ref{fig:byzantine-exp}, with $n=25$ clients, including
$|\gB|=5$ Byzantine clients. The top and bottom rows show
fixed-vector and random-vector flooding, respectively. Columns
correspond to $\rho\in\{1,3,10\}$, extending the $\rho=30$ results
in Figure~\ref{fig:byzantine-exp}. Here, $\rho$ multiplies the
Byzantine clients' update rates, yielding an expected Byzantine
update fraction of $\rho/(2+\rho)$. All other experimental
settings are the same as in Figure~\ref{fig:byzantine-exp}.
\textsc{Throttle} maintains high test accuracy across the
three flooding rates.}
    \label{fig:appendix-exp-figure}
\end{figure}

\subsection{Common settings and implementation}
\label{app:common}

Two codebases were used. 
The least-squares experiment of Figure~\ref{fig:no-byzantine-async-throttling} runs
\emph{real} asynchronous workers. 
Every client is a Ray actor, the parameter server applies gradients in the order in which they arrive, and delays are produced by CPU contention rather than by a model. 
The MNIST experiment of Figure~\ref{fig:byzantine-exp} uses a single-process, event-driven simulator with simulated
(virtual) time based on the \citep{dahan_weight_2024} codebase.
The arrival sequence of every client is generated once per (schedule, seed) and is identical for all methods, communication latency is zero, and a client's next computation starts when its previous message arrives. 
Kardam and $\mu^2$-SGD were ported from the authors' public github repositories (\texttt{gdamaskinos/kardam} and
\texttt{dahan198/asynchronous-fault-tolerant-ml}).
BASGD, BASGDm were implemented from the papers, since no public code exists. 
Each seed spawns independent random streams for the data order, the arrival schedule, the initialization and the attack. Tables~\ref{tab:quadratic-config} and~\ref{tab:mnist-config} list the complete configurations.

\begin{table}[t]
\centering\small
\caption{Least-squares configuration used in Figure~\ref{fig:no-byzantine-async-throttling}.}
\label{tab:quadratic-config}
\begin{tabular}{ll}
\toprule
Objective & $F(x)=\frac{1}{2n}\|Ax-b\|^2$, $n=10{,}000$, $d=400$, condition number $\approx1.9\times10^{3}$ \\
Stochastic gradient & minibatch of 256 samples without replacement. all clients have shared i.i.d.\ data \\
Clients & $M=40$ Ray actors on 20 CPU cores.  \\
Straggler (right panel) & worker 39 sleeps $100\times$ its compute time after each gradient ($1.7\%$ of arrivals instead of $2.5\%$) \\
Budget & $32{,}000$ computed gradients for every method. \\
Methods & Minibatch SGD and asynchronous SGD are based on \citep{mishchenko_asynchronous_2023}, \\
        & \textsc{Throttle} $q\in\{1.1,1.5\}$, $\mu^2$-SGD ($\beta=0.25,\gamma=0.1$), \\
        & BASGD / BASGDm ($B=10$, mean, $\mu=0.9$) \\
\textsc{Throttle} & no clipping ($\lambda=\tau=\infty$, no Byzantine clients) \\
                 & hard restart when all $M$ clients have arrived; discarded gradients included in the budget \\
Step-size tuning & grid search over $\eta=2^{k/3}, \, k \in [-30, 18] \cap \mathbb{Z}$ (Table~\ref{tab:quadratic-lr}) \\
Seeds & $\{42,43,44\}$ \\
\bottomrule
\end{tabular}
\centering
\small\caption{Learning rates used in Fig.~\ref{fig:no-byzantine-async-throttling}. Grid $\eta=2^{k/3}$ (displayed learning rates are rounded); criterion: mean of $F(x)-F^*$ over the checkpoints in the last 10\% of the 32{,}000-computed gradient budget, averaged over seeds $\{42,43,44\}$; a grid point is valid only if all three repeats converge. `tail mean' is that criterion at the selected $\eta$.}
\label{tab:quadratic-lr}
\begin{tabular}{llc}
\toprule
Method & $\eta$ used & tail mean $\pm$ std \\
\midrule
Minibatch SGD ($q=\infty$) & $0.1984$, $k=-7$ & $3.46\times10^{-3} \pm 3.4\times10^{-5}$ \\
Asynchronous SGD ($q=1$) & $0.5$, $k=-3$ & $1.37\times10^{-4} \pm 2.0\times10^{-5}$ \\
Throttle ($q=1.1$) & $1.587$, $k=2$ & $7.14\times10^{-6} \pm 7.4\times10^{-6}$ \\
Throttle ($q=1.5$) & $3.175$, $k=5$ & $2.55\times10^{-6} \pm 1.1\times10^{-6}$ \\
Asynchronous $\mu^2$-SGD & $0.125$, $k=-9$ & $1.82\times10^{-2} \pm 1.4\times10^{-5}$ \\
BASGD ($B=10$) & $5.04$, $k=7$ & $8.89\times10^{-3} \pm 2.4\times10^{-3}$ \\
BASGDm ($B=10$, $\mu=0.9$) & $5.04$, $k=7$ & $6.99\times10^{-3} \pm 3.0\times10^{-4}$ \\
\bottomrule
\end{tabular}
\end{table}

\subsection{Least-squares experiment (Figure~\ref{fig:no-byzantine-async-throttling})}
\label{app:quadratic}

\begin{figure}[!htbp]
    \centering
    \includegraphics[width=0.65\linewidth]{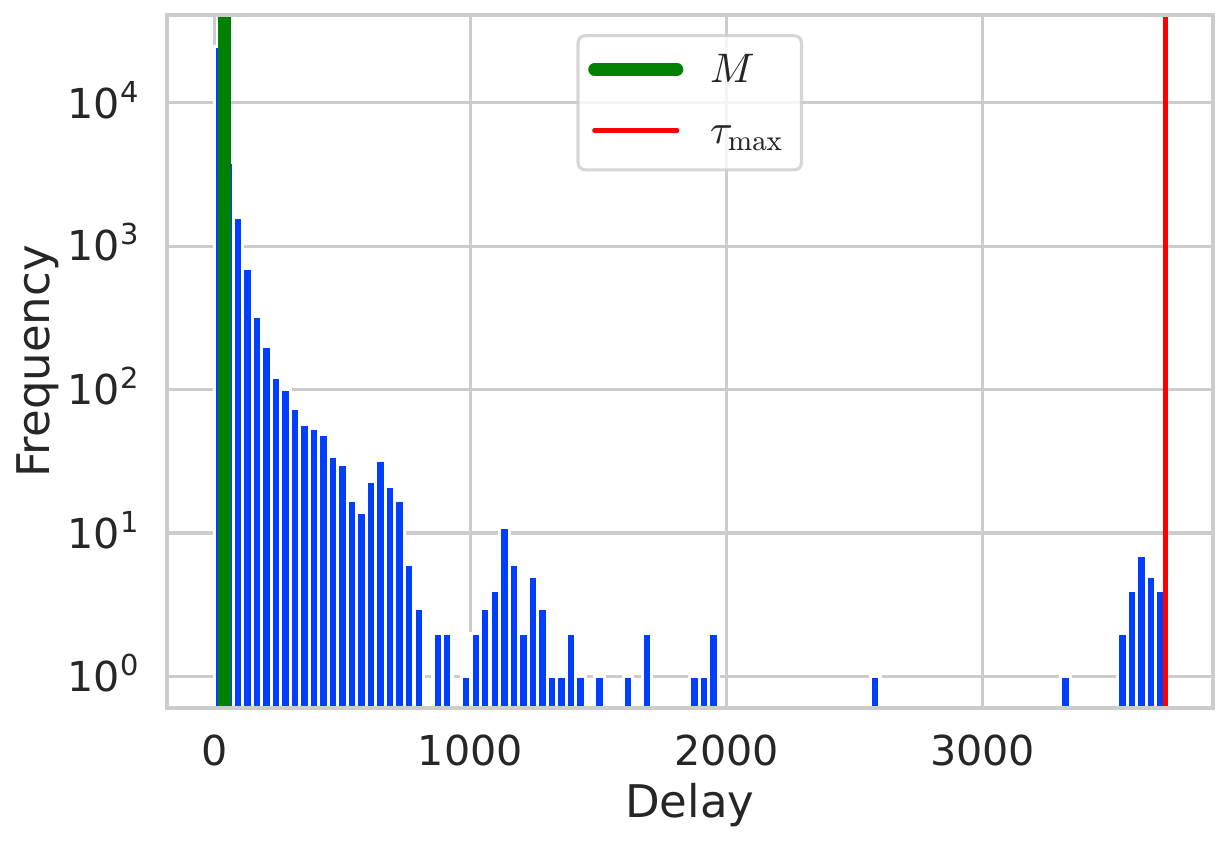}
    \caption{Empirical distribution of gradient delays in the least-squares experiment with $M=40$ asynchronous workers on 20 CPU cores based on \citep{mishchenko_asynchronous_2023}. Delay is the number of server updates between a worker's dispatch and the arrival of its gradient. 
    frequencies are shown on a logarithmic scale.
    The green and red vertical lines average delay $M$ and the maximum observed delay $\tau_{\max}$, respectively, illustrating that delays can greatly exceed the number of workers.}
    \label{fig:delay_plot}
\end{figure}

\paragraph{Problem.}
We minimize $F(x)=\frac{1}{2n}\|Ax-b\|^2$ with $n=10{,}000$ samples and $d=400$. The entries of $A$ are
i.i.d.\ uniform on $[0,1)$ scaled by $1/\sqrt{d}$. 
A stochastic gradient is the gradient of $F$ on a minibatch of 256 samples drawn without replacement; all workers hold the same data (i.i.d., no
partitioning). 
Reported losses are $F(x)-F^*$ evaluated on all $n$ samples every 40 accepted gradients.

\paragraph{Asynchronous execution.}
$M=40$ workers run as Ray actors on a 20-core CPU machine (two workers per core), so arrival times are genuine wall-clock timings. 
The server applies each gradient on arrival and immediately re-dispatches the same worker with the updated model, i.e.\ there are always 40 computing gradients. 
Deley distribution (the number of server updates between a worker's dispatch and its arrival) is naturally heavy-tailed.
See Fig.~\ref{fig:delay_plot} for our experiment environments' case, similarly computed as in \citet{mishchenko_asynchronous_2023}.

In the right panel of Figure~\ref{fig:no-byzantine-async-throttling} one worker sleeps for 100 times its own compute time after every gradient. Figure~\ref{fig:no-byzantine-async-throttling} shows the seed 42 run of every method.

\subsection{MNIST experiment (Figure~\ref{fig:byzantine-exp})}
\label{app:mnist}

\begin{table}[H]
\centering\small
\caption{MNIST configuration used in Figure~\ref{fig:byzantine-exp}.}
\label{tab:mnist-config}
\begin{tabular}{ll}
\toprule
Dataset & MNIST (60,000 train / 10,000 test), i.i.d. clients\\
& each drawing minibatches from its own reshuffling of the full training set \\
Model & Conv-Conv-FC-BatchNorm-FC (66{,}230 parameters), cross-entropy loss \\
Clients & $n=20$ (15 honest, 5 Byzantine. $\delta=0.25$) \\
Arrivals (standard attacks) & periodic sequence as in the \citet{dahan_weight_2024}'s code \\
& every third arrival is Byzantine ($1/3$ of updates). \\
          & The arriving client is drawn within its group with probability $\propto$ its index \\
Arrivals (flooding) & Independent Poisson processes: honest client $i$ has rate $r_i\propto i$ (honest rates sum to 1); \\
          & the Byzantine group has total rate $\rho/2$, split $\propto$ index, $\rho\in\{1,3,10,30\}$, \\
          & i.e.\ an expected Byzantine share of updates $\rho/(2+\rho)=0.33,\,0.60,\,0.83,\,0.94$ \\
Batch size & 16 per client \\
Training horizon & $16{,}000$ honest updates ($800$ sync.\ iterations, $4.3$ epochs); test every 500  \\
Attacks & RD ($\sigma=0.2$), NG ($-10\times$), Empire ($\epsilon=6$), ALIE ($z_{\max}=0.253$), \\
        & fixed-vector flood, random-vector flood (norm 10) \\
Methods & Kardam ($f=5$); BASGD / BASGDm ($B=11$, trimmed mean, $\Delta=20$, $\mu=0.9$); \\
        & $\mu^2$-SGD (weighted CWMed / RFA; $\beta=0.25$, $\gamma=0.1$); \\
        & \textsc{Throttle} ($q\in\{1.1,1.5\}$, $\tau=100$, $\lambda=1$) \\
$\eta$ tuning & grid $\{1,0.1,0.01\}$ without attack (Table~\ref{tab:mnist-lr-sweep}), clipping radius $\lambda=1$ \\
Seeds & $\{1,2,3\}$ \\
\bottomrule
\end{tabular}
\end{table}

\begin{table}[H]
\centering\small
\caption{Learning-rate selection on MNIST without attack: final test accuracy (\%, mean $\pm$ std over seeds $\{1,2,3\}$) for $\eta\in\{0.01,0.1,1\}$. Throttle results are shown for $\lambda=1$. Bold entries indicate the selected values for the baseline methods; ties are broken towards the smaller $\eta$.}
\label{tab:mnist-lr-sweep}
\begin{tabular}{lccc}
\toprule
Method & $\eta=0.01$ & $\eta=0.1$ & $\eta=1$ \\
\midrule
Kardam & 41.5 $\pm$ 14.7 & \textbf{70.3 $\pm$ 10.5} & 68.5 $\pm$ 26.4 \\
BASGD & 94.4 $\pm$ 0.4 & 98.3 $\pm$ 0.2 & \textbf{98.7 $\pm$ 0.1} \\
BASGDm & 94.7 $\pm$ 0.5 & 98.6 $\pm$ 0.1 & \textbf{99.1 $\pm$ 0.1} \\
$\mu^2$-SGD (CWMed) & 99.1 $\pm$ 0.1 & \textbf{99.2 $\pm$ 0.2} & 10.6 $\pm$ 0.9 \\
$\mu^2$-SGD (RFA) & 99.1 $\pm$ 0.1 & \textbf{99.2 $\pm$ 0.1} & 11.0 $\pm$ 0.6 \\
Throttle $q=1.1$, $\lambda=1$ & 98.8 $\pm$ 0.1 & 99.2 $\pm$ 0.1 & 99.2 $\pm$ 0.1 \\
Throttle $q=1.5$, $\lambda=1$ & 97.9 $\pm$ 0.1 & 99.1 $\pm$ 0.1 & 99.0 $\pm$ 0.1 \\
\bottomrule
\end{tabular}
\end{table}

\subsection{Computing environment}
\label{app:compute}
Both experiments ran on one workstation (Table~\ref{tab:compute}). The least-squares runs use NumPy on the CPU only (40 Ray actors on 20 cores).
The MNIST study (585 runs) took 5.3 hours of wall-clock time, about 74 process-hours.
The least-squares step-size sweeps took about 10 hours and the final 42 runs about 40 minutes, executed strictly one Ray instance at a time.
\begin{table}[H]
    \centering\small
    \caption{Runtime hardware and software.}
    \label{tab:compute}
    \begin{tabular}{ll}
    \toprule
    CPU & 20 logical cores (aarch64) \\
    GPU & NVIDIA GB10, CUDA 13.0 (MNIST, $\mu^2$-SGD only) \\
    Python / NumPy & 3.12.3 / 2.4.4 \\
    PyTorch & 2.12.0+cu130 (MNIST) \\
    Ray & 2.58.0 (least squares) \\
    \bottomrule
    \end{tabular}
\end{table}

\paragraph{Data and model.}
We use the standard MNIST dataset (60{,}000 training and 10{,}000 test images). 
The model is the convolutional network of the \citet{dahan_weight_2024}'s codebase: \texttt{Conv}$(1{\to}20,5{\times}5)$--ReLU--MaxPool$(2)$--\texttt{Conv}$(20{\to}50,5{\times}5)$--ReLU--MaxPool$(2)$--\texttt{FC}$(800{\to}50)$--BatchNorm--ReLU--\texttt{FC}$(50{\to}10)$ with 66{,}230 parameters and the cross-entropy loss.
Every client draws minibatches of 16 from its own
independent reshuffling of the full training set (i.i.d.\ clients). 
Byzantine clients that run the honest protocol (RD, NG) use separate data, so they never consume the honest data budget. 
Test accuracy on the full test set is evaluated every 500 honest arrivals.

\paragraph{Simulator and arrival schedules.}
\label{app:mnist-schedules}
There are $n=20$ clients, of which the last $5$ are Byzantine ($\delta=0.25$). A run ends after
$16{,}000$ honest updates, i.e.\ $800$ synchronous iterations of 20 clients or $4.3$ epochs. 
Two arrival schedules are used.
\emph{Periodic} (standard-attack panels in Figure~\ref{fig:byzantine-exp}) reproduces the \citet{dahan_weight_2024}'s code: arrivals are numbered
$1,2,\dots$, every third arrival is Byzantine (the code rounds its parameter $\lambda_{\mathrm{byz}}=0.4$ to
the period $\lceil1/0.4\rceil=3$, so the Byzantine share of arrivals is $1/3$, not $0.4$).

Within each group (honest and Byzantine), the sending client is drawn with probability proportional to its index plus one.
Therefore, honest client rates differ by a factor of 15 from Byzantine rates. 
For flooding attacks, every client is an independent Poisson process; honest rates are the same proportional weights normalized to sum to one.
The Byzantine group's total rate is $\rho\cdot\frac{p_0}{1-p_0}=\rho/2$ with $p_0=1/3$.
The expected Byzantine share of all arrivals is $\rho p_0/(1-p_0+\rho p_0)=0.33,0.60,0.83,0.94$ for $\rho=1,3,10,30$ (measured: $0.337,0.604,0.835,0.938$)
The total number of arrivals grows from
$24{,}000$ to $260{,}000$ while the honest budget stays fixed.

\paragraph{Attacks.}
Let $g$ be the update an honest client would send and $L$ the set of honest clients whose latest update has been delivered. 
\begin{itemize}
    \item RD (random disturbance, \citealp{yang2023buffered}) sends $g+\xi$, $\xi\sim\mathcal{N}(0,\|0.2\,g\|^2 I)$.
    \item NG (negative gradient, \citealp{yang2023buffered}) sends $-10\,g$.
    \item Empire~\citep{xie_fall_2019} sends $-6\,\bar g$, where $\bar g$ is the mean of the honest updates' most recently delivered to the server.
    \item ALIE~\citep{baruch2019nips}: following \citet{yang2023buffered}, a Byzantine client sends
$\tilde g$ with $\tilde g_j=\mathrm{mean}_j - z^{\max}\,\mathrm{std}_j$, where $\mathrm{mean}_j$ and
$\mathrm{std}_j$ are the mean and standard deviation of the $j$-th coordinate over the honest updates' most recently delivered to the server, and $z^{\max}=\Phi^{-1}\big(\tfrac{m-\lfloor m/2+1\rfloor}{m-r}\big)$ with $m$ clients, $r$ of them Byzantine, and $\Phi^{-1}$ the standard normal quantile function.
\end{itemize}
Empire and ALIE used omniscient property of Byzantine clients in the sense that they read the honest updates that have actually been
delivered to the server. 
All Byzantine updates are generated at send time from the latest available information. 
The two \emph{flooding} attacks require no gradient computation: the fixed-vector flood sends the same random direction of norm $10$ in every message, the random-vector flood a fresh standard normal direction rescaled to norm $10$, both at $\rho$ times the honest group's rate as described above.

\end{document}